\newtheorem{claim}{Claim}
\newtheorem{theorem}{Theorem}
\newtheorem{lemma}[theorem]{Lemma}
\newtheorem{remark}[theorem]{Remark}
\newenvironment{customlem}[1]
  {\innercustomlem}
  {\endinnercustomlem}
\newenvironment{customrmk}[1]
  {\innercustomrmk}
  {\endinnercustomrmk}
\newenvironment{customclm}[1]
  {\innercustomclm}
  {\endinnercustomclm}
\newcommand{\rA}{\mathbf{A}}
\newcommand{\rt}{\mathbf{t}}
\newcommand{\ra}{\mathbf{a}}
\newcommand{\rv}{\mathbf{v}}
\newcommand{\rZ}{\mathbf{Z}}
\newcommand{\rN}{\mathbf{N}}
\newcommand{\rX}{\mathbf{X}}
\newcommand{\rY}{\mathbf{Y}}
\newcommand{\rx}{\mathbf{x}}
\newcommand{\ry}{\mathbf{y}}
\newcommand{\rS}{\mathbf{S}}
\newcommand{\rM}{\mathbf{M}}
\newcommand{\rsigma}{\mathbf{\sigma}}
\newcommand{\cH}{\mathcal{S}^{d-1}}
\newcommand{\Rad}{\mathcal{R}}
\newcommand{\Loss}{\mathcal{L}}
\newcommand{\cD}{\mathcal{D}}
\newcommand{\Norm}{\mathcal{N}}
\newcommand{\Disc}{\mathcal{G}}
\newcommand{\ncH}{\mathcal{S}^d}
\newcommand{\Net}{\mathcal{W}}
\newcommand{\AllOne}{\mathbf{1}}
\newcommand{\subH}{\mathcal{H}}
\newcommand{\R}{\mathbb{R}}
\newcommand{\Ball}{\mathbb{B}}
\newcommand{\Balld}{\mathbb{B}_2^d}
\newcommand{\E}{\mathbb{E}}
\newcommand{\Z}{\mathbb{Z}}
\newcommand{\eps}{\varepsilon}
\newcommand{\eqd}{\,{\buildrel d \over =}\,}
\DeclareMathOperator{\support}{supp}
\DeclareMathOperator{\sign}{sign}
\newcommand{\Ints}{\mathbb{Z}}
\newcommand{\finalH}{\mathcal{H}}
\newcommand{\finalX}{\mathcal{X}}
\DeclareMathOperator{\poly}{poly}
\newcommand{\indi}[1]{\mathds{1}\{#1\}}
\DeclarePairedDelimiter{\norm}{\parallel}{\parallel}
\DeclarePairedDelimiter{\ipr}{\langle}{\rangle}
\renewcommand{\Pr}{\mathbb{P}}
\title{Tight Generalization Bounds for Large-Margin Halfspaces}
\author{Kasper Green Larsen \qquad\qquad Natascha Schalburg \\ 
        \texttt{\{larsen,n.schalburg\}@cs.au.dk}\\
        Computer Science, Aarhus University
}
\date{}
\begin{document}

\maketitle

\begin{abstract}
We prove the first generalization bound for large-margin halfspaces that is asymptotically tight in the tradeoff between the margin, the fraction of training points with the given margin, the failure probability and the number of training points. 
\end{abstract}

\section{Introduction}
Halfspaces are arguably among the simplest and most fundamental classic learning models. Given a normal vector $w \in \R^d$ and a bias $b \in \R$ defining a hyperplane, the corresponding halfspace classifier predicts the label of a data point $x \in \R^d$ by returning $\sign(\langle w, x \rangle+b)$, corresponding to a $+1$ label on points inside the halfspace above the hyperplane, and $-1$ on points below.

Classic examples of learning algorithms for obtaining a halfspace classifier from a training set of points $S = \{(x_i,y_i)\}_{i=1}^n$ with $(x_i, y_i) \in \R^d \times \{-1,1\}$, includes the Perceptron Learning Algorithm (PLA)~(\cite{perceptron}) and Support Vector Machines (SVM)~(\cite{Cortes1995}). A key intuition underlying SVM, is the empirical observation that halfspaces with a large \emph{margin} to the training data tend to generalize well. Ignoring the bias variable $b$ (which may be handled by adding a special feature) and assuming $w \in \cH$ (i.e.\ $w$ has unit length), the margin of the halfspace with normal vector $w$ on a labeled point $(x,y)$ is $y \langle w, x \rangle$. Observe that $\langle w, x\rangle$ gives the signed distance of $x$ from the hyperplane, and the margin is positive when $\sign(\langle w, x \rangle)$ correctly predicts the label $y$. With this definition, \emph{hard-margin} SVM 
computes the normal vector $w$ of the hyperplane with the largest minimum margin.
There are also margin variants of the Perceptron~(\cite{marginperceptron}) that computes a halfspace with minimum margin approaching the optimal, as in hard-margin SVM.

To handle data that is not linearly separable, and to add robustness to outliers, the \emph{soft-margin} SVM relaxes the optimization problem to the following
\begin{align*}
    \min_{w, \xi} \|w\|_2^2 + \lambda \sum_i \xi_i,\ \qquad
    \textrm{s.t. } y_i \langle w, x_i \rangle \geq 1- \xi_i,\qquad
    \xi_i \geq 0.
\end{align*}
Here $\lambda>0$ is a regularization parameter. The soft-margin SVM thus allows for smaller margins on some training points at the cost of a penalty $\lambda \xi_i$.

To theoretically justify and explain the empirical success of focusing on large margins,~\cite{Bartlett98generalizationperformance} proved the first generalization bounds upper bounding the probability $\Loss_\cD(w) := \Pr_{(\rx,\ry) \sim \cD}[\sign(\langle w, \rx \rangle) \neq \ry]$ of misclassifying the label of a new data point. Concretely they first studied the hard-margin case and proved that for any distribution $\cD$ over $\Balld \times \{-1,1\}$ and any $0 < \delta < 1$, it holds with probability at least $1-\delta$ over a training set $\rS \sim \cD^n$ that for every $w \in \cH$ and every margin $0 < \gamma < 1$, if $y\langle w,x\rangle \geq \gamma$ for all $(x,y) \in \rS$ then
\begin{align}
    \Loss_\cD(w)  \leq c \cdot \left(\frac{\ln^2(n)}{\gamma^2 n} + \frac{\ln(e/\delta)}{n}\right),\label{eq:bartlett}
\end{align}
for a constant $c>0$. The restriction to $x \in \Balld$ can be relaxed by multiplying the first term by $R^2$ for $x \in R\cdot \Balld$. A dependency on the scaling of input points is inevitable as margins scale with $\|x\|_2$. Throughout the paper, we state bounds for $R=1$ and remark that all bounds generalize to arbitrary $R$ by replacing $\gamma$ by $\gamma/R$.

Defining $\Loss^\gamma_S(w)$ as the fraction of data points in a training set $S$ where $w$ has margin at most $\gamma$,~\cite{Bartlett98generalizationperformance} also prove a more general result, saying that with probability $1-\delta$ over $\rS \sim \cD^n$, it holds for every $w \in \cH$ that
\begin{align}
    \Loss_\cD(w) \leq \Loss_{\rS}^\gamma(w) + c \cdot \sqrt{\frac{\ln^2(n)}{\gamma^2 n} + \frac{\ln(e/\delta)}{n}}.\label{eq:bartlettSoft}
\end{align}
This was later improved by~\cite{DBLP:journals/jmlr/BartlettM02} using Rademacher complexity arguments, replacing the $\ln^2(n)$ term in~\eqref{eq:bartlettSoft} by $1$. Here, and throughout the paper, we refer to $\Loss^\gamma_\rS(w)$ as the (empirical) \emph{margin loss}.

\paragraph{First-Order Bounds.} 
The first work to interpolate between the hard-margin and soft-margin bounds was due to ~\cite{DBLP:conf/colt/McAllester03}, who gave a general tradeoff of
\begin{align}
    \Loss_\cD(w) \leq \Loss_{\rS}^\gamma(w) + c \cdot \left(\sqrt{ \Loss_\rS^\gamma(w) \cdot \frac{\ln n}{\gamma^2 n} }  + \frac{\ln n}{\gamma^2 n} + \sqrt{\frac{\ln n + \ln(e/\delta)}{n}}\right).\label{eq:mcallester}
\end{align}
Notice how the $\Loss_\rS^\gamma(w)$ term is multiplied onto $\ln n/(\gamma^2 n)$ inside the first square-root. Since the hard-margin case corresponds to this term being $0$, this gives a way of interpolating between the cases. Such bounds are often refered to as \emph{first-order bounds}. Unfortunately,~\eqref{eq:mcallester} still has the seemingly superfluous $\sqrt{(\ln n + \ln(e/\delta))/n}$ term even when $\Loss_\rS^\gamma(w)=0$ and thus falls short of even matching~\eqref{eq:bartlett} in the hard-margin case.

The current state-of-the-art generalization bound is due to~\cite{SVMbest} and states that with probability $1-\delta$ over $\rS \sim \cD^n$, it holds for every $w \in \cH$ that
\begin{align}
    \Loss_\cD(w) \leq \Loss_{\rS}^\gamma(w) + c \cdot \left(\sqrt{\Loss_{\rS}^\gamma(w) \cdot \left(\frac{\ln n}{\gamma^2 n} + \frac{\ln(e/\delta)}{n}\right)} + \frac{\ln n}{\gamma^2 n} + \frac{\ln(e/\delta)}{n}\right).\label{eq:sota}
\end{align}
This improves previous hard-margin bounds by a logarithmic factor and gives a cleaner interpolation between the hard- and soft-margin cases. Furthermore, the bound is close to optimal. Concretely, the dependency on $\delta$ is optimal by tweaking standard results for agnostic PAC learning, see e.g.~\cite{DevroyeGyorfiLugosi1996} [Chapter 11]. Moreover,~\cite{SVMbest} complemented their upper bound by the following lower bound 
\begin{theorem}[\cite{SVMbest}]
\label{thm:lower}
There is a constant $c>0$ such that for any $c n^{-1/2} < \gamma < c^{-1}$, any parameter $0 \leq \tau \leq 1$, and any $n \geq c$, there is a distribution $\cD$ such that it holds with constant probability over $\rS \sim \cD^n$ that there is a $w \in \cH$ such that $\Loss_\rS^\gamma(w) \leq \tau$ and
\begin{align*}
\Loss_\cD(w) &\geq \Loss_\rS^\gamma(w) + c \cdot \left(\sqrt{\tau \cdot \frac{\ln(e/\tau)}{\gamma^2 n}} + \frac{\ln(\gamma^2 n)}{\gamma^2 n} \right) 
\geq \Loss_\rS^\gamma(w) + c \cdot \left(\sqrt{\Loss_{\rS}^\gamma(w) \cdot \frac{\ln(e/\Loss_{\rS}^\gamma(w))}{\gamma^2 n}} + \frac{\ln(\gamma^2 n)}{\gamma^2 n} \right).
\end{align*}
\end{theorem}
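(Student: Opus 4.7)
My plan is to prove the lower bound by splitting on the parameter $\tau$ and constructing, for each regime, a hard distribution on $\Balld\times\{-1,+1\}$ for which a probabilistic-method argument produces, with constant probability over $\rS\sim\cD^n$, a halfspace $w\in\cH$ witnessing the claimed inequality. Since the two terms are comparable up to constants when $\tau\asymp\ln(\gamma^2 n)/(\gamma^2 n)$, it suffices to lower bound each separately; the two bounds then combine into the claim up to a constant factor absorbed in $c$. As a common scaffold I would fix $K:=\lfloor 1/(c_0\gamma^2)\rfloor$ for a small constant $c_0$ and take $K$ orthonormal unit vectors $v_1,\dots,v_K\in\R^K$; for any sign pattern $\sigma\in\{\pm 1\}^K$ the halfspace $w_\sigma:=\gamma\sum_i\sigma_iv_i$ then lies in $\cH$ with $\sigma_i\langle w_\sigma,v_i\rangle=\gamma$, so every labeling of $\{v_i\}$ can be realised with margin exactly $\gamma$. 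This is the manifestation of the ``effective VC-dimension $\Theta(1/\gamma^2)$'' of $\gamma$-margin halfspaces on which both constructions build.

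For the first-order term $\sqrt{\tau\ln(e/\tau)/(\gamma^2 n)}$ in the regime $\tau\gtrsim\ln(\gamma^2 n)/(\gamma^2 n)$, I would take the marginal on $\rx$ uniform on $\{v_i\}$ and randomise labels via $\Pr[\ry=+1\mid\rx=v_i]=1-\tau$ i.i.d.\ across draws. The Bayes halfspace $w^{\ast}:=\gamma\sum_iv_i$ then has $\Loss_\cD(w^{\ast})=\tau$ exactly, so I would look for a sign pattern $\sigma$ whose bin-wise ``majority flips'' beat $\tau$ on the sample by the target amount. Conditioning on the multinomial bin counts $(N_1,\dots,N_K)\sim\mathrm{Mult}(n;1/K,\dots,1/K)$ reduces the question to bin-wise independent $\mathrm{Binomial}(N_i,\tau)$ overshoots, and a second-moment/Paley--Zygmund argument then shows that the sum of the $m$ largest such overshoots is $\Omega(\sqrt{m\ln(K/m)\,\tau\, n/K})$ with constant probability; substituting $m=\Theta(\tau K)$ and $K\asymp 1/\gamma^2$ yields the desired rate.

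For the hard-margin term $\ln(\gamma^2 n)/(\gamma^2 n)$ (covering $\tau\lesssim\ln(\gamma^2 n)/(\gamma^2 n)$, in particular $\tau=0$) I would switch to an asymmetric construction: put most of the mass on a single easy direction $v_0$ (labelled $+1$) and spread the residual mass $\Theta(\ln(\gamma^2 n)/(\gamma^2 n))$ over roughly $M\asymp\ln(\gamma^2 n)/\gamma^2$ carefully placed ``rare'' directions of individual probability $\Theta(1/n)$, also labelled $+1$. A Poissonisation/coupon-collector calculation shows that with constant probability a constant fraction of the rare directions are missed by $\rS$, producing the claimed true-loss lower bound. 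The delicate geometric step is arranging the rare directions so that for the random subset actually appearing in $\rS$ (along with $v_0$), one can construct $w$ with $\|w\|_2\le 1$, margin $\ge\gamma$ on every training example, and $\langle w,v\rangle<0$ on every missed rare direction; when the number of distinct seen rare directions can exceed the $1/\gamma^2$ orthogonal shattering budget, this requires placing the rare directions in a simplex-like code tuned so that any subsample remains linearly separable from the missed complement with margin $\gamma$.

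The main obstacle I expect is the sharp anti-concentration tail in the first-order step: a naive union bound over the $2^K$ sign patterns would cost a factor $\ln(2^K)=\Theta(1/\gamma^2)$ inside the logarithm and destroy the tightness. The fix is to observe that conditioned on $(N_i)$ the bin-wise overshoots are independent, the optimal $\sigma$ is coordinate-wise majority, and the resulting sum of $K$ independent centred variables admits a Berry--Esseen-type lower tail keeping $\ln(e/\tau)$ \emph{inside} the square root rather than outside. A secondary obstacle is the geometric realisability step in the hard-margin construction, which demands picking the rare-direction code with care so that a single $w$ can simultaneously satisfy the margin constraint on every seen point and flip sign on every missed one; I would expect this to be the step most specific to the margin setting, and the one that forces the extra $\ln(\gamma^2 n)$ factor relative to the VC-style $1/(\gamma^2 n)$ baseline.
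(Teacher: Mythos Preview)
The paper does not contain a proof of Theorem~\ref{thm:lower}: it is quoted from \cite{SVMbest} purely as motivation for the matching upper bound, with only the remark that a careful reading of their argument gives $\ln(\gamma^2 n)$ in place of $\ln n$ for general $\gamma>cn^{-1/2}$. There is therefore no in-paper proof to compare your proposal against.

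On the merits of your plan: the two–regime split and the use of $K\asymp 1/\gamma^2$ orthonormal directions to exploit the $\gamma$-fat-shattering dimension are the right scaffolding, and you correctly identify where the difficulties lie. But both concrete constructions, as you describe them, have gaps that are not merely technical.

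For the first-order term, the fixed law $\Pr[\ry=+1\mid v_i]=1-\tau$ does not leave room for the sign-flipping you propose. Flipping bin $i$ from $+1$ to $-1$ changes the empirical margin-loss count by $N_i-2B_i$, whose expectation $(1-2\tau)N_i$ is \emph{positive}; the overshoots $B_i-\tau N_i$ have standard deviation only $\Theta(\sqrt{\tau n/K})$, so insisting on $\Loss_\rS^\gamma(w_\sigma)\le\tau$ forces you to flip so few bins that the resulting gap is $O(\sqrt{\tau/(\gamma^2 n)})$ without the $\ln(e/\tau)$ factor. Lower bounds that do produce this logarithm randomise the \emph{hidden Bayes labelling} $\sigma^\ast$ (and typically also which $\Theta(\tau K)$ bins carry the noise), so that the $\ln(e/\tau)$ arises from the entropy of the unknown sign pattern rather than from per-bin Gaussian tails under a single fixed distribution. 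Your Paley--Zygmund step is aimed at the wrong source of randomness.

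For the hard-margin term, taking $M\asymp \ln(\gamma^2 n)/\gamma^2$ rare directions cannot work. With each direction sampled with probability $\Theta(1)$, the number of \emph{seen} rare directions is $\Theta(M)\gg 1/\gamma^2$ with constant probability, and you then need a unit vector $w$ with $\langle w,u_j\rangle\ge\gamma$ on all of them while simultaneously $\langle w,u_j\rangle<0$ on the (randomly interleaved) missed ones. This is exactly $\gamma$-shattering a random near-balanced bipartition of $M$ unit vectors, which is impossible once $M$ exceeds the $\gamma$-fat-shattering dimension $\Theta(1/\gamma^2)$; no ``simplex-like code'' evades this, because any arrangement that makes many seen directions cheap to satisfy with a single $w$ also makes the interleaved missed directions land on the same side. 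The actual construction uses at most $O(1/\gamma^2)$ rare directions; the extra $\ln(\gamma^2 n)$ is obtained from how the per-point masses and the hidden labelling are tuned, not from inflating the number of directions beyond the shattering budget.
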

Notice how the parameter $\tau$ allows for showing that the upper bound~\eqref{eq:sota} is nearly tight across the range of $\Loss_\rS^\gamma(w)$. Let us also remark that~\cite{SVMbest} states their lower bound with a $\ln n$ rather than $\ln(e \gamma^2 n)$, but require that $\gamma > n^{-0.499}$. A careful examination of their proof however reveals the more general lower bound stated here.

Unfortunately there still remains a discrepancy between the lower bound and~\eqref{eq:sota}. Concretely there is a gap of $\sqrt{\ln n/\ln(e/\Loss_{\rS}^\gamma(w))}$. Moreover, for constant $\Loss_{\rS}^\gamma(w)$, the Rademacher complexity based bound in~\eqref{eq:bartlettSoft} improves over both of the first-order bounds~\eqref{eq:mcallester} and~\eqref{eq:sota}, and matches the lower bound in Theorem~\ref{thm:lower}. This seem to suggest that a better upper bound might be possible.

\paragraph{Our Contribution.}
In this work, we settle the generalization performance of large-margin halfspaces by proving a new upper bound matching the lower bound in Theorem~\ref{thm:lower} across the entire tradeoff between $\gamma$, $\Loss_\rS^\gamma(w)$ and $n$ (and is also tight in terms of $\delta$). Our result is stated in the following theorem
\begin{theorem}
\label{thm:main}
There is a constant $c>0$ such that for any distribution $\cD$ over $\Ball_2^d \times \{-1,1\}$, it holds with probability at least $1-\delta$ over $\rS \sim \cD^n$ that for every $w \in \cH$ and every margin $n^{-1/2} \leq \gamma \leq 1$, we have
\[
\Loss_\cD(w) \leq \Loss_{\rS}^\gamma(w) + c \left(\sqrt{\Loss^{\gamma}_\rS(w) \cdot \left( \frac{ \ln(e/\Loss^{\gamma}_\rS(w))}{\gamma^2 n} + \frac{\ln(e/\delta)}{n} \right)} + \frac{\ln(e\gamma^2 n)}{\gamma^2 n} + \frac{\ln(e/\delta)}{n}  \right).
\]
\end{theorem}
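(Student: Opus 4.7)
The plan is to build on the argument of the authors of~\cite{SVMbest} and replace their uniform union-bound step by a data-dependent analysis that exploits $\tau := \Loss_\rS^\gamma(w)$. The two improvements over~\eqref{eq:sota} are parallel in nature: inside the square root, $\ln n$ becomes $\ln(e/\tau)$; in the additive term, $\ln n$ becomes $\ln(e\gamma^2 n)$. Both reflect that the ``effective number of hypotheses'' should scale like $(n/(\text{number of errors}))^{1/\gamma^2}$ rather than $n^{1/\gamma^2}$, an improvement that should come from combining a sample compression scheme of size $k = O(\gamma^{-2})$ with a refined compression generalization bound.

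The first ingredient is a sample compression scheme for large-margin halfspaces of compression size $k = O(\gamma^{-2})$. By a Maurey-type sparsification applied to the SVM-dual representation of $w$, any $w \in \cH$ with margin-loss $\tau$ at scale $\gamma$ admits a $k$-sparse approximation $\tilde w$, supported on $k$ training points, such that $|\langle \tilde w - w, x_i \rangle| > \gamma/2$ holds on at most an $O(\tau)$-fraction of training points in expectation over the sparsification randomness. Derandomization produces a deterministic compression mapping $\rS$ to (a $k$-subset together with $\tilde w$) of margin-loss $O(\tau)$ at scale $\gamma/2$. Enumerating the $\binom{n}{k} \leq (e\gamma^2 n)^{O(1/\gamma^2)}$ possible compression sets accounts precisely for the $\ln(e\gamma^2 n)/(\gamma^2 n)$ term in the target bound.

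The second ingredient is a refined first-order compression bound: with probability at least $1-\delta$ over $\rS \sim \cD^n$, every output $h$ of a fixed size-$k$ compression scheme satisfies, for every training-set error count $m$,
\[
\Loss_\cD(h) \leq \frac{m}{n} + c\left(\sqrt{\frac{m}{n} \cdot \frac{k\ln(en/m) + \ln(1/\delta)}{n}} + \frac{k\ln(en/m) + \ln(1/\delta)}{n}\right).
\]
The proof is a Chernoff--Bennett analysis of $\Pr[\mathrm{Bin}(n-k, p) \leq m]$ combined with a peeling argument over dyadic values of $m$ and a union bound over the $\binom{n}{k}$ choices of compression set. The structural point is that the binomial lower tail $\Pr[\mathrm{Bin}(n,p) \leq m]$ already carries a $\binom{n}{m}$ factor; after paying $\binom{n}{k}$ for the compression union bound, the dominant log factor in the first-order term becomes $k\ln(en/m)$ rather than $k\ln(en/k)$. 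With $m \asymp \tau n$ this yields $\ln(e/\tau)/\gamma^2$, as desired. Plugging in $k = O(\gamma^{-2})$ and $m = O(\tau n)$, and peeling over dyadic $\gamma \in [n^{-1/2},1]$, produces Theorem~\ref{thm:main}.

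The main obstacle is the sparse approximation. Maurey's sampling gives per-point approximation error $\leq \gamma/2$ with only constant probability at $k = \Theta(\gamma^{-2})$; requiring this uniformly over all $n$ training points costs an extra $\log n$ factor in $k$, which would in turn propagate to $\log n$ inside the square-root term. Circumventing this requires allowing a small, $O(\tau)$-fraction of points to be miscompressed --- controlled only in expectation via Markov rather than uniformly via a union bound --- and then delicately absorbing this additional loss into the compression generalization bound so that neither the constants nor the logarithmic factors inflate. A secondary subtlety is dovetailing the peeling over $m$ (for the first-order term) with the peeling over $\gamma$ (for uniformity over the margin), so that the $\ln(e\gamma^2 n)$ in the additive term is not replaced by $\ln(e\gamma^2 n) + \ln\ln n$ or worse.
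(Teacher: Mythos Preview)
Your second ingredient is the load-bearing step, and it does not hold up. The claim is that for a size-$k$ compression scheme one can replace the usual complexity term $k\ln(en/k)$ by $k\ln(en/m)$ inside the \emph{first-order} square root. Your justification --- that the binomial lower tail $\Pr[\mathrm{Bin}(n,p)\le m]$ ``already carries a $\binom{n}{m}$ factor'' --- points in the wrong direction: that factor makes the tail probability \emph{larger}, not smaller, and does nothing to offset the $\binom{n}{k}$ union-bound cost over compression sets. Inverting Bernstein/Bennett after paying $\binom{n}{k}$ gives $\Loss_\cD(h)\le \tfrac{m}{n}+c\bigl(\sqrt{\tfrac{m}{n}\cdot\tfrac{k\ln(en/k)+\ln(1/\delta)}{n}}+\tfrac{k\ln(en/k)+\ln(1/\delta)}{n}\bigr)$, i.e.\ $k\ln(en/k)$ in \emph{both} terms. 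With $k\asymp\gamma^{-2}$ this yields a first-order term $\sqrt{\tau\cdot\ln(e\gamma^2 n)/(\gamma^2 n)}$, which for constant $\tau$ is off from the target $\sqrt{\tau\cdot\ln(e/\tau)/(\gamma^2 n)}$ by exactly the $\sqrt{\ln(\gamma^2 n)}$ factor you are trying to remove. In other words, your compression route, even granting the sparsification, recovers essentially~\eqref{eq:sota} and not Theorem~\ref{thm:main}.

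There is also a structural obstacle you do not address. To pass from $\Loss_\cD(w)$ to the compressed $\tilde w$ you need $\Loss_\cD(w)\le \Loss^{\gamma/2}_\cD(\tilde w)+\Pr_\cD[\text{miscompressed}]$, and this last term is a \emph{distribution-side} quantity that is not the same as the $O(\tau)$ training-set miscompression you control via Markov. The paper's proof hinges on precisely this point: it uses a \emph{data-independent} random discretization (Johnson--Lindenstrauss plus randomized rounding) so that the ``miscompression'' probability $\Pr_{\rA,\rt}[y\langle h_{\rA,\rt}(w),\rA x\rangle>\gamma/2]$ is a function of the margin $y\langle w,x\rangle$ alone (Claim~\ref{clm:distDeterm}). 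This lets the discretization error on $\cD$ and on $\rS$ be compared directly via Rademacher complexity and the contraction inequality, after bounding the Lipschitz constant of that function by roughly $\gamma^{-1}\exp(-\gamma^2 k/c)$. That cancellation is what buys the extra $1/\sqrt{n}$ and permits the smaller choice $k\asymp\gamma^{-2}\ln(e/\tau)$. A Maurey-type sparsification is a function of the training set, so the mis-sparsification events on $\cD$ and on $\rS$ are not governed by the same function of the margin, and the Rademacher cancellation is unavailable.
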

While one might argue that our improvement is small in magnitude, this finally pins down the exact generalization performance of a classic learning model. Furthermore, our proof of Theorem~\ref{thm:main} brings several novel ideas that we hope may find further applications in generalization bounds.

We next proceed to give an overview of our proof and new ideas in Section~\ref{sec:overview}, before giving the full details of the proof in Section~\ref{sec:mainproof}.

\section{Proof Overview}
\label{sec:overview}
In this section, we present the main ideas in our proof of Theorem~\ref{thm:main}. As our proof builds on, and greatly extends, the work of~\cite{SVMbest} establishing the previous state-of-the-art in~\eqref{eq:sota}, we first present their overall proof strategy and the barriers we need to overcome to obtain our tight generalization bound. Throughout this proof overview, we use the notation $x \lesssim y$ to denote that there is an absolute constant $c>0$ so that $x \leq cy$.

\subsection{Previous Proof}
The proof of~\cite{SVMbest} follows a framework proposed by~\cite{SFBL98} for proving generalization of large-margin \emph{voting classifiers} (i.e.\ boosting). The main idea is to randomly discretize the infinite hypothesis set $\cH$ to obtain a finite set $\Disc \subseteq \R^d \to \{-1,1\}$. If $\Disc$ is small enough, then a standard union bound over all $h \in \Disc$ suffices to bound the difference between the empirical error and the true error $\Loss_\cD(h)$ for every $h \in \Disc$. The key trick is to exploit large margins to allow for a discretization to a smaller $\Disc$.

To elaborate on the above, let us first generalize our notation $\Loss_\cD(w)$ and $\Loss^\gamma_S(w)$ a bit. For a distribution $\cD$ over $\Balld \times \{-1,1\}$, let
$
\Loss^\gamma_\cD(w) := \Pr_{(\rx,\ry) \sim \cD}[\ry\langle w , \rx \rangle \leq \gamma]
$,
that is, $\Loss^\gamma_\cD(w)$ is the probability over a fresh sample $(\rx, \ry)$ from $\cD$, of $w$ having margin no more than $\gamma$ on $(\rx,\ry)$. For a training set $S$, we slightly abuse notation and write $(\rx,\ry) \sim S$ to denote a uniform random sample from $S$. We thus have
\[
\Loss^\gamma_S(w) := \Pr_{(\rx,\ry) \sim S}[\ry\langle w , \rx \rangle \leq \gamma] = \frac{|\{(x,y) \in S : y\langle w , x \rangle \leq \gamma\}|}{|S|}.
\]
When writing $\Loss_{\cD}(w)$ we implicitly mean $\Loss_\cD^0(w)$ and note that this coincides with our previous definition of $\Loss_\cD(w) = \Pr_{(\rx ,\ry)\sim \cD}[\sign(\langle w, \rx \rangle) \neq \ry]$ (defining $\sign(0)=0$).

\paragraph{Random Discretization.}
With this notation, the main idea in the proof of~\cite{SVMbest}, is to apply a Johnson-Lindenstrauss transform~(\cite{JL84}), followed by a random snapping to a grid, in order to map each $w \in \cH$ to a point on a grid $\Disc$ of size $\exp(c k)$ in $\R^k$, with $c > 0$ a sufficiently large constant. In more detail, let $\rA$ be a $k \times d$ matrix with i.i.d.\ $\Norm(0,1/k)$ normal distributed entries. Such a matrix is a classic implementation of the Johnson-Lindenstrauss transform and has the property that $|\langle \rA w, \rA x \rangle - \langle w, x \rangle|$ is greater than $\eps$ with probability at most $\exp(-\eps^2 k/c)$ when $\|w\|_2,\|x\|_2 \leq 1$~(\cite{DG03}). Note that this also preserves the norm of a vector $w$ by considering $x=w$ and noting $\langle w, w \rangle = \|w\|_2^2$. Secondly, following an idea of~\cite{AK17} in a lower bound proof for the Johnson-Lindenstrauss transform,~\cite{SVMbest} randomly round $\rA w$ to a point $h_{\rA,\rt}(w)$ with coordinates integer multiples of $k^{-1/2}$ while guaranteeing that $|\langle h_{\rA,\rt}(w), \rA x \rangle-\langle \rA w, \rA x \rangle|$ is less than $\eps$, except with probability $\exp(- \eps^2 k/c)$. Here we use $\rt$ to denote the randomness involved in the rounding.

Now choosing $\eps= \gamma/4$ gives, by the triangle inequality, that $|\langle h_{\rA,\rt}(w), \rA x \rangle-\langle w, x \rangle| \leq \gamma/2$, except with probability $2\exp(- \gamma^2 k/(16c))$. Furthermore, by plugging in $x=w$ and setting $\eps=1$, we can also deduce that $\|h_{\rA,\rt}(w)\|_2 \leq 2$ except with probability $\exp(-k/c)$. Simple counting arguments show that there are only $\exp(c k)$ many vectors of norm at most $2$ with all coordinates integer multiples of $k^{-1/2}$. That is, except with probability $\exp(-k/c)$, $h_{\rA,\rt}(w)$ belongs to a finite set $\Disc$ of $\exp(c k)$ many points. 

\paragraph{Framework.}
With the above random discretization, the proof of~\cite{SVMbest} now follows the framework of~\cite{SFBL98} by relating $\Loss_\cD(w)$ to $\Loss^{\gamma/2}_{\rA\cD}(h_{\rA,\rt}(w))$ and $\Loss^\gamma_{\rS}(w)$ to $\Loss^{\gamma/2}_{\rA \rS}(h_{\rA,\rt}(w))$. Here $\rA \cD$ is the distribution obtained by sampling $(\rx,\ry) \sim \cD$ and returning $(\rA \rx, \ry)$. Similarly, $\rA S$ is the training set obtained by replacing each $(x,y) \in S$ by $(\rA x, y)$. The intuition is that the random discretization changes margins by no more than $\gamma/2$ for most data points and hence points with margin at most $0$ under $\cD$ often have margin at most $\gamma/2$ under $\rA \cD$ and similarly for $S$ and $\rA S$. Let us make this more formal. We have for any $A,t$ in the support of $\rA,\rt$ that
\begin{align}
\Loss_\cD(w) &\leq \Loss^{\gamma/2}_{A \cD}(h_{A,t}(w)) + \Pr_{(\rx,\ry) \sim \cD}[\ry \langle w, \rx\rangle \leq 0 \wedge \ry \langle h_{A,t}(w) , A\rx\rangle > \gamma/2] \label{eq:Dupper}
\end{align}
Similarly, we have
\begin{align}
\Loss^\gamma_S(w) &\geq \Loss^{\gamma/2}_{A S}(h_{A,t}(w)) - \Pr_{(\rx,\ry) \sim S}[\ry \langle w, \rx\rangle > \gamma \wedge \ry \langle h_{A,t}(w) , A\rx\rangle \leq \gamma/2].\label{eq:Supper}
\end{align}
Taking expectation we see that
\begin{align}
    \Loss_\cD(w) - \Loss_\rS^\gamma(w) &= \E_{\rA,\rt}[\Loss_\cD(w) - \Loss_\rS^\gamma(w)] \nonumber\\
    &\leq \E_{\rA,\rt}[\Loss^{\gamma/2}_{\rA \cD}(h_{\rA,\rt}(w)) -\Loss^{\gamma/2}_{\rA S}(h_{\rA,\rt}(w)) ] \label{eq:needsup}\\
    &+ \E_{\rA,\rt}[\Pr_{(\rx,\ry) \sim \cD}[\ry \langle w, \rx\rangle \leq 0 \wedge \ry \langle h_{\rA,\rt}(w) , \rA \rx\rangle > \gamma/2]]\label{eq:dRel}\\
    &+ \E_{\rA,\rt}[\Pr_{(\rx,\ry) \sim S}[\ry \langle w, \rx\rangle > \gamma \wedge \ry \langle h_{\rA,\rt}(w) , \rA \rx\rangle \leq \gamma/2]]\label{eq:sRel}.
\end{align}
To bound~\eqref{eq:needsup}, we exploit that $h_{\rA,\rt}(w)$ belongs to the grid $\Disc$, except with probability $\exp(- k/c)$. Using Bernstein's inequality (and a careful partitioning of hypotheses $w$ depending on $\Loss_{\cD}^\gamma(w)$), it is possible to union bound over the entire grid and conclude
\begin{align}
\E_{\rA,\rt}[\Loss^{\gamma/2}_{\rA \cD}(h_{\rA,\rt}(w)) -\Loss^{\gamma/2}_{\rA S}(h_{\rA,\rt}(w)) ] &\leq \nonumber\\
\E_{\rA,\rt}[\sup_{h \in \Disc} \Loss^{\gamma/2}_{\rA \cD}(h) -\Loss^{\gamma/2}_{\rA S}(h) ]  + \Pr_{\rA,\rt}[h_{\rA,\rt}(w) \notin \Disc]&\lesssim \nonumber\\
\sqrt{\Loss_{\rS}^\gamma(w) \cdot \frac{\ln(|\finalH_\rA|/\delta)}{n}}  + \frac{\ln(|\finalH_\rA|/\delta)}{n} + \exp(-k/c) &\lesssim \nonumber\\
\sqrt{\Loss_{\rS}^\gamma(w) \cdot \frac{k + \ln(e/\delta)}{n}} + \frac{k + \ln(e/\delta)}{n} + \exp(-k/c).\label{eq:boundgamma/2}
\end{align}

To bound~\eqref{eq:sRel}, we use the guarantees of the random discretization to conclude that
\begin{align*}
    \E_{\rA,\rt}[\Pr_{(\rx,\ry) \sim S}[\ry \langle w, \rx\rangle > \gamma \wedge \ry \langle h_{\rA,\rt}(w) , \rA \rx\rangle \leq \gamma/2]] &= \\
   \E_{(\rx,\ry) \sim S}[ \Pr_{\rA,\rt}[\ry \langle w, \rx\rangle > \gamma \wedge \ry \langle h_{\rA,\rt}(w) , \rA \rx\rangle \leq \gamma/2]] &\leq \\
   \E_{(\rx,\ry) \sim S}[ \Pr_{\rA,\rt}[ \ry \langle h_{\rA,\rt}(w) , \rA \rx\rangle \leq \gamma/2 \mid \ry \langle w, \rx\rangle > \gamma]] &\leq \\
   2\exp(-\gamma^2 k/(16c)).
\end{align*}
We can bound~\eqref{eq:dRel} in a similar fashion (even with slightly better guarantees scaled by $\Loss_\cD(w)$, but this does not help for~\eqref{eq:sRel}). The final generalization error thus becomes
\begin{align}
    \Loss_\cD(w) \leq \Loss_{\rS}^\gamma(w) + c' \cdot \left(\sqrt{\Loss_{\rS}^\gamma(w) \cdot \frac{k + \ln(e/\delta)}{n}} + \frac{k + \ln(e/\delta)}{n} + \exp(-\gamma^2 k/c') \right),\label{eq:finalOld}
\end{align}
where $c'>0$ is a sufficiently large constant. Comparing this expression with the desired bound from Theorem~\ref{thm:main}, we see that we have to choose $k$ large enough that $c'\exp(- \gamma^2 k/c')$ is no larger than
\[
\sqrt{\Loss^{\gamma}_\rS(w) \cdot \left( \frac{ \ln(e/\Loss^{\gamma}_\rS(w))}{\gamma^2 n} + \frac{\ln(e/\delta)}{n} \right)} + \frac{\ln(e\gamma^2 n)}{\gamma^2 n} + \frac{\ln(e/\delta)}{n}.
\]
This basically solves to 
\[
k \gtrsim  \gamma^{-2} \ln\left( \frac{\gamma^2 n}{\Loss_{\rS}^\gamma(w) \ln(e/\Loss_{\rS}^\gamma(w))} \right) \geq \gamma^{-2} \ln\left(\gamma^2 n\right).
\]
Inserting this $k$ in~\eqref{eq:finalOld} recovers the bound by~\cite{SVMbest} stated in~\eqref{eq:sota}.

\paragraph{Barriers.}
In light of the above discussion, we identify some key barriers for the previous proof technique. Concretely, if we examine~\eqref{eq:finalOld}, the term $\sqrt{\Loss_\rS^\gamma(w) k/n}$ requires us to choose $k$ no larger than $c \gamma^{-2} \ln(e/\Loss_\rS^\gamma(w))$ to match the optimal bound we get in Theorem~\ref{thm:main}. Unfortunately, the additive $\exp(-\gamma^2 k/c')$ term originating from handling~\eqref{eq:dRel} and~\eqref{eq:sRel} then becomes $\poly(\Loss_\rS^\gamma(w))$, which is too expensive. In fact, even the additive $\exp(-k/c)$ term from handling~\eqref{eq:needsup} is too expensive for e.g.\ constant $\gamma$. Nonetheless, we will in fact choose such $k$ and identify a tighter strategy for analysing $\Loss_\cD(w) - \Loss^\gamma_{\rS}(w)$.

\subsection{Our Key Improvements}
Our first main observation is that the two upper bounds in~\eqref{eq:Dupper} and~\eqref{eq:Supper} are not completely tight, i.e.\ they are inequalities, not equalities. In~\eqref{eq:Dupper} we for instance ignore points $(x,y)$ that had a margin greater than $0$ for $w$, but where the margin of $(Ax,y)$ is less than $\gamma/2$ for $h_{A,t}(w)$. Taking these into accounts, we get the tighter bounds
\begin{align*}
    \Loss_{\cD}(w) &= \Loss^{\gamma/2}_{A \cD}(h_{A,t}(w))+ \Pr_{(\rx,\ry) \sim \cD}[\ry \langle w, \rx\rangle \leq 0 \wedge \ry \langle h_{A,t}(w) , A\rx\rangle > \gamma/2] \\
    &- \Pr_{(\rx,\ry) \sim \cD}[\ry \langle w, \rx\rangle > 0 \wedge \ry \langle h_{A,t}(w) , A\rx\rangle \leq \gamma/2],
\end{align*}
and
\begin{align*}
\Loss^\gamma_S(w) &= \Loss^{\gamma/2}_{A S}(h_{A,t}(w)) - \Pr_{(\rx,\ry) \sim S}[\ry \langle w, \rx\rangle > \gamma \wedge \ry \langle h_{A,t}(w) , A\rx\rangle \leq \gamma/2] \\
&+ \Pr_{(\rx,\ry) \sim S}[\ry \langle w, \rx\rangle \leq \gamma \wedge \ry \langle h_{A,t}(w) , A\rx\rangle > \gamma/2].
\end{align*}
With these refined bounds, we can now split $\Loss_\cD(w)-\Loss^\gamma_S(w)$ into a sum of three terms:
\begin{align}
    &\Loss^{\gamma/2}_{A \cD}(h_{A,t}(w)) - \Loss^{\gamma/2}_{A S}(h_{A,t}(w)) \nonumber\\ &+
    \Pr_\cD[\ry \langle w, \rx\rangle \leq 0 \wedge \ry \langle h_{A,t}(w) , A\rx\rangle > \gamma/2] - \Pr_S[\ry \langle w, \rx\rangle \leq \gamma \wedge \ry \langle h_{A,t}(w) , A\rx\rangle > \gamma/2]\label{eq:diffDS} \\ &+
    \Pr_S[\ry \langle w, \rx\rangle > \gamma \wedge \ry \langle h_{A,t}(w) , A\rx\rangle \leq \gamma/2] - \Pr_\cD[\ry \langle w, \rx\rangle > 0 \wedge \ry \langle h_{A,t}(w) , A\rx\rangle \leq \gamma/2].\label{eq:diffSD} 
\end{align}
The first line is the same as~\eqref{eq:needsup} from before, but~\eqref{eq:diffDS} and~\eqref{eq:diffSD} improves over~\eqref{eq:dRel} and~\eqref{eq:sRel} by subtracting off a term. Intuitively, our more refined bounds allow us to argue that if the randomized rounding creates a big difference between $\Loss_\cD(w)$ and $\Loss^{\gamma/2}_\cD(h_{A,t}(w))$, then it creates a comparably large difference between $\Loss_{S}^\gamma(w)$ and $\Loss^{\gamma/2}_S(h_{A,t}(w))$, thereby canceling out. We will carefully exploit this in the following. Let us focus on~\eqref{eq:diffDS} and remark that~\eqref{eq:diffSD} is handled symmetrically. For~\eqref{eq:diffDS}, we see that
\begin{align*}
    \Pr_{(\rx,\ry) \sim S}[\ry \langle w, \rx\rangle \leq \gamma \wedge \ry \langle h_{A,t}(w) , A\rx\rangle > \gamma/2] &\geq 
    \Pr_{(\rx,\ry) \sim S}[\ry \langle w, \rx\rangle \leq 0 \wedge \ry \langle h_{A,t}(w) , A\rx\rangle > \gamma/2],
\end{align*}
and thus~\eqref{eq:diffDS} is at most
\begin{align*}
\Pr_{(\rx,\ry) \sim \cD}[\ry \langle w, \rx\rangle \leq 0 \wedge \ry \langle h_{A,t}(w) , A\rx\rangle > \gamma/2] - \Pr_{(\rx,\ry) \sim S}[\ry \langle w, \rx\rangle \leq 0 \wedge \ry \langle h_{A,t}(w) , A\rx\rangle > \gamma/2].
\end{align*}
Now introducing the expectation over the randomized rounding $\rA$ and $\rt$ as in the previous proof, and using linearity of expectation, we want to bound the following expression with probability $1-\delta$ over $\rS \sim \cD^n$
\begin{align}
    \sup_{w \in \cH} \bigg(\E_{\rA,\rt}[\Pr_{(\rx,\ry) \sim \cD}[\ry \langle w, \rx\rangle \leq 0 \wedge \ry \langle h_{\rA,\rt}(w) , \rA\rx\rangle > \gamma/2]] 
    &-\nonumber\\ \E_{\rA,\rt}[\Pr_{(\rx,\ry) \sim S}[\ry \langle w, \rx\rangle \leq 0 \wedge \ry \langle h_{\rA,\rt}(w) , \rA\rx\rangle > \gamma/2]]\bigg) &= \nonumber\\
    \sup_{w \in \cH} \bigg(\E_{(\rx,\ry) \sim \cD}[\Pr_{\rA,\rt}[\ry \langle w, \rx\rangle \leq 0 \wedge \ry \langle h_{\rA,\rt}(w) , \rA\rx\rangle > \gamma/2]] 
    &-\nonumber\\ \E_{(\rx,\ry) \sim \rS}[\Pr_{\rA,\rt}[\ry \langle w, \rx\rangle \leq 0 \wedge \ry \langle h_{\rA,\rt}(w) , \rA\rx\rangle > \gamma/2]]\bigg). \label{eq:needRad}
\end{align}
This now has a form that looks familiar. Concretely, we have a function 
\begin{align}
\psi_w(x,y) = \indi{ y\langle w ,x \rangle \leq 0} \cdot \Pr_{\rA,\rt}[y \langle h_{\rA,\rt}(w),\rA x\rangle > \gamma/2] \label{eq:psi}
\end{align}
for each $w \in \cH$, and wish to bound $\sup_w \E_{(\rx,\ry) \sim \cD}[\psi_w(\rx,\ry)] - \E_{(\rx,\ry) \sim \rS}[\psi_w(\rx,\ry)]$ with high probability over $\rS \sim \cD^n$. Rademacher complexity (see e.g.~\cite{rademacherbound}) is one key tool for bounding such differences. In particular, the contraction principle from~\cite{ledoux1991probability} allows us to bound such a supremum when the functions $\psi_w$ are composite functions $\psi_w = f \circ g_w$ with $f : \R \to \R$ having bounded Lipschitz constant. Indeed margin generalization bounds for halfspaces have previously been proved via Rademacher complexity~(\cite{DBLP:journals/jmlr/BartlettM02}) by composing the two functions $g_w(x,y) = y \langle w, x\rangle$ and $f(u) = \min\{1, \max\{0,\frac{\gamma-u}{\gamma}\}\}$ being the ramp loss. Since the ramp loss $f$ is $\gamma^{-1}$-Lipschitz, one can thus reduce bounding the Rademacher complexity of $\psi_w$ to bounding the Rademacher complexity of the simpler $g_w(x,y) = y\langle w ,x\rangle$. When using this $g_w(x,y)$ and an $L$-Lipschitz $f$, the resulting bound on~\eqref{eq:needRad} becomes $c L/\sqrt{n}$, i.e.\ $\sqrt{1/(\gamma^2 n)}$ when using the ramp loss.

We wish to take a similar approach for our $\psi_w$ in~\eqref{eq:psi}. We thus want to argue that $\psi_w(x,y)$ can be written as a composite function $f(y\langle w, x\rangle)$ of the margin of $(x,y)$ on $w$. Examining~\eqref{eq:psi}, we thus need to argue that the probability over $\rA,\rt$ is a function solely of the original margin $y \langle w, x\rangle$. The matrix $\rA$ was $k \times d$ with i.i.d.\ $\Norm(0,1/k)$ entries. Now if we make the assumption that $x$ is unit length (and not just $\|x\|_2 \leq 1$), then by the rotational invariance of Gaussians, the joint distribution of $\rA w, y\rA x$ can be shown to be completely determined from $y \langle w, x\rangle$. Since the random rounding considers only the vector $\rA w$ (and not the original $w$), it follows that the joint distribution of $h_{\rA,\rt}(w), y \rA x$ is also completely determined from $y \langle w, x\rangle$. This implies that $\psi_w(x,y)$ indeed can be written as a function $f$ of $y \langle w, x\rangle$ alone. 

We thus proceed to bound the Lipschitz constant of the function $f$ in~\eqref{eq:psi}. To avoid discontinuities, we have to alter $\psi_w(x,y)$ somewhat to not include the discontinuous indicator function (similarly to using the ramp loss in previous works), and we eventually bound the Lipschitz constant $L$ by roughly
\[
L \lesssim \gamma^{-1} \Pr_{\rA,\rt}[y \langle h_{\rA,\rt}(w),\rA x\rangle > \gamma/2 \mid y \langle w, x\rangle = 0].
\]
With a slight abuse of notation, we write $\Pr_{\rA,\rt}[y \langle h_{\rA,\rt}(w),\rA x\rangle > \gamma/2 \mid y \langle w, x\rangle = 0]$ to denote the probability $\Pr_{\rA,\rt}[y \langle h_{\rA,\rt}(w),\rA x\rangle > \gamma/2]$ for an arbitrary $x,w \in \cH$ and $y \in \{-1,1\}$ with $y\langle w,x\rangle = 0$ as $y\ipr{w,x}$ completely determines this probability as argued above.

Since our randomized rounding preserves inner products to within $\gamma/2$ except with probability $\exp(-\gamma^2 k/c)$, we get $L \lesssim \gamma^{-1} \exp(-\gamma^2 k/c)$. This finally bounds~\eqref{eq:needRad} by
\[
c \cdot \sqrt{\frac{\exp(-\gamma^2 k/c)}{\gamma^2 n}}.
\]
This should be compared to proof by~\cite{SVMbest} that got a bound of $c\exp(-\gamma^2 k/c)$. This improvement is precisely enough to derive our tight Theorem~\ref{thm:main}. Indeed, as mentioned in~\eqref{eq:boundgamma/2}, we can bound $\Loss^{\gamma/2}_{\rA \cD}(h_{\rA,\rt}(w)) - \Loss^{\gamma/2}_{\rA \rS}(h_{\rA,\rt}(w))$ by 
\begin{align*}
\sqrt{\Loss_{\rS}^\gamma(w) \cdot \frac{k + \ln(e/\delta)}{n}} + \frac{k + \ln(e/\delta)}{n} + \exp(-k/c).
\end{align*}
If we ignore the $\exp(-k/c)$ term and set $k = c' \gamma^{-2} \ln(e/\Loss_{\rS}^\gamma(w))$, this gives the tight bound in Theorem~\ref{thm:main}.

Unfortunately, we cannot afford to ignore the $\exp(-k/c)$ term and we need additional ideas for dealing with it. Recall that in the previous proof by~\cite{SVMbest}, it originates from bounding
\begin{align*}
\E_{\rA,\rt}[\Loss^{\gamma/2}_{\rA \cD}(h_{\rA,\rt}(w)) -\Loss^{\gamma/2}_{\rA S}(h_{\rA,\rt}(w)) ] &\leq 
\E_{\rA,\rt}[\sup_{h \in \Disc} \Loss^{\gamma/2}_{\rA \cD}(h) -\Loss^{\gamma/2}_{\rA S}(h) ]  + \Pr_{\rA,\rt}[h_{\rA,\rt}(w) \notin \Disc],
\end{align*}
and upper bounding $\Pr_{\rA,\rt}[h_{\rA,\rt}(w) \notin \Disc]$ by $\exp(-k/c)$. Here we instead consider an infinite sequence of discretizations/grids $\Disc_0,\Disc_1,\dots,$ and argue that the random rounding $\rA, \rt$ and training set $\rS$ is simultaneously \emph{good} (for some appropriate definition) for all grids with high probability. Here the grids $\Disc_i$ correspond to increasingly large norms of $h_{\rA,\rt}(w)$, i.e.\ $\Disc_i$ contains all vectors of norm at most $2^{i+1} \Balld$ and all coordinates integer multiples of $k^{-1/2}$. Multiple careful applications of Cauchy-Schwartz, Jensen's inequality and upper bounds on the probability that $h_{\rA,\rt}(w) \notin \Disc_i$ allows us to finally get rid of the $\exp(-k/c)$ factor.

\section{Main Proof}
\label{sec:mainproof}
We now set out to prove Theorem~\ref{thm:main} following the proof outline sketched in Section~\ref{sec:overview}. We start by a series of reductions that allow us to focus on a simpler task of establishing Theorem~\ref{thm:main} only for a small range of $\gamma$ and $\Loss_{\rS}^\gamma(w)$. We describe these reductions in Section~\ref{sec:reduct} and then proceed to the main arguments in Section~\ref{sec:mainargs}.

\subsection{Setup}
\label{sec:reduct}
When eventually bounding the Lipschitz constant, as discussed in Section~\ref{sec:overview}, the task turns out to be simpler if $\|x\|_2=1$ (and not just $\|x\|_2 \leq 1$) for all $x$ in the support of $\cD$ and if $|\langle w,x\rangle|$ is sufficiently smaller than $1$ for all hypotheses $w$ and data points $(x,y)$ in the support of $\cD$. We thus start by reducing to this case. 

Consider the following distribution $\cD'$ obtained by sampling an $(\rx,\ry) \sim \cD$ and replacing $\rx$ by $\rx'=(c_\gamma \rx) \times \{ \sqrt{1-c_\gamma^2\|\rx\|_2^2}\} \in \ncH$ for a sufficiently small constant $0 < c_\gamma < 1$. That is, scale down all coordinates of $\rx$ by $c_\gamma$ and append a $(d+1)$'st coordinate taking the value $\sqrt{1-c_\gamma^2\|\rx\|_2^2}$. Then the norm of the resulting point $\rx'$ is $\sqrt{c_\gamma^2\|\rx\|_2^2 + 1-c_\gamma^2\|\rx\|_2^2} = 1$. Similarly, for any $w \in \cH$, consider instead the hypothesis $w'=w \times \{0\}$. We observe that for any $x,w$, we have that $\langle w',x'\rangle = \langle w, c_\gamma x\rangle = c_\gamma\langle w, x \rangle$ and thus lies in the range $[-c_\gamma,c_\gamma]$ by Cauchy-Schwartz. This also implies that $\sign(\langle w', x'\rangle) = \sign(\langle w, x\rangle)$ and thus the generalization error of $w'$ under $\cD'$ and $w$ under $\cD$ are the same. 

With this in mind, we define $\finalH := \cH \times \{0\}$ and $\finalX$ as the set of all vectors $x$ in $\ncH$ where the norm of $x$ without its $(d+1)$'st coordinate is at most $c_\gamma$.

From hereon, we let $\cD$ be an arbitrary distribution over $\finalX \times \{-1,1\}$, and set out to prove that there is a constant $c>1$, such that with probability at least $1-\delta$ over $\rS \sim \cD^n$, it holds for all margins $\gamma \in (n^{-1/2}, c_\gamma]$ and all $w \in \finalH$ that
\begin{align}
\Loss_\cD(w) \leq \Loss_{\rS}^\gamma(w) + c \left(\sqrt{\Loss^{\gamma}_\rS(w) \cdot \left( \frac{ \ln(e/\Loss^{\gamma}_\rS(w))}{\gamma^2 n} + \frac{\ln(e/\delta)}{n} \right)} + \frac{\ln(e\gamma^2 n)}{\gamma^2 n} + \frac{\ln(e/\delta)}{n}  \right). \label{eq:maingoal}
\end{align}
Note that Theorem~\ref{thm:main} follows as an immediate corollary since margins change by a $c_\gamma$ factor in our transformation of the input distribution. Since $c_\gamma$ is a constant, this disappears in the constant factor $c$ in Theorem~\ref{thm:main} (note that for margins $\gamma \in [n^{-1/2}, c_\gamma^{-1} n^{-1/2})$ in Theorem~\ref{thm:main}, we cannot use the reduction, but here Theorem~\ref{thm:main} follows trivially as $c \ln(e \gamma^2 n)/(\gamma^2 n) > 1$ for sufficiently large $c$). 

\paragraph{Smaller Tasks.}
We now break the task of establishing~\eqref{eq:maingoal} into smaller tasks, where we consider margins $\gamma$ in a small range $(\gamma_i, \gamma_{i+1}]$ and only vectors $w \in \finalH$ with $\Loss^{(3/4)\gamma_i}_\cD(w)$ in a small range $(\ell_j,\ell_{j+1}]$. The purpose here is, that for one sub-task, we can treat margins and margin losses as the same to within constant factors. A union bound over all the sub-tasks then suffices to establish~\eqref{eq:maingoal}.

For a given distribution $\cD$, partition the range of values of the margin $\gamma \in (n^{-1/2},c_\gamma]$ into intervals $\Gamma_i = (2^{i-1} n^{-1/2}, 2^{i} n^{-1/2}]$ for $i=1,\dots,\lg_2(c_\gamma n^{1/2})$. Similarly, partition the possible values of $\Loss^\gamma_\cD(w) \in [0,1]$ into intervals $L_0 = [0, n^{-1}]$ and $L_i = (2^{i-1} n^{-1}, 2^{i} n^{-1}]$ with $i=1,\dots \lg_2 n$.

For a pair $(\Gamma_i, L_j)$ with $\Gamma_i = (\gamma_i, \gamma_{i+1}]$, define
\[
\subH(\Gamma_i,L_j) = \{ w \in \finalH : \Loss^{(3/4)\gamma_{i}}_\cD(w) \in L_j \}.
\]
For each pair $(\Gamma_i,L_j)$ we now prove an equivalent of~\eqref{eq:maingoal}, but tailored to the sub-task. The result is stated in the following lemma
\begin{lemma}
\label{lem:subgoal}
There is a constant $c>1$, such that for any $0 < \delta < 1$ and any pair $(\Gamma_i, L_j) = ((\gamma_i, \gamma_{i+1}], (\ell_j, \ell_{j+1}])$, it holds with probability at least $1-\delta$ over a random sample $\rS \sim \cD^n$ that
\begin{align}
\sup_{\substack{w \in \subH(\Gamma_i, L_j)\\\gamma \in \Gamma_i}} \left|\Loss_\cD(w) - \Loss^{\gamma}_\rS(w) \right| &\leq 
c \left(\sqrt{\ell_{j+1}\left( \frac{ \ln(e/\ell_{j+1})}{\gamma_{i+1}^2 n} + \frac{\ln(e/\delta)}{n} \right)} + \frac{\ln(e/\ell_{j+1})}{\gamma_{i+1}^2 n} + \frac{\ln(e/\delta)}{n} \right)\label{eq:subgoal}
\end{align}
\end{lemma}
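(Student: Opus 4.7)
The plan is to implement the decomposition strategy from Section~\ref{sec:overview} with parameters calibrated to the sub-task $(\Gamma_i,L_j)$. Write $\gamma := \gamma_{i+1}$ and $\ell := \ell_{j+1}$; since margins in $\Gamma_i$ and margin losses in $L_j$ vary by at most a factor $2$ within their respective intervals, working with these endpoints only costs absolute constants. Choose the Johnson--Lindenstrauss target dimension $k = C\bigl(\gamma^{-2}\ln(e/\ell)+\ln(e/\delta)\bigr)$ for a sufficiently large constant $C$, draw $\rA \in \R^{k \times (d+1)}$ with i.i.d.\ $\Norm(0,1/k)$ entries, and compose with the randomized rounding $h_{\rA,\rt}$ as in~\cite{SVMbest}. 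Fix $w \in \subH(\Gamma_i,L_j)$ and $\gamma \in \Gamma_i$; the refined identities in Section~\ref{sec:overview} split
\[
\Loss_\cD(w) - \Loss^\gamma_\rS(w) \leq (\mathrm{I}) + (\mathrm{II}) + (\mathrm{III}),
\]
where $(\mathrm{I}) = \Loss^{\gamma/2}_{\rA\cD}(h_{\rA,\rt}(w)) - \Loss^{\gamma/2}_{\rA\rS}(h_{\rA,\rt}(w))$ and $(\mathrm{II}),(\mathrm{III})$ are the two refined tail terms \eqref{eq:diffDS} and \eqref{eq:diffSD}.

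The first step is to control $(\mathrm{II})$ and $(\mathrm{III})$ via Rademacher complexity, which is where our key improvement lives. For each $w$, set
\[
\psi_w(x,y) = \indi{y\ipr{w,x}\leq 0}\cdot \Pr_{\rA,\rt}\bigl[y\ipr{h_{\rA,\rt}(w),\rA x} > \gamma/2 \bigr].
\]
Because inputs lie in $\finalX \subseteq \ncH$ and $w \in \finalH$, the rotational invariance of $\rA$ implies that the inner probability depends on $(w,x,y)$ only through the scalar $y\ipr{w,x}$. After replacing the discontinuous indicator by a ramp at scale $\gamma/4$, $\psi_w$ factors as $f(y\ipr{w,x})$ for an $L$-Lipschitz $f$ with $L \lesssim \gamma^{-1}\exp(-c\gamma^2 k)$, using the rounding distortion guarantee for the inner probability at margin $0$. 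Applying the contraction principle to the class $\{y\ipr{w,x}:w\in\finalH\}$ and a Bernstein-type concentration with variance proxy $\ell$, this gives $(\mathrm{II}),(\mathrm{III}) \lesssim \sqrt{\ell\cdot \exp(-c\gamma^2 k)/(\gamma^2 n)} + \ln(e/\delta)/n$, which by our choice of $k$ is within the target $\sqrt{\ell\ln(e/\ell)/(\gamma^2 n)} + \ln(e/\delta)/n$.

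The principal obstacle lies in bounding $(\mathrm{I})$: a single grid $\Disc$ costs a miss probability $\exp(-k/c)$, which for our aggressive choice of $k$ dominates $\ell$. To circumvent this, introduce the telescoping grids $\Disc_r = \{v \in k^{-1/2}\Z^k : \|v\|_2 \leq 2^{r+1}\}$ with $|\Disc_r| \leq \exp(c\cdot 4^r k)$ and $\Pr[h_{\rA,\rt}(w)\notin \Disc_r] \leq \exp(-c'\cdot 4^r k)$. For each $r$, apply Bernstein to the (finite) class $\Disc_r$ with second-moment proxy $\ell$ (points contributing to $\Loss^{\gamma/2}_{\rA\cD}$ but not to $\Loss_\cD^{(3/4)\gamma}(w)$ have mass at most $\exp(-c\gamma^2 k) \lesssim \ell$, absorbed into the proxy). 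Combining with the miss-probability tail through Cauchy--Schwartz and Jensen as sketched in Section~\ref{sec:overview}, the $r$-th term contributes $\sqrt{\ell(4^r k+\ln(e/\delta))/n}\cdot \exp(-c'4^r k/2) + (4^r k +\ln(e/\delta))\exp(-c'4^r k)/n$. This geometric series sums to $O\bigl(\sqrt{\ell(k+\ln(e/\delta))/n}+(k+\ln(e/\delta))/n\bigr)$ with no residual $\exp(-k/c)$ term.

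Finally, the same rounding and same three-term decomposition, with the roles of $\cD$ and $\rS$ exchanged, bounds $\Loss^\gamma_\rS(w) - \Loss_\cD(w)$, yielding the absolute value in~\eqref{eq:subgoal}. A union bound over $w \in \subH(\Gamma_i,L_j)$ is implicit in the Rademacher step of $(\mathrm{II}),(\mathrm{III})$ and the grid union in $(\mathrm{I})$, and the $\gamma$ supremum over $\Gamma_i$ is handled by using the single comparison value $\gamma_{i+1}/2$ (losing only a constant since $\Gamma_i$ has width a factor $2$). The hard part is unambiguously the multi-grid Cauchy--Schwartz step in $(\mathrm{I})$; every other piece is a tightening of arguments already present in~\cite{SVMbest} or a direct application of standard Rademacher/Bernstein machinery.
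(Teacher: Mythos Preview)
Your outline matches the paper's proof: the same three-term split (paper's \eqref{eq:middle}, \eqref{eq:phi}, \eqref{eq:rho}), Rademacher/contraction for the two tail terms via Lemma~\ref{lem:supPhi}, the multi-grid Cauchy--Schwarz/Jensen argument for the middle term via Lemma~\ref{lem:supW}, and the choice $k \asymp \gamma_{i+1}^{-2}\ln(e/\ell_{j+1})$ (the paper does \emph{not} add $\ln(e/\delta)$ to $k$, but your choice is harmless).

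Two imprecisions should be corrected. First, the bound on (II) and (III) does not come from any ``Bernstein-type concentration with variance proxy $\ell$''. The paper's Lemma~\ref{lem:supPhi} uses only the standard Rademacher bound with the range and Lipschitz constant of $\phi$, yielding $c\exp(-\gamma_{i+1}^2 k/c)\sqrt{(k+\gamma_{i+1}^{-2}+\ln(e/\delta))/n}$; the quantity $\ell_{j+1}$ never appears here and enters only after substituting the chosen $k$. Moreover the Lipschitz constant is $c(\sqrt{k}+\gamma_{i+1}^{-1})\exp(-\gamma_{i+1}^2 k/c)$, not just $\gamma^{-1}\exp(-c\gamma^2 k)$: the $\sqrt{k}$ piece is Lemma~\ref{lem:PhiLips} and is where most of the analytic effort in Section~\ref{sec:lip} lives. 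Second, in your telescoping sum for (I) the $r=0$ term must carry weight $\approx 1$, not $\exp(-c'k/2)$, or else the claimed total $O\bigl(\sqrt{\ell(k+\ln(e/\delta))/n}+(k+\ln(e/\delta))/n\bigr)$ cannot arise. More substantively, the delicate point in (I) is swapping ``high probability over $\rS$'' with ``expectation over $(\rA,\rt)$'': one cannot simply ``apply Bernstein to $\Disc_r$'' since $\rA$ is random inside the expectation. The paper handles this via the distortion/representative machinery plus Markov's inequality in Section~\ref{sec:meetinmid}; your reference to ``Cauchy--Schwartz and Jensen as sketched in Section~\ref{sec:overview}'' points at the right place but does not spell out this transfer step.
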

Observe that while~\eqref{eq:maingoal} depends on $\gamma$ and ~\eqref{eq:subgoal} depends on $\gamma_{i+1}$, this is fine since $\gamma \leq \gamma_{i+1}$ for all $\gamma \in \Gamma_i$. However, recall that $\subH(\Gamma_i,L_j)$ refers to $w \in \finalH$ with $\Loss_{\cD}^{(3/4)\gamma_i}(w) \in L_j = (\ell_j,\ell_{j+1}]$. But the $\ell_{j+1}$ terms in~\eqref{eq:subgoal} need to be replaced by $\Loss^\gamma_{\rS}(w)$ to obtain~\eqref{eq:maingoal}. Thus we relate the two via the following lemma
\begin{lemma}
\label{lem:subgoal2}
There is a constant $c>1$, such that for any $0 < \delta < 1$ and any $\Gamma_i = (\gamma_i, \gamma_{i+1}]$, it holds with probability at least $1-\delta$ over a random sample $\rS \sim \cD^n$ that
\begin{align}
\forall w \in \finalH : \Loss_{\rS}^{\gamma_i}(w) \geq \frac{\Loss_{ \cD}^{(3/4)\gamma_i}(w)}{4} - c \left(\frac{\ln(e\gamma_{i+1}^2 n)}{\gamma_{i+1}^2 n} - \frac{\ln(e/\delta)}{n}\right).\label{eq:subgoal2}
\end{align}
\end{lemma}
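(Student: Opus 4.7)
\textbf{Proof plan for Lemma~\ref{lem:subgoal2}.} I will adapt the random-discretization framework from Section~\ref{sec:overview}, but run it ``in reverse'': instead of upper-bounding the generalization gap, I lower-bound $\Loss_\rS^{\gamma_i}(w)$ by a constant multiple of $\Loss_\cD^{(3/4)\gamma_i}(w)$, minus an $O(\ln(e\gamma_{i+1}^2 n)/(\gamma_{i+1}^2 n) + \ln(e/\delta)/n)$ slack. Set $k = c_0 \gamma_{i+1}^{-2}\ln(e\gamma_{i+1}^2 n)$ for a large constant $c_0$, let $\rA,\rt$ denote the Gaussian JL matrix plus randomized rounding, and let $\Disc$ be the grid of rounded vectors with norm at most $2$, of size $|\Disc| \leq \exp(c_1 k)$. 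Two standard guarantees are in force: per $w$, $\Pr_{\rA,\rt}[h_{\rA,\rt}(w) \notin \Disc] \leq \exp(-k/c_1)$; and per data point $(x,y)$, the rounded margin differs from $y\langle w, x\rangle$ by more than $\gamma_{i+1}/8$ with probability only $\eta := \exp(-\gamma_{i+1}^2 k/c_2)$.

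\textbf{Grid-wise Bernstein and pullback.} For any fixed $\rA,\rt$ and $h\in\Disc$, $n\,\Loss_{\rA\rS}^{(5/8)\gamma_i}(h)$ is a sum of $n$ i.i.d.\ Bernoullis with mean $p_h := \Loss_{\rA\cD}^{(5/8)\gamma_i}(h)$ and variance at most $p_h$. Bernstein's inequality combined with $\sqrt{ab} \leq a/4 + b$ yields, with probability at least $1-\delta_0$ over $\rS$,
\[
\forall h \in \Disc: \quad \Loss_{\rA\rS}^{(5/8)\gamma_i}(h) \;\geq\; \tfrac{3}{4}\,p_h \,-\, c_3\,\frac{\ln(|\Disc|/\delta_0)}{n},
\]
and I will set $\delta_0 = \delta \cdot \ln(e/\delta)/n$. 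The pullback to $w\in\finalH$ is deterministic: for any $\rA,\rt,\rS,w$,
\[
\Loss_\rS^{\gamma_i}(w) \geq \Loss_{\rA\rS}^{(5/8)\gamma_i}(h_{\rA,\rt}(w)) - E_1, \quad \Loss_{\rA\cD}^{(5/8)\gamma_i}(h_{\rA,\rt}(w)) \geq \Loss_\cD^{(3/4)\gamma_i}(w) - E_2,
\]
where $E_1,E_2$ count only data points whose rounding crosses the $\gamma_{i+1}/8$-gap, so $\E_{\rA,\rt}[E_1],\E_{\rA,\rt}[E_2] \leq \eta$.

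\textbf{Combining via expectation over $\rA,\rt$.} By Fubini the joint failure probability of the Bernstein step is at most $\delta_0$, so Markov implies ``$\Pr_{\rA,\rt}[\text{Bernstein fails} \mid \rS] \leq \delta_0/\delta = \ln(e/\delta)/n$'' holds with probability at least $1-\delta$ over $\rS$. Conditioning on any such $\rS$, write the grid bound augmented by a trivial-case indicator that holds vacuously in the bad case since $\Loss_{\rA\rS}^{(5/8)\gamma_i}(\cdot) \geq 0$:
\[
\Loss_{\rA\rS}^{(5/8)\gamma_i}(h_{\rA,\rt}(w)) \;\geq\; \tfrac{3}{4}\Loss_{\rA\cD}^{(5/8)\gamma_i}(h_{\rA,\rt}(w)) - c_3\,\tfrac{\ln(|\Disc|/\delta_0)}{n} - \tfrac{3}{4}\indi{h_{\rA,\rt}(w)\notin\Disc \text{ or Bernstein fails}}.
\]
Taking expectation over $\rA,\rt$ bounds the indicator term by $\exp(-k/c_1) + \ln(e/\delta)/n$; combining with the two pullback inequalities and with $\Loss_\rS^{\gamma_i}(w) \geq \E_{\rA,\rt}[\Loss_{\rA\rS}^{(5/8)\gamma_i}(h_{\rA,\rt}(w))] - \eta$ yields
\[
\Loss_\rS^{\gamma_i}(w) \;\geq\; \tfrac{3}{4}\Loss_\cD^{(3/4)\gamma_i}(w) - O\!\left(\tfrac{\ln(|\Disc|/\delta_0)}{n} + \tfrac{\ln(e/\delta)}{n} + \exp(-k/c_1) + \eta\right).
\]
Plugging in the chosen $k$ makes every error term $\lesssim \ln(e\gamma_{i+1}^2 n)/(\gamma_{i+1}^2 n) + \ln(e/\delta)/n$; in particular the extra $\ln n/n$ hidden in $\ln(1/\delta_0)$ is absorbed because $\ln(e\gamma_{i+1}^2 n)/(\gamma_{i+1}^2 n) \gtrsim \ln n/n$ on the range $\gamma_{i+1}\in[n^{-1/2},1]$. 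Since $3/4 \geq 1/4$, the lemma follows.

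\textbf{The main obstacle} is choreographing the two sources of randomness. The Bernstein bound is naturally stated ``for each $(\rA,\rt)$, with high probability over $\rS$,'' yet the lemma demands probability $\geq 1-\delta$ over $\rS$ alone, and a union bound over the uncountable $w\in\finalH$ is unavailable. A naive Markov pass loses a $\sqrt{\delta}$ factor, which is too expensive; the key calibration is $\delta_0 = \delta \cdot \ln(e/\delta)/n$, which inflates $\ln(|\Disc|/\delta_0)$ only by an additive $\ln(n/\ln(e/\delta)) \lesssim \ln n$ (absorbed) while pinning the conditional-on-$\rS$ failure probability to exactly $\ln(e/\delta)/n$, matching the rate already present in the target bound.
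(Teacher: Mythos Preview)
Your proposal is correct and follows essentially the same framework as the paper's proof in Section~\ref{sec:withinconstant}: random discretization with $k \asymp \gamma_{i+1}^{-2}\ln(e\gamma_{i+1}^2 n)$, a multiplicative Bernstein bound uniformly over the grid, a Fubini/Markov swap to get a high-probability statement in $\rS$ alone, and deterministic pullback via Lemma~\ref{lem:concdiscretize}. The differences are cosmetic. The paper uses intermediate threshold $(7/8)\gamma_i$ rather than your $(5/8)\gamma_i$; it runs Markov with the constant threshold $1/4$ and $\alpha=\delta/4$, then \emph{conditions} on the good event $\{(\rA,\rt)\in G_w\}\cap\{\rA\notin U_S\}$ of probability $\geq 1/2$, which is where the leading factor $1/4$ comes from. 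Your version instead takes an expectation over $(\rA,\rt)$ and absorbs the bad event through the indicator, which is slightly cleaner and yields a larger leading constant $3/4$. Your calibration $\delta_0=\delta\cdot\ln(e/\delta)/n$ is more delicate than the paper's but buys nothing here, since the paper's cruder choice already lands the $\ln(e/\delta)/n$ term; either works. One minor slip: the smaller of your two margin gaps is $(3/4-5/8)\gamma_i=(1/16)\gamma_{i+1}$, not $\gamma_{i+1}/8$, but this only changes constants in the exponent and does not affect the argument.
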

We combine the sub-tasks and conclude
\begin{claim}
\label{clm:union}
    For any $0 < \delta < 1$, it holds with probability $1-\delta$ over $\rS \sim \cD^n$ that~\eqref{eq:subgoal} and~\eqref{eq:subgoal2} simultaneously hold for all $(\Gamma_i,L_j)$ and $\Gamma_i$, with slightly different constants $c$.
\end{claim}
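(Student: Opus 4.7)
The plan is a straightforward union bound over the sub-tasks. First I would count the sub-tasks: the partition contains at most $M_1 := \lceil \log_2(c_\gamma n^{1/2})\rceil = O(\log n)$ margin intervals $\Gamma_i$ and at most $M_2 := \lceil \log_2 n \rceil + 1 = O(\log n)$ loss intervals $L_j$, for a total of at most $M_1 M_2 = O(\log^2 n)$ pairs $(\Gamma_i,L_j)$. I would then instantiate Lemma~\ref{lem:subgoal} at failure probability $\delta_{i,j} := \delta/(4 M_1 M_2)$ for each pair, and Lemma~\ref{lem:subgoal2} at failure probability $\delta_i := \delta/(4 M_1)$ for each $\Gamma_i$. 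Summing the $M_1 M_2 + M_1 \le 2 M_1 M_2$ failure probabilities gives that, with probability at least $1-\delta$, every single sub-task bound holds simultaneously.

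The remaining task is to rewrite the resulting bounds in terms of $\ln(e/\delta)$ alone, which is what "slightly different constants $c$" is meant to absorb. Since $\ln(e/\delta_{i,j}) \le \ln(e/\delta) + \ln(4 M_1 M_2) = \ln(e/\delta) + O(\log \log n)$ and similarly for $\delta_i$, each occurrence of $\ln(e/\delta)/n$ in the statements of~\eqref{eq:subgoal} and~\eqref{eq:subgoal2} picks up an additive $O(\log \log n / n)$ overhead (for the square-root term in~\eqref{eq:subgoal}, one splits via $\sqrt{a+b}\le \sqrt a+ \sqrt b$ to isolate an $\sqrt{\ell_{j+1}\log\log n/n}$ term, which is in turn at most the larger of $\ell_{j+1}$ and $\log\log n/n$).

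The only delicate point, and the main obstacle, is absorbing this $O(\log \log n / n)$ overhead. For~\eqref{eq:subgoal2} the absorption is immediate: since $\gamma_{i+1} \le c_\gamma \le 1$, the map $x\mapsto \ln(ex)/x$ is decreasing on $[1,\infty)$, so $\log \log n / n \le \ln(en)/n \le \ln(e\gamma_{i+1}^2 n)/(\gamma_{i+1}^2 n)$, and the overhead folds into the $\ln(e\gamma_{i+1}^2 n)/(\gamma_{i+1}^2 n)$ term already present at the cost of inflating $c$ by an absolute constant. For~\eqref{eq:subgoal} there is no analogous $\ln(e\gamma^2 n)/(\gamma^2 n)$ term in the stated bound, so the cleanest route is to fold the $\log\log n$ into the stated constant $c$ (which is the natural reading of "slightly different constants" in the claim); this leftover $O(\log\log n/n)$ will ultimately be swallowed by the $\ln(e\gamma^2 n)/(\gamma^2 n)$ term of Theorem~\ref{thm:main} when~\eqref{eq:subgoal} and~\eqref{eq:subgoal2} are combined to establish~\eqref{eq:maingoal}. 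Everything besides this bookkeeping is a direct union bound.
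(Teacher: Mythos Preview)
Your uniform allocation $\delta_{i,j}=\delta/(4M_1M_2)$ is a different route from the paper's. The paper instead chooses the non-uniform
\[
\delta_{i,j}=(\delta/e)^3\exp\!\big(-\gamma_{i+1}^{-2}\ln(e/\ell_{j+1})\big),\qquad
\delta_i=(\delta/e)^3\exp\!\big(-\gamma_{i+1}^{-2}\ln(e\gamma_{i+1}^2 n)\big),
\]
so that $\ln(e/\delta_{i,j})=3\ln(e/\delta)+\gamma_{i+1}^{-2}\ln(e/\ell_{j+1})$ and $\ln(e/\delta_i)=3\ln(e/\delta)+\gamma_{i+1}^{-2}\ln(e\gamma_{i+1}^2 n)$. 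The extra pieces are then \emph{already present} in~\eqref{eq:subgoal} and~\eqref{eq:subgoal2}, so one only multiplies existing terms by factors $2$ or $3$ --- genuinely ``slightly different constants'' --- and the doubly-geometric sums $\sum_{i,j}\delta_{i,j}$ and $\sum_i\delta_i$ still converge to at most $\delta$.

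Your approach introduces an additive $O(\log\log n)/n$ which, as you note yourself, cannot be folded into an absolute constant $c$: $\log\log n$ is not a constant, so the claim as literally stated (``with slightly different constants'') is not established by your argument. Your fallback --- carry the extra $O(\log\log n)/n$ forward and absorb it into the $\ln(e\gamma^2 n)/(\gamma^2 n)$ term of~\eqref{eq:maingoal} --- is correct for the final theorem (since $\ln(e\gamma^2 n)/(\gamma^2 n)\ge \ln(en)/n\gg (\log\log n)/n$), but it slightly weakens the intermediate Claim~\ref{clm:union} and requires adjusting the proof of Claim~\ref{clm:combine} accordingly. The paper's non-uniform choice is what makes the claim hold exactly as written; it is worth knowing this trick of matching $\delta_{i,j}$ to the terms already in the bound rather than spreading $\delta$ uniformly.
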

Since Claim~\ref{clm:union} follows by a simple union bound, exploiting that for different values of $\ell_{j+1}$ and $\gamma_{i+1}$, we can afford to use different $\delta_{i,j} \approx \delta \exp(-\gamma_{i+1}^2 \ln(e/\ell_{j+1}))$ and $\delta_i \approx \delta \exp(-\gamma_{i+1}^{-2} \ln(e \gamma^2_{i+1} n))$, we have deferred the proof to Appendix~\ref{sec:aux}.

A simple combination of~\eqref{eq:subgoal} and~\eqref{eq:subgoal2} now gives
\begin{claim}
\label{clm:combine}
For any $0 < \delta < 1$ and training set $S$, if~\eqref{eq:subgoal} and~\eqref{eq:subgoal2} hold simultaneously for all $(\Gamma_i, L_j)$ and $\Gamma_i$, then~\eqref{eq:maingoal} holds for all $\gamma \in (n^{-1/2}, c_\gamma]$ and all $w \in \finalH$ for large enough constant $c>1$ in~\eqref{eq:maingoal}.
\end{claim}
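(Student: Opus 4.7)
The plan is to fix $w \in \finalH$ and $\gamma \in (n^{-1/2}, c_\gamma]$, identify the unique indices $i$ and $j$ with $\gamma \in \Gamma_i$ and $w \in \subH(\Gamma_i, L_j)$, and then dispatch to either \eqref{eq:subgoal} or the elementary bound $\Loss_\cD(w) \leq \Loss_\cD^{(3/4)\gamma_i}(w) \leq \ell_{j+1}$, depending on whether $\ell_{j+1}$ is large relative to the Lemma~\ref{lem:subgoal2} slack $\ln(e\gamma_{i+1}^2 n)/(\gamma_{i+1}^2 n) + \ln(e/\delta)/n$. Since $\gamma \leq \gamma_{i+1} = 2\gamma_i \leq 2\gamma$, every $\gamma_{i+1}$-indexed quantity appearing in the two sub-lemmas differs from its $\gamma$-indexed counterpart in \eqref{eq:maingoal} only by an absolute constant, so I can freely substitute one for the other throughout the argument.

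In the ``small $\ell_{j+1}$'' regime, where $\ell_{j+1}$ is at most a suitably large constant multiple of $\ln(e\gamma_{i+1}^2 n)/(\gamma_{i+1}^2 n) + \ln(e/\delta)/n$, I would use only the monotone inclusion $\Loss_\cD(w) = \Loss_\cD^0(w) \leq \Loss_\cD^{(3/4)\gamma_i}(w) \leq \ell_{j+1}$, which by the regime assumption already sits below the additive part of the right-hand side of \eqref{eq:maingoal}; no use of \eqref{eq:subgoal} is needed. In the complementary ``large $\ell_{j+1}$'' regime, combining \eqref{eq:subgoal2} with $\Loss_\cD^{(3/4)\gamma_i}(w) > \ell_j = \ell_{j+1}/2$ and choosing the regime constant large enough to absorb the Lemma~\ref{lem:subgoal2} constant yields $\Loss_\rS^{\gamma_i}(w) > \ell_{j+1}/16$, and since $\gamma \geq \gamma_i$ this forces $\ell_{j+1} \leq 16\Loss_\rS^\gamma(w)$. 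The regime hypothesis also forces $\ell_{j+1} \gtrsim 1/(\gamma_{i+1}^2 n)$, so $\ln(e/\ell_{j+1})/(\gamma_{i+1}^2 n) \lesssim \ln(e\gamma^2 n)/(\gamma^2 n)$, and the linear tail of the \eqref{eq:subgoal} bound is automatically absorbed into the additive part of \eqref{eq:maingoal}.

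The only nontrivial step is then matching the square-root term $\sqrt{\ell_{j+1}(\ln(e/\ell_{j+1})/(\gamma_{i+1}^2 n) + \ln(e/\delta)/n)}$ of \eqref{eq:subgoal} against the target $\sqrt{\Loss_\rS^\gamma(w)(\ln(e/\Loss_\rS^\gamma(w))/(\gamma^2 n) + \ln(e/\delta)/n)}$ of \eqref{eq:maingoal}. The $\ln(e/\delta)/n$-contribution is immediate from $\ell_{j+1} \leq 16\Loss_\rS^\gamma(w)$, but the main piece requires comparing the two products $\ell_{j+1}\ln(e/\ell_{j+1})$ and $\Loss_\rS^\gamma(w)\ln(e/\Loss_\rS^\gamma(w))$. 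Here I would exploit that $x \mapsto x\ln(e/x)$ is monotonically increasing on $(0,1]$: when $\Loss_\rS^\gamma(w) \leq 1/16$ the inequality $\ell_{j+1} \leq 16\Loss_\rS^\gamma(w) \leq 1$ directly gives $\ell_{j+1}\ln(e/\ell_{j+1}) \leq 16\Loss_\rS^\gamma(w)\ln(e/\Loss_\rS^\gamma(w))$, while when $\Loss_\rS^\gamma(w) > 1/16$ both products are trapped in a constant range (the left side is $\leq 1$, the right side is $\Omega(1)$ on $[1/16,1]$), so the same inequality holds up to an absolute constant. The hard part will be exactly this comparison: the naive substitution $\ell_{j+1} \leq 16\Loss_\rS^\gamma(w)$ inside $\ln(e/\ell_{j+1})$ inflates the logarithm in the wrong direction, forcing the argument to operate on the product $x\ln(e/x)$ as a whole, and it is precisely the geometric partition $L_j = (2^{j-1}/n, 2^j/n]$ that makes this monotonicity-based comparison available.
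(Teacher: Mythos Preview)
Your plan is correct and matches the paper's proof in structure: the same case split on whether $\ell_{j+1}$ (equivalently $\Loss_\cD^{(3/4)\gamma_i}(w)$) dominates the additive slack $\ln(e\gamma_{i+1}^2 n)/(\gamma_{i+1}^2 n) + \ln(e/\delta)/n$, the same trivial bound $\Loss_\cD(w)\le \Loss_\cD^{(3/4)\gamma_i}(w)\le \ell_{j+1}$ in the small case, and the same use of~\eqref{eq:subgoal2} in the large case to obtain $\ell_{j+1}\le 16\,\Loss_\rS^\gamma(w)$.

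The one place you diverge is precisely what you flag as the hard part. The paper does not appeal to monotonicity of $x\ln(e/x)$; instead it also establishes the \emph{reverse} comparison $\ell_{j+1}\gtrsim \Loss_\rS^\gamma(w)$, by applying~\eqref{eq:subgoal} in the direction $\Loss_\rS^\gamma(w)\le \Loss_\cD(w)+c(\cdots)$ and using the large-regime hypothesis to bound the entire bracket by $\Loss_\cD^{(3/4)\gamma_i}(w)$, which yields $\Loss_\rS^\gamma(w)\le 3c_{\ref{lem:subgoal}}\,\Loss_\cD^{(3/4)\gamma_i}(w)\le 3c_{\ref{lem:subgoal}}\,\ell_{j+1}$. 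With the resulting two-sided sandwich $\ell_{j+1}\asymp \Loss_\rS^\gamma(w)$, one substitutes separately into the factor $\ell_{j+1}$ (upper bound) and into $\ln(e/\ell_{j+1})$ (lower bound on $\ell_{j+1}$), with no auxiliary case split on the size of $\Loss_\rS^\gamma(w)$. Your monotonicity route is valid, but the paper's two-sided trick is shorter and sidesteps the issue you identified.
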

Claim~\ref{clm:combine} follows by using that $\gamma \leq \gamma_{i+1}$ for $\gamma \in \Gamma_i$, and by using Lemma~\ref{lem:subgoal2} to relate all occurrences of $\ell_{j+1}$ in~\eqref{eq:subgoal} to $\Loss^\gamma_S(w)$. As this is rather straight forward calculations, we have deferred the proof to Appendix~\ref{sec:aux}.

What remains is thus to establish Lemma~\ref{lem:subgoal} and Lemma~\ref{lem:subgoal2}, where we may now focus on a small range of $\gamma$ and $\Loss^{(3/4)\gamma_i}_{\cD}(w)$. While both require substantial work and non-trivial arguments, the proof of Lemma~\ref{lem:subgoal2} follows mostly the previous work by~\cite{SVMbest} and has thus been deferred to Section~\ref{sec:withinconstant}.

\subsection{Random Discretization}
\label{sec:mainargs}
We now set out to prove Lemma~\ref{lem:subgoal}. So let $0 < \delta < 1$, and fix a pair $(\Gamma_i, L_j)$. Following the proof outline in Section~\ref{sec:overview}, we now consider the following random discretization of hypotheses in $\subH(\Gamma_i, L_j)$: Let $k= k(i,j)$ be an integer parameter to be determined. Sample a random $k \times d$ matrix $\rA$ with each entry $\Norm(0,1/k)$ distributed as well as $k$ random offsets $\rt = (\rt_1,\dots,\rt_k)$ all independent and uniformly distributed in $[0,1]$.

Let $\Disc$ be the set of all vectors in $\R^k$ with coordinates in 
\[
\{(1/2)(10 \sqrt{k})^{-1} + z (10 \sqrt{k})^{-1}  \mid z \in \Ints\}.
\]
For $w \in \finalH$ and an outcome $(A,t)$ of $(\rA,\rt)$, define $h_{A,t}(w) \in \Disc$ as the vector obtained as follows: Consider each coordinate $(Aw)_i$ and let $z_i$ denote the integer such that 
\[
(1/2)(10 \sqrt{k})^{-1}  + z_i (10 \sqrt{k})^{-1} \leq (Aw)_i < (1/2)(10 \sqrt{k})^{-1}  + (z_i+1) (10\sqrt{k})^{-1}.
\]
Let $(h_{A,t}(w))_i$ equal $(1/2)(10 \sqrt{k})^{-1} + z_i (10\sqrt{k})^{-1}$ if $t_i \leq p(z_i)$ ($(Aw)_i$ rounded down) and otherwise let it equal $(1/2)(10 \sqrt{k})^{-1}  + (z_i + 1)(10\sqrt{k})^{-1}$. We choose $p(z_i) \in [0,1]$ such that 
\begin{align}
(Aw)_i &= 
p(z_i)\left(\frac{1}{2 \cdot 10 \sqrt{k}} +\frac{z_i}{10\sqrt{k}}\right) + (1-p(z_i))\left(\frac{1}{2 \cdot 10 \sqrt{k}} +\frac{z_i+1}{10\sqrt{k}}\right) \label{eq:expectround}
\end{align}
i.e.\ for fixed $A$, the expected value of the coordinates satisfy $\E_{\rt}[(h_{A,\rt}(w))_i]=(Aw)_i$. 
\begin{remark}
\label{rmk:pIsProb}
The value $p(z_i)$ satisfying~\eqref{eq:expectround} has $p(z_i) \in [0,1]$.
\end{remark}
We thus have that $p(z_i)$ is a well-defined probability. We prove Remark~\ref{rmk:pIsProb} in Appendix~\ref{sec:aux}. The random discretization has the desirable property that it approximately preserves margins/inner products as stated in the following
\begin{lemma}
\label{lem:concdiscretize}
    There is a constant $c>0$, such that for any integer $k \geq 1$, $w \in \finalH, x \in \finalX$ and any $\gamma \in (0,1]$, it holds that 
    $
    \Pr_{\rA,\rt}[|\langle h_{\rA,\rt}(w),\rA x\rangle - \langle w, x\rangle| > \gamma] < c\exp(-\gamma^2 k/c)
    $.
\end{lemma}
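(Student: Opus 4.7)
The plan is to split the error via the triangle inequality,
\[
|\langle h_{\rA,\rt}(w), \rA x\rangle - \langle w, x\rangle| \leq |\langle h_{\rA,\rt}(w) - \rA w, \rA x\rangle| + |\langle \rA w, \rA x\rangle - \langle w, x\rangle|,
\]
and to show that each of the two pieces exceeds $\gamma/2$ with probability at most $c\exp(-\gamma^2 k/c)$, so that a union bound finishes the lemma.

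The second piece is the usual Johnson--Lindenstrauss deviation. Because $\|w\|_2 = \|x\|_2 = 1$ in our reduced setup, the standard subgaussian inner product concentration of \cite{DG03} gives directly that $\Pr_{\rA}[|\langle \rA w,\rA x\rangle - \langle w,x\rangle| > \gamma/2] \leq c\exp(-\gamma^2 k/c)$, which is exactly the fact the overview already invoked.

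The first piece is where the randomized rounding comes in, and this is the main work. Conditional on $\rA$, the value $(h_{\rA,\rt}(w))_i - (\rA w)_i$ depends only on the independent offset $\rt_i$; by the defining equation \eqref{eq:expectround} for $p(z_i)$ it has conditional mean $0$, and it is bounded in absolute value by the grid spacing $1/(10\sqrt{k})$. Hence, conditional on $\rA$,
\[
\langle h_{\rA,\rt}(w) - \rA w, \rA x\rangle = \sum_{i=1}^k \big((h_{\rA,\rt}(w))_i - (\rA w)_i\big) (\rA x)_i
\]
is a sum of $k$ independent mean-zero random variables with the $i$-th summand bounded by $|(\rA x)_i|/(10\sqrt{k})$. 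Hoeffding's inequality then yields
\[
\Pr_{\rt}\!\left[|\langle h_{\rA,\rt}(w) - \rA w, \rA x\rangle| > \gamma/2 \;\middle|\; \rA\right] \leq 2\exp\!\left(-\frac{c_1 \gamma^2 k}{\|\rA x\|_2^2}\right)
\]
for an absolute constant $c_1>0$. To remove the random denominator I would condition on the good event $\{\|\rA x\|_2 \leq 2\}$, which (once again by \cite{DG03}, now with $\eps=1$ and $u=v=x$) fails only with probability $\exp(-k/c)$; on that event the conditional bound becomes $2\exp(-c_2\gamma^2 k)$.

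Combining the three sources of error---the JL inner product approximation, the norm preservation of $\rA x$, and the conditional Hoeffding bound---via a single union bound produces the claimed probability $c\exp(-\gamma^2 k/c)$ after calibrating constants. The main obstacle is purely that bookkeeping step, since the structural point---that after conditioning on $\rA$ the per-coordinate rounding errors are genuinely independent zero-mean random variables bounded by exactly the grid spacing $1/(10\sqrt{k})$---is forced by the construction, and that scale is precisely what makes Hoeffding's rate match the subgaussian JL rate.
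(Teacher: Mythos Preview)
Your proposal is correct and follows essentially the same approach as the paper's proof: both split via the triangle inequality into the JL inner-product deviation $|\langle \rA w,\rA x\rangle-\langle w,x\rangle|$ and the rounding deviation $|\langle h_{\rA,\rt}(w)-\rA w,\rA x\rangle|$, handle the former by Gaussian norm/inner-product concentration (the paper re-derives it from chi-square tails via the polar identity, you cite \cite{DG03} directly), and handle the latter by conditioning on $\rA$, observing that the per-coordinate rounding errors are independent, mean-zero by \eqref{eq:expectround}, and supported in an interval of length $(10\sqrt{k})^{-1}$, then applying Hoeffding together with a bound on $\|\rA x\|_2$. The only differences are cosmetic (the paper bundles the $\|\rA x\|_2$ control into the same chi-square event it uses for inner-product preservation), so your bookkeeping plan would go through exactly as you describe.
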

The proof of Lemma~\ref{lem:concdiscretize} follows the work by~\cite{AK17} in their work on lower bounds for the Johnson-Lindenstrauss transform, and has thus been deferred to Appendix~\ref{sec:aux}. We now observe that
\begin{align*}
    \Loss_{\cD}(w) &= \Loss^{\gamma_i/2}_{\rA \cD}(h_{\rA,\rt}(w)) + \Pr_{(\rx,\ry) \sim \cD}[y \langle h_{\rA,\rt}(w), \rA \rx\rangle > \gamma_i/2 \wedge \ry \langle w, \rx\rangle \leq 0] \\
    &- \Pr_{(\rx,\ry) \sim \cD}[\ry \langle h_{\rA,\rt}(w), \rA \rx\rangle \leq \gamma_i/2 \wedge \ry \langle w, \rx\rangle > 0].
\end{align*}
Similarly, we have for $\gamma \in \Gamma_i$ and any training set $S$ that
\begin{align*}
    \Loss_{S}^\gamma(w) &= \Loss^{\gamma_i/2}_{\rA S}(h_{\rA,\rt}(w)) + \Pr_{(\rx,\ry) \sim S}[\ry \langle h_{\rA,\rt}(w), \rA \rx\rangle > \gamma_i/2 \wedge \ry \langle w, \rx\rangle \leq \gamma] \\
    &- \Pr_{(\rx,\ry) \sim S}[\ry \langle h_{\rA,\rt}(w), \rA\rx\rangle \leq \gamma_i/2 \wedge \ry \langle w, \rx\rangle > \gamma].
\end{align*}
We now have for any $\gamma \in \Gamma_i$ that
\begin{align}
\sup_{w \in \subH(\Gamma_i,L_j)} \Loss_\cD(w) - \Loss^\gamma_S(w) =& \nonumber \\ 
\sup_{w \in \subH(\Gamma_i,L_j)} \big(\E_{\rA,\rt} [\Loss^{\gamma_i/2}_{\rA \cD}(h_{\rA,\rt}(w)) - \Loss^{\gamma_i/2}_{\rA S}(h_{\rA,\rt}(w))] +& \nonumber\\ \E_{\rA,\rt} [\Pr_{\cD}[\ry \langle h_{\rA,\rt}(w), \rA \rx\rangle > \gamma_i/2 \wedge \ry \langle w, \rx\rangle \leq 0] -\Pr_{S}[\ry \langle h_{\rA,\rt}(w), \rA \rx\rangle > \gamma_i/2 \wedge \ry \langle w, \rx\rangle \leq \gamma] ] +& \nonumber\\
\E_{\rA,\rt} [\Pr_{S}[\ry \langle h_{\rA,\rt}(w), \rA \rx\rangle \leq \gamma_i/2 \wedge \ry \langle w, \rx\rangle > \gamma] -\Pr_{\cD}[\ry \langle h_{\rA,\rt}(w), \rA \rx\rangle \leq \gamma_i/2 \wedge \ry \langle w, \rx\rangle > 0] ]\big).\label{eq:3terms}
\end{align}
A critical observation is that the distribution of $y \langle h_{\rA,\rt}(w),\rA x \rangle$ depends only on $y \langle w, x \rangle$. 
\begin{claim}
\label{clm:distDeterm}
For any $(x,y) \in \finalX \times \{-1,1\}$ and any $w \in \finalH$, the distribution of $y \langle h_{\rA,\rt}(w),\rA x \rangle$ is completely determined from $y \langle w, x \rangle$.
\end{claim}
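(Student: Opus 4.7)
The plan is to use the rotational invariance of the Gaussian matrix $\rA$, together with the fact that the reduction in Section~\ref{sec:reduct} ensures $\|w\|_2 = 1$ for all $w \in \finalH$ and $\|x\|_2 = 1$ for all $x \in \finalX$. Under these unit-norm constraints, the joint distribution of $(\rA w, \rA x)$ will be seen to depend only on $\langle w, x\rangle$, and since the rounding step producing $h_{\rA,\rt}(w)$ uses only $\rA w$ and the independent randomness $\rt$, the same will hold for $(h_{\rA,\rt}(w), \rA x)$ and hence for $\langle h_{\rA,\rt}(w), \rA x\rangle$.

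Concretely, I would first fix any pair $(w,x)$ and $(w',x')$ with $\|w\|_2=\|w'\|_2=\|x\|_2=\|x'\|_2=1$ and $\langle w,x\rangle=\langle w',x'\rangle$, and choose an orthogonal matrix $U \in \R^{(d+1)\times(d+1)}$ mapping $w \mapsto w'$ and $x \mapsto x'$ (such a $U$ exists because the two pairs span isometric $2$-dimensional subspaces with matching Gram matrices). Since the rows of $\rA$ are i.i.d.\ $\Norm(0, I_{d+1}/k)$ and the isotropic Gaussian is invariant under orthogonal transformations, $\rA U \eqd \rA$, so $(\rA w', \rA x') = (\rA U w, \rA U x) \eqd (\rA w, \rA x)$.

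Next, I would observe that by its definition, $h_{\rA,\rt}(w)$ is a deterministic (measurable) function $F(\rA w, \rt)$ of $\rA w$ and $\rt$ alone, where $\rt$ is independent of $\rA$. Combined with the previous step, this gives
\[
(h_{\rA,\rt}(w), \rA x) = (F(\rA w, \rt), \rA x) \eqd (F(\rA w', \rt), \rA x') = (h_{\rA,\rt}(w'), \rA x'),
\]
and therefore $\langle h_{\rA,\rt}(w), \rA x\rangle \eqd \langle h_{\rA,\rt}(w'), \rA x'\rangle$ whenever the two pairs have equal inner product. This proves the claim in the case $y=1$.

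For $y=-1$, I would handle the sign by the symmetry $x \mapsto -x$: setting $x' = -x$ gives $\|x'\|_2 = 1$ and $\langle w, x'\rangle = -\langle w, x\rangle = y\langle w, x\rangle$, while $y\langle h_{\rA,\rt}(w), \rA x\rangle = -\langle h_{\rA,\rt}(w), \rA x\rangle = \langle h_{\rA,\rt}(w), \rA x'\rangle$. Applying the previously established invariance to $(w, x')$ shows that this distribution depends only on $\langle w, x'\rangle = y\langle w, x\rangle$, as required. The only subtlety worth highlighting is the (trivial) check that $U$ with the required two-point action exists — this is where unit norms of both $w$ and $x$ are essential, and is precisely the reason the earlier reduction mapped everything to $\ncH$.
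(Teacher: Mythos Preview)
Your argument is correct. Both the paper's proof and yours rest on the same underlying fact---rotational invariance of the isotropic Gaussian together with the unit-norm normalization of $w$ and $x$---but you package it differently. The paper performs an explicit Gram--Schmidt step, writing $x = \langle w,x\rangle w + u$ with $u\perp w$, and then exhibits a concrete sampling procedure: draw $\rX_i\sim\Norm(0,1/k)$ and $\rY_i\sim\Norm(0,(1-\langle w,x\rangle^2)/k)$ independently, set $\rZ_i=y\langle w,x\rangle\rX_i+\rY_i$, and round $\rX$ to $\rX'$; then $y\langle h_{\rA,\rt}(w),\rA x\rangle \eqd \langle \rX',\rZ\rangle$. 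You instead invoke the action of the orthogonal group directly, arguing that any two unit-vector pairs with equal inner product are related by some $U\in O(d{+}1)$ and using $\rA U\eqd\rA$; the $y=-1$ case you absorb via $x\mapsto -x$. Your route is a bit cleaner for the bare claim and avoids coordinates entirely. The paper's route, however, is not just a proof but a setup: the variables $\rX,\rX',\rY,\rM,\rN$ introduced there are reused verbatim in Section~\ref{sec:lip} to write $\Pr[y\langle h_{\rA,\rt}(w),\rA x\rangle>\gamma_i/2]$ as an explicit integral in $\alpha=y\langle w,x\rangle$ and to differentiate it. If you adopt your abstract proof for the claim itself, you would still need to derive that explicit representation separately before the Lipschitz analysis can proceed.
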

We prove Claim~\ref{clm:distDeterm} in Section~\ref{sec:lip} by exploiting that the entries of $\rA$ are i.i.d.\ $\Norm(0,1/k)$ distributed and using the rotational invariance of the Gaussian distribution.

As outlined in the proof overview in Section~\ref{sec:overview}, we can now use Claim~\ref{clm:distDeterm} together with the contraction inequality of Rademacher complexity to bound several of the terms in~\eqref{eq:3terms}. Similarly to the introduction of the ramp loss in classic proofs of generalization for large-margin halfspaces, we need to introduce a continuous function upper bounding the probabilities above. With this in mind, we now define the following functions $\phi$ and $\rho$:
\[
\phi(\alpha) = \begin{cases} \Pr_{\rA, \rt}[y \langle h_{\rA,\rt}(w), \rA x\rangle > \gamma_i/2 \mid y \langle w, x \rangle = \alpha] & \text{if } -c_\gamma \leq \alpha \leq 0 \\
                      \frac{(\gamma_i-\alpha)}{\gamma_i}\Pr_{\rA, \rt}[y \langle h_{\rA,\rt}(w), \rA x\rangle > \gamma_i/2 \mid y \langle w, x \rangle = 0]                                    & \text{if } 0 < \alpha \leq \gamma_i      \\
                      0 & \text{if } \gamma_i < \alpha \leq c_\gamma
        \end{cases}
\]
\[
\rho(\alpha) = \begin{cases} \Pr_{\rA, \rt}[y \langle h_{\rA,\rt}(w), \rA x\rangle \leq \gamma_i/2 \mid y \langle w, x \rangle = \alpha] & \text{if } \gamma_i < \alpha \leq c_\gamma\\
                      \frac{\alpha}{\gamma_i}\Pr_{\rA, \rt}[y \langle h_{\rA,\rt}(w), \rA x\rangle \leq \gamma_i/2 \mid y \langle w, x \rangle = \gamma_i]                                    & \text{if } 0 < \alpha \leq \gamma_i      \\
                      0 & \text{if } -c_\gamma \leq \alpha \leq 0
        \end{cases}
\]
Here we slightly abuse notation and write $\Pr_{\rA, \rt}[y \langle h_{\rA,\rt}(w), \rA x\rangle > \gamma_i/2 \mid y \langle w, x \rangle = \alpha]$ to denote the probability $\Pr_{\rA, \rt}[y \langle h_{\rA,\rt}(w), \rA x\rangle > \gamma_i/2]$ for an arbitrary $w \in \finalH, (x,y) \in \finalX \times \{-1,1\}$ with $y \ipr{w,x}=\alpha$ and remark that this probability is the same for all such $w,x,y$ by Claim~\ref{clm:distDeterm}.

We now observe that $\phi$ and $\rho$ upper and lower bounds the terms in~\eqref{eq:3terms}
\begin{remark}
\label{rmk:phirho}
For any training set $S$ and distribution $\cD$ over $\finalX \times \{-1,1\}$, we have
\begin{align*}
    \E_{\rA,\rt} [\Pr_{(\rx,\ry) \sim \cD}[\ry \langle h_{\rA,\rt}(w), \rA \rx\rangle > \gamma_i/2 \wedge \ry \langle w, \rx\rangle \leq 0]] &\leq\E_{(\rx,\ry) \sim \cD}[\phi(\ry \langle w, \rx \rangle)] \\
    \E_{\rA,\rt} [\Pr_{(\rx,\ry) \sim S}[\ry \langle h_{\rA,\rt}(w), \rA \rx\rangle > \gamma_i/2 \wedge \ry \langle w, \rx\rangle \leq \gamma]] &\geq \E_{(\rx,\ry) \sim S}[\phi(\ry \langle w, \rx \rangle)] \\
    \E_{\rA,\rt} [\Pr_{(\rx,\ry) \sim S}[\ry \langle h_{\rA,\rt}(w), \rA \rx\rangle \leq \gamma_i/2 \wedge \ry \langle w, \rx\rangle > \gamma]] &\leq \E_{(\rx, \ry) \sim S}[\rho(\ry \langle w, \rx \rangle)]\\
    \E_{\rA,\rt} [\Pr_{(\rx,\ry) \sim \cD}[\ry \langle h_{\rA,\rt}(w), \rA \rx\rangle \leq \gamma_i/2 \wedge \ry \langle w, \rx\rangle > 0]] &\geq \E_{(\rx, \ry) \sim \cD}[\rho(\ry\langle w, \rx \rangle)].
\end{align*}
\end{remark}
The proof of Remark~\ref{rmk:phirho} follows from the definition of $\phi$ and $\rho$, along with monotonicity of $\Pr_{\rA,\rt}[y \ipr{h_{\rA,\rt}(w),\rA x} > \gamma_i \mid y\ipr{w,x}=\alpha]$ as a function of $\alpha$. The proofs have been deferred to Appendix~\ref{sec:aux}.
Continuing from~\eqref{eq:3terms} using Remark~\ref{rmk:phirho}, linearity of expectation and the triangle inequality, we have for any $\gamma \in \Gamma_i$ that
\begin{align}
\sup_{w \in \subH(\Gamma_i, L_j)} \Loss_\cD(w) - \Loss^\gamma_S(w) \leq&
\sup_{w \in \subH(\Gamma_i, L_j)} \left|\E_{\rA,\rt} [\Loss^{\gamma_i/2}_\cD(h_{\rA,\rt}(w)) - \Loss^{\gamma_i/2}_S(h_{\rA,\rt}(w))]\right| \label{eq:middle}\\
+&\sup_{w \in \subH(\Gamma_i, L_j)} \left| \E_{(\rx,\ry) \sim \cD}[\phi(\ry \langle w, \rx \rangle)] -\E_{(\rx,\ry) \sim S}[\phi(\ry \langle w, \rx \rangle)]\right| \label{eq:phi}\\
+&\sup_{w \in \subH(\Gamma_i, L_j)} \left|\E_{(\rx,\ry) \sim \cD}[\rho(\ry \langle w, \rx \rangle)] -\E_{(\rx,\ry) \sim S}[\rho(\ry \langle w, \rx \rangle)] \right|.\label{eq:rho}
\end{align}
In Section~\ref{sec:meetinmid}, we carefully use Bernstein's plus a (highly non-trivial) union bound over infinitely many grids of increasing size to bound~\eqref{eq:middle} as follows
\begin{lemma}
\label{lem:supW}
There is a constant $c>0$ such that with probability at least $1-\delta$ over $\rS \sim \cD^n$ we have
\begin{align*}
    \eqref{eq:middle} &\leq c \left(\sqrt{\frac{(\ell_{j+1}+ \exp(-\gamma_{i+1}^2k/c)) (k + \ln(e/\delta))}{n}} + \frac{(k + \ln(e/\delta))}{n} \right).
\end{align*}
\end{lemma}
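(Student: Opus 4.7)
The plan is to execute the multi-scale union-bound strategy outlined at the end of Section~\ref{sec:overview}. I would introduce a nested sequence of grids $\Disc_0 \subseteq \Disc_1 \subseteq \cdots \subseteq \R^k$, where $\Disc_r$ consists of all vectors whose coordinates lie on the rounding lattice $\{(1/2)(10\sqrt{k})^{-1} + z(10\sqrt{k})^{-1} : z \in \Ints\}$ and whose $\ell_2$ norm is at most $2^{r+1}$. A standard volume/packing estimate gives $|\Disc_r| \leq \exp(c_1 (r+1) k)$. On the other hand, $\|\rA w\|_2^2$ is a scaled $\chi^2_k$ variable concentrating about $1$, and the discretization perturbs each coordinate by at most $(10\sqrt{k})^{-1}$, so $\|h_{\rA,\rt}(w)- \rA w\|_2 \leq 1/10$ deterministically. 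Standard $\chi^2_k$ tail bounds therefore yield $\Pr_{\rA,\rt}[h_{\rA,\rt}(w) \notin \Disc_r] \leq c_2 \exp(-4^{r} k / c_2)$ for every $w \in \finalH$.

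For each $r$ separately, I would apply Bernstein's inequality to the i.i.d.\ Bernoulli indicators $\indi{\ry_\ell \langle h, \rA \rx_\ell\rangle \leq \gamma_i/2}$ for every fixed $h \in \Disc_r$, stratify $\Disc_r$ into dyadic buckets according to $\Loss^{\gamma_i/2}_{\rA\cD}(h)$, and take a union bound over all buckets and all $h \in \Disc_r$ with failure probability $\delta_r = \delta/(10(r+1)^2)$. Since $\sum_r \delta_r \leq \delta$, a further union bound over $r$ produces an event $\mathcal{E}$ of probability at least $1-\delta$ over $\rS \sim \cD^n$ on which, simultaneously for every $r \geq 0$ and every $h \in \Disc_r$,
\[
\bigl(\Loss^{\gamma_i/2}_{\rA\cD}(h) - \Loss^{\gamma_i/2}_{\rA\rS}(h)\bigr)^{2} \;\leq\; c_3\, \Loss^{\gamma_i/2}_{\rA\cD}(h)\,\frac{(r+1)k + \ln(e/\delta)}{n} \;+\; c_3\left(\frac{(r+1)k + \ln(e/\delta)}{n}\right)^{\!2}.
\]

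Conditioning on $\mathcal{E}$, fix $w \in \subH(\Gamma_i, L_j)$ and bound $\bigl|\E_{\rA,\rt}[\Loss^{\gamma_i/2}_{\rA\cD}(h_{\rA,\rt}(w)) - \Loss^{\gamma_i/2}_{\rA\rS}(h_{\rA,\rt}(w))]\bigr|$ by first moving the absolute value inside and then passing to the square via Jensen's inequality, and decompose the resulting expectation according to the shell $\Disc_r \setminus \Disc_{r-1}$ containing $h_{\rA,\rt}(w)$. For the linear piece of the Bernstein bound at $r=0$ I would use the key expectation bound $\E_{\rA,\rt}[\Loss^{\gamma_i/2}_{\rA\cD}(h_{\rA,\rt}(w))] \leq \Loss^{(3/4)\gamma_i}_{\cD}(w) + c\exp(-\gamma_i^2 k/c) \leq \ell_{j+1} + c\exp(-\gamma_{i+1}^2 k/c)$, which follows from Lemma~\ref{lem:concdiscretize} applied with perturbation $\gamma_i/4$, together with Fubini. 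For shells $r \geq 1$, the trivial upper bound $\Loss^{\gamma_i/2}_{\rA\cD}(\cdot) \leq 1$ is multiplied by the shell probability $c_2 \exp(-4^{r} k/c_2)$, and summing the resulting geometric series against the polynomial factor $(r+1)k$ leaves the $r=0$ term dominant. The quadratic piece is handled analogously. Taking square roots and using $\sqrt{a+b} \leq \sqrt{a} + \sqrt{b}$ produces the claimed bound.

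The main obstacle is precisely the avoidance of the additive $\exp(-k/c)$ term that appears in the single-grid argument of~\cite{SVMbest}: their proof pays $\Pr_{\rA,\rt}[h_{\rA,\rt}(w) \notin \Disc]$ as a lossy additive error, whereas Theorem~\ref{thm:main} can only tolerate it inside a square root. The delicate step is therefore the Cauchy-Schwartz/Jensen bookkeeping that forces the $\exp(-\gamma_{i+1}^2 k/c)$ contribution to live inside the first-order radical $\sqrt{(\ell_{j+1}+\exp(-\gamma_{i+1}^2 k/c))(k+\ln(e/\delta))/n}$, aligning the geometric decay $\exp(-4^r k/c)$ of the shell probabilities against the polynomial growth $\exp(c_1(r+1)k)$ of the grid sizes so that only the innermost shell contributes materially. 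Everything else is mechanical: Bernstein, dyadic stratification of $\Loss^{\gamma_i/2}_{\rA\cD}(h)$, and standard $\chi^2_k$ concentration.
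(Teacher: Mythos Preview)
There is a genuine gap in your handling of the randomness of $\rA$. Your event $\mathcal{E}$ is claimed to have probability at least $1-\delta$ over $\rS \sim \cD^n$, yet the inequality you state on $\mathcal{E}$, namely a Bernstein bound on $\Loss^{\gamma_i/2}_{\rA\cD}(h) - \Loss^{\gamma_i/2}_{\rA\rS}(h)$, involves the random matrix $\rA$ on both sides. For fixed $A$, the indicators $\indi{\ry_\ell \langle h, A\rx_\ell\rangle \leq \gamma_i/2}$ are i.i.d.\ Bernoulli and Bernstein applies, giving you an event $\mathcal{E}_A$ over $\rS$; but this event depends on $A$, and you cannot union-bound over the uncountably many $A$ to obtain a single $\mathcal{E}$ over $\rS$. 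If instead you intend $\mathcal{E}$ to be a joint event over $(\rS,\rA)$, then you can no longer freely take $\E_{\rA,\rt}[\cdot]$ inside the event, since conditioning alters the distribution of $\rA$. Either way, the step ``conditioning on $\mathcal{E}$, \ldots bound $\bigl|\E_{\rA,\rt}[\cdots]\bigr|$'' is not justified as written.

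This order-of-quantifiers problem is exactly the crux of the paper's proof, and it is not resolved by the shell decomposition alone. The paper's fix is to introduce, for each $\beta \geq 2$, the set $D_\beta(S)$ of matrices $A$ for which some grid point in some $\Disc_a$ violates the Bernstein bound by a factor $\beta$; bound $\Pr_{\rS}[A \in D_\beta(\rS)]$ for each fixed $A$; swap the order of expectation via Markov's inequality to obtain, with high probability over $\rS$ alone, a bound on $\Pr_{\rA}[\rA \in D_\beta(\rS)]$ simultaneously for all dyadic $\beta$; and then decompose the $\rA$-expectation over the distortion levels $K_h(S) = D_{2^h}(S)\setminus \bigcup_{b>h} D_{2^b}(S)$ rather than over norm shells. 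Only after this decomposition do the norm shells enter, via the replacement of $2^{a+7}k$ by $\|h_{\rA,\rt}(w)\|_2 \cdot k$ and a bound on $\E_{\rA,\rt}[\|h_{\rA,\rt}(w)\|_2^4]$. Your outline contains the right ingredients (multi-scale grids, Bernstein, Jensen, Lemma~\ref{lem:concdiscretize} to get $\ell_{j+1} + c\exp(-\gamma_{i+1}^2 k/c)$), but it is missing precisely this Markov swap and the distortion-level decomposition that make the argument go through as a high-probability-over-$\rS$ statement.
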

In Section~\ref{sec:rademacher}, we then use Rademacher complexity and a bound on the Lipschitz constants of $\phi$ and $\rho$ to bound~\eqref{eq:phi} and~\eqref{eq:rho} as follows
\begin{lemma}
\label{lem:supPhi}
There are constants $c,c'>0$ such that with probability at least $1-\delta$ over $\rS \sim \cD^n$ we have
\begin{align*}
\max\{\eqref{eq:phi}, \eqref{eq:rho} \}\leq
c\exp(-\gamma_{i+1}^2k/c) \cdot \sqrt{(k + \gamma_{i+1}^{-2} + \ln(e/\delta))/n} .
\end{align*}
provided that $k \geq c' \gamma_{i+1}^{-2}$. 
\end{lemma}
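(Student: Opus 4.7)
The plan is to reduce each of~\eqref{eq:phi} and~\eqref{eq:rho} to a standard Rademacher-complexity statement about the linear class $\{(x,y)\mapsto y\langle w,x\rangle : w\in\finalH\}$, with the Lipschitz contraction principle of~\cite{ledoux1991probability} absorbing $\phi$ and $\rho$. The decisive structural input is Claim~\ref{clm:distDeterm}: it lets us view $\phi$ and $\rho$ as fixed univariate functions of $\alpha=y\langle w,x\rangle$, independent of the particular $w,x,y$ that produced the inner product. So the task becomes a uniform deviation bound over the function class $\{(x,y)\mapsto \phi(y\langle w,x\rangle):w\in\subH(\Gamma_i,L_j)\}$ (and symmetrically for $\rho$).

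The first substantive step is to bound the sup-norm and the Lipschitz constant of $\phi$ and $\rho$. The sup-norm of $\phi$ is attained at $\alpha=0$ and equals $p_0:=\Pr_{\rA,\rt}[y\langle h_{\rA,\rt}(w),\rA x\rangle > \gamma_i/2 \mid y\langle w,x\rangle=0]$; Lemma~\ref{lem:concdiscretize} applied with threshold $\gamma_i/2$ gives $p_0\lesssim \exp(-\gamma_{i+1}^2 k/c)$ since $\gamma_i$ and $\gamma_{i+1}$ differ only by a factor of $2$. For the Lipschitz constant, on the linear piece $[0,\gamma_i]$ the slope of $\phi$ is exactly $p_0/\gamma_i$, already of the desired size $\gamma_{i+1}^{-1}\exp(-\gamma_{i+1}^2 k/c)$. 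On $[-c_\gamma,0]$ I would use Claim~\ref{clm:distDeterm} together with the rotational invariance of the rows of $\rA$ to pass to a canonical parametrization where $w$ and $x$ live in a two-dimensional subspace with $y\langle w,x\rangle=\alpha$; then varying $\alpha$ translates the random variable $y\langle h_{\rA,\rt}(w),\rA x\rangle$ accordingly, so the derivative of $\phi(\alpha)$ is controlled by the density of that random variable near the threshold $\gamma_i/2$. Lemma~\ref{lem:concdiscretize} forces this density to be at most $O(\gamma_i^{-1}p_0)$, yielding $L\lesssim \gamma_{i+1}^{-1}\exp(-\gamma_{i+1}^2 k/c)$. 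The argument for $\rho$ is symmetric.

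With these estimates in hand, I would run the standard Rademacher symmetrization and contraction pipeline. Since the Rademacher complexity of the linear class with $\|w\|_2,\|x\|_2\leq 1$ is $\lesssim 1/\sqrt{n}$, contraction gives an expected-supremum bound of $\lesssim L/\sqrt{n}$. A version of Talagrand's concentration inequality for bounded empirical processes (e.g.\ Bousquet's form) upgrades this to a $1-\delta$ high-probability statement by contributing an additive $\|\phi\|_\infty\sqrt{\ln(e/\delta)/n}$ plus a lower-order $\|\phi\|_\infty\ln(e/\delta)/n$ term. Combining yields a bound of the form $\exp(-\gamma_{i+1}^2 k/c)\sqrt{(\gamma_{i+1}^{-2}+\ln(e/\delta))/n}$; the remaining $\sqrt{k/n}$ slack in the statement of Lemma~\ref{lem:supPhi} is absorbed into the exponential prefactor using $k\geq c'\gamma_{i+1}^{-2}$, since $\sqrt{k}\exp(-\gamma_{i+1}^2 k/c)\leq \exp(-\gamma_{i+1}^2 k/(2c))$ for $c'$ sufficiently large.

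The main obstacle will be the Lipschitz estimate on $[-c_\gamma,0]$: $\phi$ is defined through a probability involving both the Gaussian law of $\rA$ and the uniform law of $\rt$, and its smoothness in $\alpha$ is not immediate from the definition. Claim~\ref{clm:distDeterm} does the heavy lifting by collapsing the problem to a two-dimensional Gaussian-plus-rounding calculation, but converting the tail bound of Lemma~\ref{lem:concdiscretize} into the crucial density estimate $O(\gamma_i^{-1}p_0)$ near the threshold is exactly where the exponential-in-$\gamma_{i+1}^2 k$ savings over the analysis of~\cite{SVMbest} are earned.
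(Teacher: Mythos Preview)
Your high-level plan is exactly the paper's: Rademacher symmetrization, Ledoux--Talagrand contraction onto the linear class, then a sup-norm and a Lipschitz bound on $\phi,\rho$. The sup-norm estimate and the slope bound on the linear piece $[0,\gamma_i]$ are fine and match Lemma~\ref{lem:easyLip}. The gap is the Lipschitz estimate on the nonlinear piece $[-c_\gamma,0]$ (and $(\gamma_i,c_\gamma]$ for $\rho$).

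Two problems there. First, ``varying $\alpha$ translates the random variable'' is not accurate: conditioned on $\rA w$ and $\rt$ (equivalently on $M,t$ in the paper's parametrization), the law of $y\langle h_{\rA,\rt}(w),\rA x\rangle$ is Gaussian with mean $\tfrac{\alpha}{\sqrt{k}}\langle X',M\rangle$ and variance $\tfrac{1-\alpha^2}{k}\|X'\|_2^2$, so changing $\alpha$ both shifts by a random, $M$-dependent amount and rescales. Second, and more importantly, the step ``Lemma~\ref{lem:concdiscretize} forces this density to be at most $O(\gamma_i^{-1}p_0)$'' does not follow: a tail bound does not yield a pointwise density bound, and in fact the claimed inequality is false in the relevant regime. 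The conditional standard deviation is $\Theta(1/\sqrt{k})$, so the density at the threshold scales like $\sqrt{k}\,\exp(-\gamma_{i+1}^2 k/c)$, and since $\partial_\alpha g$ has a fixed sign for typical $M$ (Remark~\ref{rmk:ipXX'}), there is no cancellation after integrating over $M,t$. Hence $L_\phi$ is of order $(\sqrt{k}+\gamma_{i+1}^{-1})\exp(-\gamma_{i+1}^2 k/c)$, not $\gamma_{i+1}^{-1}\exp(-\gamma_{i+1}^2 k/c)$; for the eventual choice $k\asymp \gamma_{i+1}^{-2}\ln(e/\ell_{j+1})$ these differ by a genuine $\sqrt{\ln(e/\ell_{j+1})}$ factor.

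This does not break the lemma, because its statement already carries the $\sqrt{k/n}$ term that absorbs exactly this $\sqrt{k}$; your last paragraph had the logic inverted (the $\sqrt{k}$ is produced by the Lipschitz constant, not slack to be absorbed). What is missing from your proof is the actual derivation of the $\sqrt{k}\exp(-\gamma_{i+1}^2 k/c)$ Lipschitz bound: differentiate under the integral representation of $\phi(\alpha)$, then split on $\|M\|_2^2$ (tails via $\chi^2_k$ concentration, bulk via $\|X'\|_2=\Theta(1)$ and $\langle X',M\rangle=\Theta(\sqrt{k})$). That is the content of the paper's Lemma~\ref{lem:PhiLips}, and it is the step where your heuristic needs to be replaced by a computation.
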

To balance the expressions in Lemma~\ref{lem:supW} and Lemma~\ref{lem:supPhi}, we now set $k = c \gamma_{i+1}^{-2} \ln(e/\ell_{j+1})$ for a sufficiently large constant $c>0$ so that $\exp(-\gamma_{i+1}^2 k/c) \leq \ell_{j+1}/e$ and $k \geq c'\gamma_{i+1}^{-2}$. Combining Lemma~\ref{lem:supW} and Lemma~\ref{lem:supPhi} via a union bound with $\delta'=\delta/2$ and inserting into~\eqref{eq:middle},~\eqref{eq:phi} and~\eqref{eq:rho} gives
\begin{align*}
\sup_{w \in \subH(\Gamma_i, L_j)} \Loss_\cD(w) - \Loss^\gamma_S(w) \leq& \\c\left(\sqrt{\frac{\ell_{j+1}(\gamma_{i+1}^{-2}\ln(e/\ell_{j+1}) + \ln(e/\delta))}{n}}  + \frac{\gamma_{i+1}^{-2} \ln(e/\ell_{j+1})+\ln(e/\delta)}{n}\right) &+\\
c \left( \ell_{j+1} \sqrt{(\gamma_{i+1}^{-2} \ln(e/\ell_{j+1}) + \ln(e/\delta))/n} \right),
\end{align*}
for a constant $c>0$. This completes the proof of Lemma~\ref{lem:subgoal}, which together with Lemma~\ref{lem:subgoal2} completes the proof of our main result, Theorem~\ref{thm:main}.

\section{Rademacher Bounds}
\label{sec:rademacher}
In this section, we use Rademacher complexity and the contraction inequality to prove Lemma~\ref{lem:supPhi}. We focus on bounding~\eqref{eq:phi} and note that~\eqref{eq:rho} is handled symmetrically.

For a training set $S \in (\finalX \times \{-1,1\})^n$, consider the empirical Rademacher complexity (for $\rsigma=(\rsigma_1,\dots,\rsigma_{n})$ a vector of independent and uniform variables in $\{-1,1\}$):
\begin{align*}
\hat{\Rad}_{\phi, \subH(\Gamma_i,L_j)}(S) &= \frac{1}{n} \cdot \E_{\rsigma}\left[ \sup_{w \in \subH(\Gamma_i,L_j)} \sum_{(x_i,y_i) \in S} \rsigma_i \phi(y_i \langle w, x_i \rangle )\right] \\
&\leq \frac{1}{n} \cdot \E_{\rsigma}\left[ \sup_{w \in \finalH} \sum_{(x_i,y_i) \in S} \rsigma_i \phi(y_i \langle w, x_i \rangle )\right]
\end{align*}
If $\phi$ is $L_\phi$-Lipschitz, then the contraction inequality from~\cite{ledoux1991probability} gives that
\[
\hat{\Rad}_{\phi, \finalH}(S) \leq \frac{L_\phi}{n} \cdot \E_{\rsigma}\left[ \sup_{w \in \finalH} \sum_{(x_i,y_i) \in S} \rsigma_i y_i \langle w, x_i \rangle  \right].
\]
Using Cauchy-Schwartz, this is bounded by
\begin{align*}
\hat{\Rad}_{\phi, \finalH}(S) \leq& \frac{L_\phi}{n} \cdot  \E_{\rsigma}\left[ \sup_{w \in \finalH}  \left\langle w, \sum_{(x_i,y_i) \in S}\rsigma_i y_i x_i \right\rangle  \right] \\
\leq& \frac{L_\phi}{n} \cdot \left(\sup_{w \in \finalH} \|w\|_2\right) \cdot \E_{\rsigma}\left[\left\| \sum_{(x_i,y_i) \in S}\rsigma_i y_i x_i\right\|_2 \right] \\
\leq& \frac{L_\phi}{n} \cdot \sqrt{\E_{\rsigma}\left[\left\| \sum_{(x_i,y_i) \in S}\rsigma_i y_i x_i\right\|_2^2 \right]}\\
=& \frac{L_\phi}{\sqrt{n}} \cdot \sqrt{\sum_{(x_i,y_i) \in S} \sum_{(x_j,y_j) \in S} \E_\sigma[\sigma_i \sigma_j] y_i y_j \ipr{x_i, x_j}} \\
=& \frac{L_\phi}{\sqrt{n}}.
\end{align*}
Since this inequality holds for all $S$ with each $(x,y) \in S$ satisfying $\|x\|_2 = 1$, we have for the distribution $\cD$ that the Rademacher complexity
\[
\Rad_{\cD,\phi,\finalH}(n) = \E_{\rS \sim \cD^n}[\hat{\Rad}_{\phi,\finalH}(\rS)]
\]
satisfies $\Rad_{\cD,\phi,\finalH}(n) \leq L_\phi/\sqrt{n}$. By Lemma~\ref{lem:concdiscretize} and $\gamma_i = \gamma_{i+1}/2$, we have that $\phi$ is bounded by 
\[
0 \leq \phi(\alpha) \leq \max_{-c_\gamma \leq \alpha \leq 0}\Pr_{\rA, \rt}[y \langle h_{\rA,\rt}(w), \rA x\rangle > \gamma_i/2 \mid y \langle w, x \rangle = \alpha] \leq c\exp(-k\gamma_{i+1}^2/c),
\]
for a constant $c>0$. We conclude from standard results on Rademacher complexity (see e.g.~\cite{rademacherbound}), that with probability $1-\delta$ over a sample $\rS \sim \cD^n$ it holds that
\begin{align*}
\sup_{w \in \subH(\Gamma_i,L_j)} \left| \E_{(\rx,\ry) \sim \cD}[\phi(\ry \langle w, \rx \rangle)] -\E_{(\rx,\ry) \sim \rS}[\phi(\ry \langle w, \rx \rangle)]\right| \leq& \\
2 \Rad_{\cD,\phi,\finalH}(n) + c_R\left(c\exp(-k\gamma_{i+1}^2/c)\sqrt{\frac{\ln(1/\delta)}{n}} \right) \leq&\\
\frac{2 L_\phi}{\sqrt{n}} + c_R\left(c\exp(-k\gamma_{i+1}^2/c)\sqrt{\frac{\ln(1/\delta)}{n}} \right).
\end{align*}
where $c_R>0$ is a constant. Symmetric arguments bounds $\rho$ by the same, with the Lipschitz constant $L_\rho$ of $\rho$ in place of $L_\phi$.

We now use the following bound on the Lipschitz constants of $\phi$ and $\rho$
\begin{lemma}
\label{lem:finalLip}
There are constants $c_L, c>0$ such that the Lipschitz constants $L_\phi$ and $L_\rho$ of $\phi$ and $\rho$ are bounded by
\[
c_L \exp(-\gamma_{i+1}^2 k/c_L)\cdot \left(\sqrt{k} + \gamma^{-1}_{i+1}\right),
\]
when $k \geq c \gamma_{i+1}^{-2}$.
\end{lemma}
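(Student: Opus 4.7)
The plan is to bound the Lipschitz constant of $\phi$ piece by piece (the argument for $\rho$ is symmetric). On $(\gamma_i, c_\gamma]$ the function vanishes, and on the linear piece $(0, \gamma_i]$ the slope equals $-\Pr_{\rA,\rt}[y\langle h_{\rA,\rt}(w), \rA x\rangle > \gamma_i/2 \mid y\langle w,x\rangle = 0]/\gamma_i$. Applying Lemma~\ref{lem:concdiscretize} with $\gamma \mapsto \gamma_i/2$ bounds that probability by $c\exp(-\gamma_i^2 k/(4c))$, and since $\gamma_i = \gamma_{i+1}/2$, this contributes the $\gamma_{i+1}^{-1}\exp(-\gamma_{i+1}^2 k/c')$ summand in the target bound.

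For the main piece $\alpha \in [-c_\gamma, 0]$, the idea is to obtain a closed-form integral representation of $\phi(\alpha)$ via rotational invariance and then differentiate. Fix $y = 1$, a unit $w$, and parameterize $x(\alpha) = \alpha w + \sqrt{1-\alpha^2}\,u$ for any fixed unit $u \perp w$; by Claim~\ref{clm:distDeterm} the value $\phi(\alpha)$ is invariant under this choice. Then $\rA x(\alpha) = \alpha \rA w + \sqrt{1-\alpha^2}\,\rA u$, and by Gaussian rotational invariance $\rA u \sim \Norm(0, I_k/k)$ is independent of $\rA w$. Setting $h := h_{\rA,\rt}(w)$, $Y := \langle h, \rA w\rangle$, $\tau := \|h\|_2/\sqrt{k}$ and conditioning on $(\rA w, \rt)$, the Gaussian tail gives
\[
\phi(\alpha) = \E\!\left[1 - \Phi\!\left(g(\alpha)\right)\right], \qquad g(\alpha) := \frac{\gamma_i/2 - \alpha Y}{\sqrt{1-\alpha^2}\,\tau},
\]
where $\Phi$ is the standard normal CDF. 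Differentiating under the expectation yields $\phi'(\alpha) = \E[\varphi_N(g(\alpha))\,(Y - \alpha\gamma_i/2)/((1-\alpha^2)^{3/2}\tau)]$ with $\varphi_N$ the standard Gaussian density.

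The derivative will be split on a good event $E := \{Y \in [3/4, 5/4]\} \cap \{\|h\|_2 \in [1/2, 2]\}$. On $E$, since $\alpha \leq 0$ and $Y > 0$, the numerator $\gamma_i/2 - \alpha Y \geq \gamma_i/2$; combined with $(1-\alpha^2)^{3/2} \geq 1/2$ (using $c_\gamma$ small) and $\tau \leq 2/\sqrt{k}$, one gets $g(\alpha) \geq \gamma_i\sqrt{k}/8$, hence $\varphi_N(g(\alpha)) \lesssim \exp(-\gamma_{i+1}^2 k/c)$, and the remaining factor is $O(\sqrt{k})$, contributing $\sqrt{k}\exp(-\gamma_{i+1}^2 k/c)$. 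For $\Pr[E^c]$: Johnson--Lindenstrauss concentration gives $\|\rA w\|_2 \in [\sqrt{3}/2, \sqrt{5}/2]$ except with probability $\exp(-k/c)$, and since $\|h - \rA w\|_\infty \leq (10\sqrt{k})^{-1}$ forces $\|h - \rA w\|_2 \leq 1/10$, the norm event in $E$ holds with the same probability; Lemma~\ref{lem:concdiscretize} applied at $x = w$ controls $|Y - 1|$ similarly.

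The main obstacle is the $E^c$ contribution, where the factor $1/\tau$ could in principle blow up. The crucial structural observation is that each coordinate of $h_{\rA,\rt}(w)$ lies in $\{(2z+1)/(20\sqrt{k}) : z \in \Ints\}$, so $|h_i| \geq 1/(20\sqrt{k})$ deterministically and $\tau \geq 1/(20\sqrt{k})$ always. Hence $|g'(\alpha)| \lesssim \sqrt{k}(|Y|+1)$ pathwise, and a Cauchy--Schwarz split together with $\E[(|Y|+1)^2] = O(1)$ (via $|Y| \leq \|h\|_2\|\rA w\|_2$ and chi-squared moments) gives the $E^c$ contribution $\lesssim \sqrt{k}\exp(-k/c'')$, which since $\gamma_{i+1} \leq 1$ is absorbed by $\sqrt{k}\exp(-\gamma_{i+1}^2 k/c)$. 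Combining the linear-piece and main-piece bounds gives $L_\phi \lesssim \exp(-\gamma_{i+1}^2 k/c_L)(\sqrt{k} + \gamma_{i+1}^{-1})$; the bound for $L_\rho$ is obtained identically, the only adjustment being that on the main piece $\alpha \in (\gamma_i, c_\gamma]$ one uses $\alpha Y - \gamma_i/2 \geq \gamma_i/4$ on $E$ (since $\alpha \geq \gamma_i$ and $Y \geq 3/4$) to again lower bound $|g(\alpha)|$ by $\gamma_i\sqrt{k}/c$.
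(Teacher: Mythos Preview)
Your proof is correct and follows the same overall strategy as the paper: handle the linear piece via Lemma~\ref{lem:concdiscretize}, and on the main piece use rotational invariance to write $\phi(\alpha)=\E[1-\Phi(g(\alpha))]$, differentiate under the expectation, and exploit the deterministic lower bound $\|h\|_2\ge 1/20$ coming from the half-integer grid. The paper's Lemma~\ref{lem:PhiLips} does exactly this, arriving at the same integral~\eqref{eq:integral} and the same derivative.

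The organizational difference is in how the tail is controlled. The paper splits the integral into three ranges of $\|M\|_2^2=k\|\rA w\|_2^2$ (small, typical, large), bounding the exponential by $1$ on the two extreme ranges and handling the large-norm range by a dyadic decomposition of $[4k/3,\infty)$ together with $\chi^2$ tail bounds. You instead split on the single good event $E=\{Y\in[3/4,5/4]\}\cap\{\|h\|_2\in[1/2,2]\}$ and dispatch $E^c$ in one stroke via Cauchy--Schwarz, using the pathwise bound $|\phi'(\alpha)|\lesssim\sqrt{k}(|Y|+1)$ (which simultaneously serves as the dominating function justifying differentiation under the expectation) and $\E[(|Y|+1)^2]=O(1)$ from $\chi^2$ moments. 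This is a bit more streamlined than the paper's three-case analysis; conversely, the paper's conditioning on $\|\rA w\|_2$ alone (rather than on the derived quantities $Y,\|h\|_2$) makes the probability estimates slightly more transparent. Both routes rest on the same two key ideas: the Gaussian-CDF representation and the grid-induced bound $\tau\ge 1/(20\sqrt{k})$ that prevents the $1/\tau$ factor from blowing up.
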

We prove this lemma in the next section. We thus conclude that with probability at least $1-\delta$ over $\rS \sim \cD^n$, we have
\begin{align*}
\sup_{w \in \subH(\Gamma_i,L_j)} \left| \E_{(\rx,\ry) \sim \cD}[\phi(\ry \langle w, \rx \rangle)] -\E_{(\rx,\ry) \sim \rS}[\phi(\ry \langle w, \rx \rangle)]\right| \leq& \\
2 \cdot \frac{c_L \exp(-\gamma_{i+1}^2k/c_L)\left(\sqrt{k} +\gamma^{-1}_{i+1}\right)}{\sqrt{n}} + c_R \left(c \exp(-k \gamma_{i+1}^2/c) \sqrt{\frac{\ln(1/\delta)}{n}} \right).
\end{align*}
The same bound holds for $\rho$ via Lemma~\ref{lem:finalLip}, which completes the proof of Lemma~\ref{lem:supPhi}.

\subsection{Bounding the Lipschitz Constants}
\label{sec:lip}
In this section, we proceed to bound the Lipschitz constants of $\phi$ and $\rho$ and thereby prove Lemma~\ref{lem:finalLip}.
We  split it into two tasks depending on the value of $\alpha$. The simplest case is the following
\begin{lemma}
\label{lem:easyLip}
There is a constant $c>0$ such that the Lipschitz constants of $\phi$ and $\rho$, when $0 < \alpha \leq \gamma_i$, are less than:
\[
\frac{c \exp(-k\gamma_{i+1}^2/c)}{\gamma_{i+1}}.
\]
\end{lemma}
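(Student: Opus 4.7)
The key observation is that on the interval $0 < \alpha \leq \gamma_i$, both $\phi$ and $\rho$ reduce to linear functions of $\alpha$. Indeed, the conditional probabilities $\Pr_{\rA,\rt}[y\langle h_{\rA,\rt}(w), \rA x\rangle > \gamma_i/2 \mid y\langle w,x\rangle = 0]$ and $\Pr_{\rA,\rt}[y\langle h_{\rA,\rt}(w), \rA x\rangle \leq \gamma_i/2 \mid y\langle w,x\rangle = \gamma_i]$ appearing in the definitions are constants with respect to $\alpha$ (they depend only on the fixed conditioning values $0$ and $\gamma_i$), so computing the Lipschitz constants reduces to bounding the slopes $-\tfrac{1}{\gamma_i}\Pr[\cdots \mid y\langle w,x\rangle = 0]$ and $\tfrac{1}{\gamma_i}\Pr[\cdots \mid y\langle w,x\rangle = \gamma_i]$.

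The plan is then to bound each of these two probabilities using Lemma~\ref{lem:concdiscretize}. For $\phi$, the event $y\langle h_{\rA,\rt}(w), \rA x\rangle > \gamma_i/2$ under the conditioning $y\langle w,x\rangle = 0$ is contained in the event $|\langle h_{\rA,\rt}(w),\rA x\rangle - \langle w,x\rangle| > \gamma_i/2$, so Lemma~\ref{lem:concdiscretize} (applied with $\gamma = \gamma_i/2$) gives an upper bound of $c\exp(-(\gamma_i/2)^2 k/c)$. For $\rho$, the event $y\langle h_{\rA,\rt}(w), \rA x\rangle \leq \gamma_i/2$ under the conditioning $y\langle w,x\rangle = \gamma_i$ is likewise contained in $|\langle h_{\rA,\rt}(w),\rA x\rangle - \langle w,x\rangle| \geq \gamma_i/2$, so the same lemma gives the same bound. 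Recalling that $\gamma_{i+1} = 2\gamma_i$ from the definition of the grid $\Gamma_i = (2^{i-1}n^{-1/2}, 2^i n^{-1/2}]$, we have $\gamma_i/2 = \gamma_{i+1}/4$, so $(\gamma_i/2)^2 k = \gamma_{i+1}^2 k/16$; after absorbing the factor of $16$ into the constant, each probability is at most $c'\exp(-\gamma_{i+1}^2 k/c')$ for a suitable $c' > 0$.

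Finally, dividing by $\gamma_i = \gamma_{i+1}/2$ introduces only a factor of $2$, giving that both Lipschitz constants are at most $\frac{c \exp(-\gamma_{i+1}^2 k/c)}{\gamma_{i+1}}$ after one more constant adjustment, which is exactly the claimed bound. There is no real obstacle here — the whole argument is essentially just noticing that the probability factor is constant in $\alpha$, so linearity gives the slope directly, and then plugging into Lemma~\ref{lem:concdiscretize} with a deviation of $\gamma_i/2$. The only care needed is in tracking the relationship $\gamma_i = \gamma_{i+1}/2$ and absorbing the numerical constants ($4$, $16$, $2$) into the single constant $c$ in the statement.
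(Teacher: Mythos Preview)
Your proposal is correct and follows essentially the same approach as the paper: observe linearity on $(0,\gamma_i]$, identify the slope as $\gamma_i^{-1}$ times the relevant fixed probability, bound that probability via Lemma~\ref{lem:concdiscretize}, and convert $\gamma_i$ to $\gamma_{i+1}$ using $\gamma_i=\gamma_{i+1}/2$. If anything, you are slightly more explicit than the paper in spelling out why the conditioned events are contained in the deviation event $|\langle h_{\rA,\rt}(w),\rA x\rangle - \langle w,x\rangle| > \gamma_i/2$.
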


\begin{proof}
Since $\phi$ is linear when $0 < \alpha \leq \gamma_i$, its Lipschitz constant equals the slope of the line, i.e.
\begin{align*}
\frac{1}{\gamma_i} \cdot \Pr_{\rA,\rt}[y \langle h_{\rA,\rt}(w),\rA x\rangle > \gamma_i/2 \mid  y\langle w,x \rangle = 0].
\end{align*}
By Lemma~\ref{lem:concdiscretize} and using $\gamma_i = \gamma_{i+1}/2$, this is bounded by
\begin{align*}
    \frac{c\exp(-\gamma_{i+1}^2 k/c)}{\gamma_{i+1}},
\end{align*}
for a constant $c>0$. The same arguments applies immediately to $\rho$.
\end{proof}
The trickier case is when $\alpha \in [-c_\gamma, 0]$ for $\phi$ and when $\alpha \in (\gamma_i, c_\gamma]$ for $\rho$. If we set $c_\gamma \leq 1/\sqrt{2}$, then we have
\begin{lemma}\label{lem:PhiLips}
    There is a constant $c>0$ such that the Lipschitz constant of $\phi$ when $\alpha\in [-1/\sqrt{2}, 0]$ and $\rho$ when $\alpha\in \left(\gamma_i,1/\sqrt{2}\right]$ is less than
    $$ c\exp\left(-\gamma_{i+1}^2 k/c\right)\sqrt{k},$$
    for $k\ge c \gamma_{i+1}^{-2}$.
\end{lemma}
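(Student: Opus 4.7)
The plan is to leverage Claim~\ref{clm:distDeterm} to rewrite $\phi$ and $\rho$ as explicit Gaussian tail integrals, differentiate them, and bound the derivative pointwise. By rotational invariance of $\rA$, when computing $\phi(\alpha)$ we may fix $y=1$, $w=e_1\in\finalH$, and choose any $x\in\finalX$ with $\langle w,x\rangle=\alpha$ (possible since $|\alpha|\leq 1/\sqrt 2\leq c_\gamma$). Writing $\rA w=\rA_1$ and decomposing $\rA x=\alpha\rA_1+\sqrt{1-\alpha^2}\,\rZ$, where $\rZ$ is independent of $\rA_1$ with i.i.d.\ $\Norm(0,1/k)$ entries, and noting that $h:=h_{\rA,\rt}(w)$ depends only on $(\rA_1,\rt)$, conditioning on $(\rA_1,\rt)=(u,t)$ fixes $h$ and leaves $\langle h,\rZ\rangle\mid u,t\sim \Norm(0,\|h\|^2/k)$. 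This gives the closed form
\[
\phi(\alpha)=\E_{u,t}\!\left[Q\!\left(\frac{\gamma_i/2-\alpha p}{\sqrt{1-\alpha^2}\,\sigma}\right)\right],\qquad p:=\langle h,u\rangle,\ \sigma:=\|h\|/\sqrt k,
\]
with $Q$ the standard Gaussian tail.

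Differentiating under the expectation yields $|\phi'(\alpha)|\leq \E_{u,t}[\varphi(g)\,|g'_\alpha|]$, where $g$ is the argument of $Q$ above, $\varphi$ is the standard Gaussian density, and $|g'_\alpha|\leq (|p|+|\alpha|\gamma_i/2)/((1-\alpha^2)^{3/2}\sigma)$; on $[-1/\sqrt 2,0]$ the factor $(1-\alpha^2)^{3/2}$ is bounded below by a constant. I would then split on the good event $\mathcal G:=\{\|h\|^2\in[1/2,3]\ \text{and}\ p\geq 1/4\}$. Because the randomized rounding~\eqref{eq:expectround} satisfies $\E_\rt[h]=\rA w=u$ with coordinatewise zero-mean errors of amplitude $O(1/\sqrt k)$, together with Gaussian concentration of $\|u\|^2$ around $1$, one obtains $\Pr[\mathcal G^c]\leq c\exp(-k/c)$ by a Bernstein-type argument over the $k$ independent offsets. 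On $\mathcal G$, since $\alpha\leq 0$ and $p\geq 0$, the numerator of $g$ satisfies $\gamma_i/2-\alpha p\geq \gamma_i/2$, so $g\geq \gamma_i/(2\sigma)\gtrsim \gamma_{i+1}\sqrt k$ and hence $\varphi(g)\leq \exp(-\gamma_{i+1}^2 k/c)$. Using $|p|/\sigma\leq \|u\|\sqrt k\lesssim \sqrt k$ and $\gamma_i/\sigma\lesssim \gamma_{i+1}\sqrt k$ on $\mathcal G$ yields the pointwise bound $\varphi(g)|g'_\alpha|\lesssim \exp(-\gamma_{i+1}^2 k/c)\sqrt k$, which matches the target.

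The off-$\mathcal G$ contribution I would handle by Cauchy--Schwarz: $\E[\indi{\mathcal G^c}\varphi(g)|g'_\alpha|]\leq \sqrt{\Pr[\mathcal G^c]}\cdot(\E[\varphi(g)^2|g'_\alpha|^2])^{1/2}$, where the second factor is $\poly(k)$ via $\varphi\leq 1/\sqrt{2\pi}$ and Gaussian moment estimates on $1/\sigma$, so the whole term is at most $\exp(-k/c')$, which is dominated by $\exp(-\gamma_{i+1}^2 k/c)\sqrt k$ once $k\geq c\gamma_{i+1}^{-2}$. The case of $\rho$ on $(\gamma_i,1/\sqrt 2]$ is entirely symmetric: one studies $\Phi(g)$ in place of $Q(g)$ with $g\leq 0$, and on $\mathcal G$ the inequality $\alpha p-\gamma_i/2\geq \gamma_i(p-1/2)\gtrsim \gamma_i$ for $\alpha\geq \gamma_i$ ensures $|g|\gtrsim \gamma_{i+1}\sqrt k$ and hence the same Gaussian-density decay. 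The main obstacle I anticipate is the concentration bound $\Pr[\mathcal G^c]\leq \exp(-k/c)$: both $\|h\|$ and $\langle h,u\rangle$ are perturbations of $\|u\|$ and $\|u\|^2$ by sums of $k$ independent bounded zero-mean rounding variables of amplitude $O(1/\sqrt k)$, so the concentration must be proved with enough uniformity in $u$ to also control the moments $\E[1/\sigma^{O(1)}]$ appearing in the Cauchy--Schwarz step.
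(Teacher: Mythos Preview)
Your approach is essentially the paper's: represent $\phi$ as a Gaussian tail by conditioning on $(\rA w,\rt)$, differentiate under the expectation, and split into a good region (where the argument of the Gaussian density is $\gtrsim \gamma_i\sqrt{k}$ in absolute value) and a bad region where the conditioning variables are atypical. The paper defines its good event directly on $\|\rA w\|_2^2$ rather than on $\|h\|_2$ and $p$, but since each rounding perturbation is deterministically at most $(10\sqrt{k})^{-1}$, the two formulations are equivalent, and your anticipated ``main obstacle'' dissolves: $\Pr[\mathcal G^c]\leq c\exp(-k/c)$ follows immediately from chi-square concentration on $\|\rA w\|_2^2$, with no Bernstein argument over the offsets needed. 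For the bad region the paper integrates directly against the chi-square tail (using a dyadic decomposition when $\|\rA w\|_2$ is large, to control the unbounded factor $\|M\|_2$), whereas you use Cauchy--Schwarz; both work, and your second-moment bound is straightforward since $1/\sigma\leq 20\sqrt{k}$ holds deterministically thanks to the nonzero grid offset $(1/2)(10\sqrt{k})^{-1}$.

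One slip: for $\rho$ you assert $\alpha p-\gamma_i/2\geq \gamma_i(p-1/2)\gtrsim \gamma_i$, but your good event only requires $p\geq 1/4$, which yields $\gamma_i(p-1/2)\geq -\gamma_i/4$, not $\gtrsim\gamma_i$. You need a threshold strictly above $1/2$ in the definition of $\mathcal G$; the paper effectively secures $p\geq 4/5$ by combining $\|\rA w\|_2^2\geq 9/10$ with the deterministic estimate $|\langle h-\rA w,\rA w\rangle|\leq \|\rA w\|_2/10$.
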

Combining this result with Lemma~\ref{lem:easyLip} completes the proof of Lemma~\ref{lem:finalLip}.

To prove Lemma~\ref{lem:PhiLips}, we need to bound the Lipschitz constants of $\phi$ when $\alpha\in [-1/\sqrt{2}, 0]$ and $\rho$ when $\alpha\in \left(\gamma_i,1\sqrt{2}\right]$.
We will go through the details for $\phi$, and comment how the argument for $\rho$ differs along the way.

First recall the following claim
\begin{customclm}{\ref{clm:distDeterm}}
For any $(x,y) \in \finalX \times \{-1,1\}$ and any $w \in \finalH$, the distribution of $y \langle h_{\rA,\rt}(w),\rA x \rangle$ is completely determined from $y \langle w, x \rangle$. 
\end{customclm}

As we need to understand the distribution of the random variable $y \langle h_{\rA,\rt}(w),\rA x \rangle$ to bound the Lipschitz constants of $\phi$ and $\rho$, we proceed to give the proof of Claim~\ref{clm:distDeterm} while introducing convenient notation for establishing Lemma~\ref{lem:PhiLips}.

\begin{proof}
Firstly, write $h_{\rA,\rt}(w) =\rA w + \rv$ with $\rv =(h_{\rA,\rt}(w)-\rA w)$. Then observe that $(\rA w)_i = \langle \ra_i, w\rangle \sim \Norm(0, \|w\|^2/k) \eqd \Norm(0,1/k)$ where $\ra_i$ denotes the $i$'th row of $\rA$. Here $\eqd$ denotes equality in distribution. Now write $x = \langle w, x \rangle w + u$ where $\langle u, w \rangle = 0$ and $\|u\|^2 = \|x\|^2-\langle w,x \rangle^2 = 1-\langle w,x \rangle^2$ (i.e.\ a Gram-Schmidt step). We have $(\rA x)_i = \langle \ra_i , w\rangle \langle w,x\rangle + \langle \ra_i , u\rangle$. By rotational invariance of the Gaussian distribution and orthogonality of $w$ and $u$, we have that $\langle \ra_i , u \rangle \sim \Norm(0,(1-\langle w, x\rangle^2)/k)$ and that this is independent of $\langle \ra_i, w \rangle$. Using the independence, we also conclude that if we condition on any fixed outcome of $c_i = (\rA w)_i$, we have that $c_i \langle \ra_i, u \rangle$ is $\Norm(0, c^2_i (1-\langle w , x \rangle^2)/k)$ distributed.

We now argue that we can sample from the distribution of $y \langle h_{\rA,\rt}(w), \rA x \rangle = \langle h_{\rA,\rt}(w), y \rA x \rangle$ knowing only $y \langle w, x \rangle$ as follows: Sample independent $\Norm(0,1/k)$ distributed random variables $\rX_1,\dots,\rX_k$. Next sample independent $\Norm(0,(1-y^2 \langle w, x\rangle^2)/k)$ distributed random variables $\rY_1,\dots,\rY_k$ and let $\rZ_i = y \langle w, x \rangle \rX_i + \rY_i \eqd y \langle w, x \rangle \rX_i + y\rY_i$, where the last step follows from independence of $\rX_i$ and $\rY_i$ and symmetry in the distribution of $\rY_i$. Let $\rX$ be the vector with the $\rX_i$'s as entries and $\rZ$ similarly. Then the joint distribution of $(\rX, \rZ)$ is equal to the joint distribution of $(\rA w, y\rA x)$. Finally draw offsets $\rt'_1,\dots,\rt'_k$ uniformly and independently in $[0,1]$ and round $\rX_i$ to a number of the form $(1/2)(10\sqrt{k})^{-1} + z(10\sqrt{k})^{-1}$ for $z \in \Ints$ as in the definition of $h_{\rA,\rt}$. The resulting variables $\rX_i'$ satisfy that $y \langle h_{\rA, \rt}(w), \rA x \rangle \eqd \langle \rX', \rZ \rangle$.
\end{proof}

With Claim~\ref{clm:distDeterm} established, we will use the notation in the proof as we proceed with bounding the Lipschitz constants of $\phi$ and $\rho$.

Let $\alpha = y\ipr{w,x}$ for some $w \in \finalH$ and $(x,y) \in \finalX \times \{-1,1\}$ and $\alpha \in [-1/\sqrt{2}, 0]$ (for $\rho$, let $\alpha \in (\gamma_i, 1/\sqrt{2}]$). Let $\rX_i\sim \Norm(0,1/k)$, $\rY_i\sim\Norm(0,(1-\alpha^2)/k)$ and let $\rX_i'$ be the random rounding of $\rX_i$. We argued, in the proof of Claim~\ref{clm:distDeterm}, that
$y\ipr{h_{\rA,\rt}(w),\rA x} \eqd \ipr{\rX',\alpha \rX + \rY}$.
Let additionally $E_i$ be the event that $\rX_i'$ is rounded up. For notational convenience, let $\rM_i = \sqrt{k} \rX_i$ and observe that $\rM_i\sim \Norm(0,1)$. With this notation, we have that $\rX_i'$ has the form
\begin{align*}
    \rX_i'
    = \frac{1}{10\sqrt{k}}\left(\left\lfloor{10 \rM_i-\frac{1}{2\sqrt{k}10}}\right\rfloor+\indi{E_i}+\frac{1}{2}\right).
\end{align*}
Hence, 
$$y\ipr{h_{\rA,\rt}(w),\rA x} 
\eqd \ipr{\rX',\alpha \rX + \rY}
= \frac{\alpha}{\sqrt{k}}\ipr{\rX',\rM}+\ipr{\rX',\rY}.$$
Recall that the variables $\rt_i$ and $\rM_i$ determine $\rX_i, E_i$ and thus also $\rX'_i$. If we condition on an outcome $\rt_i=t_i$ and $\rM_i=M_i$, only $\rY_i$ remains random. We may thus write
\begin{align*}
  \Pr[y\langle h_{\rA,\rt}(w),\rA x\rangle > \gamma_i/2 ] &=\\ \Pr\left[\frac{\alpha}{\sqrt{k}}\ipr{\rX',\rM}+\ipr{\rX',\rY} > \gamma_i/2\right]
  &=\\
  \int_{\R^k\times [0,1]^k} f_{\rM,\rt}(M,t)\Pr\left[\frac{\alpha}{\sqrt{k}}\ipr{\rX',\rM}+\ipr{\rX',\rY} >\gamma_i/2 \ \bigg| \  \rM_i = M_i, \rt_i = t_i\right] d(M,t)
  &=\\ \int_{\R^k\times[0,1]^k} \Pr\left[\frac{\alpha}{\sqrt{k}}\ipr{X',M}+\ipr{X',\rY} > \gamma_i/2\right] d(M,t),
  \end{align*}
where $f_{M,t}(M,t)$ is the joint probability density function of $\rM$ and $\rt$. 

Let us now define $\rN_i$ such that $\rY_i = \sqrt{\frac{1-\alpha^2}{k}}\rN_i$ and let $\rN = (\rN_1,\dots,\rN_k)$. Then $\rN_i\sim \Norm(0,1)$ and the event 
\[
\frac{\alpha}{\sqrt{k}}\ipr{X',M}+\ipr{X',\rY} > \gamma_i/2,
\]
may be rewritten as
\begin{align*}
    \frac{\alpha}{\sqrt{k}}\ipr{X',M}+\ipr{X',\rY} > \gamma_i/2 \Longleftrightarrow
    \ipr{X',\rY} &> \gamma_i/2 - \frac{\alpha}{\sqrt{k}}\ipr{X',M} \Longleftrightarrow\\
    \sqrt{\frac{1-\alpha^2}{k}}\ipr{X', \rN} &> \gamma_i/2 - \frac{\alpha}{\sqrt{k}}\ipr{X',M} \Longleftrightarrow\\
    \sqrt{\frac{1-\alpha^2}{k}}\norm{X'}_2 \ipr{X'/\|X'\|_2,\rN} &> \gamma_i/2 - \frac{\alpha}{\sqrt{k}}\ipr{X',M} \Longleftrightarrow\\
    \ipr{X'/\|X'\|_2,\rN} &> \frac{\sqrt{k}\gamma_i/2 - \alpha\ipr{X',M}}{\sqrt{1-\alpha^2}\norm{X'}_2}.
\end{align*}
Observe that $\ipr{X'/\|X'\|_2, \rN} \sim \Norm(0,1)$. If we let $\Phi$ denote the cumulative density function of a standard normal distribution, then we have established
\begin{align}
  \Pr[y\langle h_{\rA,\rt}(w),\rA x\rangle > \gamma_i/2] &= \int_{\R^k\times[0,1]^k} f_{M,t}(M,t)\left(1-\Phi\left(\frac{\sqrt{k}\gamma_i/2 - \alpha\ipr{X',M}}{\sqrt{1-\alpha^2}\norm{X'}_2}\right)\right) d(M,t) \nonumber \\
  &=\int_{\R^k\times[0,1]^k} f_{\rM}(M)\left(1-\Phi\left(\frac{\sqrt{k}\gamma_i/2 - \alpha\ipr{X',M}}{\sqrt{1-\alpha^2}\norm{X'}_2}\right)\right) d(M,t).
  \label{eq:integral}
  \end{align}
In the last equality, we use that $\rM$ and $\rt$ are independent and that the probability density function of $\rt$ is $1$ since each $\rt_i$ is uniform in $[0,1]$. This reduces $f_{M,t}(M,t)$ to the probability density function $f_{\rM}(M)$ of $\rM$ alone.
  
The same arguments for $\rho$ also gives the integral~\eqref{eq:integral}, with the small difference that $(1-\Phi(\cdot))$ is replaced by $\Phi(\cdot)$. This difference is irrelevant, since to bound the Lipschitz constant, we will differentiate and bound the differential's absolute value.

Let $g(M,t,\alpha)$ be the integrant above, we want to differentiate $\int_{\R^k\times[0,1]^k} g(M,t,\alpha) \enspace d(M,t)$ by differentiating under the integral. Standard measure theory results (Theorem 6.28, \cite{Klenke:2013}) allows us to do this if we satisfy three conditions. These conditions are, in this case, equivalent to the following
\begin{enumerate}
    \item[i)] for all constant $\alpha$
    , the integral $\int_{\R^k\times[0,1]^k} g(M,t,\alpha) \enspace d(M,t)$ is finite.
    \item[ii)] for all constant $M,t$, the partial differential of $g(M,t,\alpha)$ with respect to $\alpha$ exists.
    \item[iii)] There exists a function $h(M,t)$, where $\int_{\R^k\times[0,1]^k} h(M,t) \enspace d(M,t)$ is finite and such that $|\frac{\partial}{\partial\alpha}g(M,t,\alpha)|\le h(M,t)$ for all $\alpha$.
\end{enumerate}
The first two conditions are straightforward: The integral is equal to a probability, which is finite. And $g$ is a combination of differentiable functions making it differentiable itself.
The last condition is more cumbersome, but the goal of this proof is to upperbound the integral by a constant, which clearly dosn't depend on $\alpha$. Hence the last condition will be satisfied during the proof. 

Hence we can continue with our differentiation by differentiating under the integral.
\begin{align*}
    \bigg| \frac{\partial}{\partial\alpha}\Pr[y \ipr{h_{\rA,\rt}(w),\rA x}> \gamma_i/2 ] \bigg| 
    &= \\\left| \int_{\R^k\times[0,1]^k} \dfrac{\partial}{\partial\alpha}f_{\rM}(M)\left(1-\Phi\left(\frac{\sqrt{k}\gamma_i/2 - \alpha\ipr{X',M}}{\sqrt{1-\alpha^2}\norm{X'}_2}\right)\right) d(M,t) \right| 
    &= \\ \frac{1}{2\pi}\left| \int_{\R^k\times[0,1]^k} f_{\rM}(M)\exp\left(-\frac{1}{2}\left(\frac{\sqrt{k}\gamma_i/2 - \alpha\ipr{X',M}}{\sqrt{1-\alpha^2}\norm{X'}_2}\right)^2 \right)\left(\frac{\frac{\alpha\sqrt{k}\gamma_i}{2} - \ipr{X',M}}{(1-\alpha^2)^{3/2}\norm{X'}_2}\right) \enspace d(M,t) \right|
    &\le \\ \frac{1}{2\pi}\int_{\R^k\times[0,1]^k} f_{\rM}(M)\exp\left(-\frac{(\sqrt{k}\gamma_i/2 - \alpha\ipr{X',M})^2}{2(1-\alpha^2)\norm{X'}_2^2}\right)\left|\frac{\frac{\alpha\sqrt{k}\gamma_i}{2} - \ipr{X',M}}{(1-\alpha^2)^{3/2}\norm{X'}_2} \right| \enspace d(M,t).
\end{align*}
For both $\alpha \in [-1/\sqrt{2}, 0]$ and $\alpha \in \left(\gamma_i, 1/\sqrt{2}\right]$, we have that $1\ge (1-\alpha^2) \geq 1/2$ and thus, for both $\phi$ and $\rho$, the above is upper bounded by
\begin{align*}
\frac{2^{3/2}}{2 \pi} \cdot &\int_{\R^k\times[0,1]^k} f_{\rM}(M)\exp\left(-\frac{(\sqrt{k}\gamma_i/2 - \alpha\ipr{X',M})^2}{2\norm{X'}_2^2}\right)\frac{\sqrt{k}\gamma_i + \left|\ipr{X',M}\right| }{\norm{X'}_2} \enspace d(M,t)  \\
&\le
\int_{\R^k\times[0,1]^k} f_{\rM}(M)\exp\left(-\frac{(\sqrt{k}\gamma_i/2 - \alpha\ipr{X',M})^2}{2\norm{X'}_2^2}\right)\left(\frac{\sqrt{k}\gamma_i}{\norm{X'}_2} + \|M\|_2 \right)\enspace d(M,t).
\end{align*}
We now use that $|X'_i| \geq (1/2)(10 \sqrt{k})^{-1}$ for all $i$. This implies $\|X'\|_2 \geq \sqrt{k (1/4)(10 \sqrt{k})^{-2}} = 1/20$. We may thus further upper bound the above by
\begin{align}
20 \cdot \int_{\R^k\times[0,1]^k} f_{\rM}(M)\exp\left(-\frac{(\sqrt{k}\gamma_i/2 - \alpha\ipr{X',M})^2}{2\norm{X'}_2^2}\right)\left(\sqrt{k}\gamma_i + \|M\|_2 \right)\enspace d(M,t).\label{eq:finalintegral}
\end{align}
We will bound~\eqref{eq:finalintegral}, by splitting it into 3 cases: 
\begin{align*}
\textit{i)}\enspace\norm{M}_2^2\le \frac{9}{10}k\qquad\qquad\qquad
\textit{ii)}\enspace\frac{9}{10}k\le\norm{M}_2^2\le \frac{4}{3}k\qquad\qquad\qquad
\textit{iii)}\enspace\norm{M}_2^2\ge \frac{4}{3}k.
\end{align*}
The arguments for cases \textit{i)} and \textit{iii)} do not depend on $\alpha$, and hence are identical for $\rho$ and $\phi$. In those cases, we simply exploit that $\|\rM\|_2^2 \sim \chi^2_k$ and thus these cases are very unlikely. This implies that the integral over $f_M(M)$ is so small that we can afford to upper bound the exponential term in~\eqref{eq:finalintegral} by 1. For case \textit{ii)}, we can use the assumptions on $\|M\|_2^2$ to show that the exponential term is no more than $c\exp(-\gamma_i^2 k/c)$ for a constant $c>0$. We proceed to the three cases.

\paragraph{case \textit{i)}.}
We simply upper bound the exponential term in~\eqref{eq:finalintegral} by $1$ and use the assumption that $\norm{M}_2^2\le \frac{9}{10}k$ to conclude
\begin{align*}
\exp\left(-\frac{(\sqrt{k}\gamma_i/2 - \alpha\ipr{X',M})^2}{2\norm{X'}_2^2}\right)\left(\sqrt{k}\gamma_i + \|M\|_2 \right)
\le 2\sqrt{k}.
\end{align*}
Now since $\rM$ is multivariate standard normal, $\norm{\rM}_2^2$ is $\chi_k^2$ distributed. Let $\rZ \sim \chi_k^2$ with probability density function $f_Z(z)$. Then the integral in~\eqref{eq:finalintegral} in is bounded by:
\begin{align*}
40\sqrt{k} \int_{(\sqrt{9 k/10})\Ball_2^k} f_{\rM}(M)\enspace dM
= 40\sqrt{k} \int_0^{9 k/10} f_{Z}(z)\enspace dz 
= 40\sqrt{k} \cdot \Pr[\rZ<9k/10],
\end{align*}
which by Theorem~\ref{thm:chibound} is less than
$$80\sqrt{k} \exp\left(-k\gamma_i^2/800\right).$$
\paragraph{case \textit{ii)}.}
We use the assumption that $\frac{9}{10}k\le\norm{M}_2^2\le \frac{4}{3}k$ together with the following observations
\begin{remark}\label{rem:case_ii_norm_bound}
   If $\|X\|_2^2 \leq 4/3$, then $\norm{X'}_2^2 < 2$.
\end{remark}
\begin{remark}\label{rmk:ipXX'}
    If $\norm{X}_2^2\ge 9/10$, then $(8/9)\norm{X}_2^2\le \ipr{X,X'}\le (10/9)\norm{X}_2^2$.
\end{remark}
We prove Remark~\ref{rem:case_ii_norm_bound} and Remark~\ref{rmk:ipXX'} in Appendix~\ref{sec:aux}.
Since $\sqrt{k}X = M$, Remark~\ref{rmk:ipXX'} gives $\ipr{X',M}\ge (8/10) \sqrt{k}$ 
and hence
\begin{alignat*}{3}
    \alpha > \gamma_i 
    &\Longrightarrow\quad  \frac{\sqrt{k}\gamma_i}{2} - \alpha\ipr{X',M}
    \le-\frac{3\sqrt{k}\gamma_i}{10}
    \le 0
    &&\Longrightarrow\quad 
    -\left(\frac{\sqrt{k}\gamma_i}{2} - \alpha\ipr{X',M}\right)^2
    \le -\frac{9k\gamma_i^2}{100}\\
    \alpha < 0 
    &\Longrightarrow\quad
    \frac{\sqrt{k}\gamma_i}{2} - \alpha\ipr{X',M} 
    \ge\frac{\sqrt{k}\gamma_i}{2}
    \ge 0 
    &&\Longrightarrow\quad
    -\left(\frac{\sqrt{k}\gamma_i}{2} - \alpha\ipr{X',M}\right)^2
    \le -\frac{k\gamma_i^2}{4}.
\end{alignat*}
Hence for both $\phi$ and $\rho$, the last two factors of the integral in~\eqref{eq:finalintegral} are bounded by:
\begin{align*}
\exp\left(-\frac{(\sqrt{k}\gamma_i/2 - \alpha\ipr{X',M})^2}{2\norm{X'}_2^2}\right)\left(\sqrt{k}\gamma_i + \norm{M}_2 \right)
\le\frac{5}{2}\sqrt{k}\exp\left(-\frac{9 k\gamma_i^2}{100 \cdot 4}\right).
\end{align*}
Which gives the following:
\begin{align*}
50\sqrt{k}\exp\left(-\frac{k\gamma_i^2}{50}\right) \int_{(\sqrt{9 k/10} \cdot \Ball_2^k)^C\cap (\sqrt{4 k/3} \cdot \Ball_2^k)} f_{\rM}(M)\enspace dM
\le 50\sqrt{k}\exp\left(-\frac{k\gamma_i^2}{50}\right).
\end{align*}

\paragraph{case \textit{iii)}.}
We bound the last two factors of the integral~\eqref{eq:finalintegral} under the assumption that $\norm{M}_2^2\ge \frac{4}{3}k$. Here we simply upper bound the exponential by $1$ and get
\begin{align*}
\exp\left(-\frac{(\sqrt{k}\gamma_i/2 - \alpha\ipr{X',M})^2}{2\norm{X'}_2^2}\right)\left(\sqrt{k}\gamma_i + \|M\|_2 \right)
\le 2\norm{M}_2,
\end{align*}
hence the integral~\eqref{eq:finalintegral} is bounded by:
\begin{align}
40 \int_{(\sqrt{4 k/3} \cdot \Ball_2^k)^C} f_{\rM}(M)\norm{M}_2 \enspace dM \label{eq:intz}.
\end{align}
Recall that $\norm{\rM}_2^2\sim\chi_k^2$ and let $\rZ\sim \chi_k^2$ with probability density function $f_Z(z)$. Then the integral~\eqref{eq:intz} is equal to
\begin{align*}
    40\int_{4k/3}^\infty f_{Z}(z)\sqrt{z} \enspace dz.
\end{align*}
Let also $L_i = \left[\frac{4}{3}k\cdot2^{i}, \frac{4}{3}k\cdot2^{i+1} \right)$  for $i\in\Z_{\ge0}$. By definition, the $L_i$'s partition $\left[\frac{4}{3}k,\infty\right)$, and we upper bound with: 
\begin{align*}
    40\sum_{i=0}^\infty \int_{L_i} f_{Z}(z)\sqrt{\frac{4}{3}k\cdot2^{i+1}} \enspace dz
    &\le 70 \sqrt{k}\sum_{i=0}^\infty \Pr[\rZ\in L_i] 2^{i/2}
    \le 70 \sqrt{k} \sum_{i=0}^\infty \Pr\left[\rZ\ge \frac{4}{3}k\cdot2^{i}\right] 2^{i/2}.
\end{align*}
Using the following remark:
\begin{remark}[\cite{LaurentMassart}, equation 4.3, page 1326]\ \newline
    Let $\rZ\sim \chi^2_k$, $y > 0$ then:
    $$\Pr[\rZ\ge2\sqrt{ky}+2y+k]\le \exp(-y).$$
\end{remark}
With $y= c\frac{4}{3} k$, where $c = \frac{1}{8^2}2^i$. We have: 
\begin{align*}
    2\sqrt{ky}+2y+k 
    &= \left(\sqrt{c}\frac{4}{\sqrt{3}}+c\frac{8}{3}+1\right)k
    \le \left(\frac{4\sqrt{3}}{24}+\frac{1}{24}+1\right)k\cdot 2^i
    \le \frac{4}{3}k\cdot 2^i.
\end{align*}
Hence we can finish the bound for this case, using the assumption that $k\ge \gamma_i^{-2} 72 \ln(2)$
\begin{align*}
    70\sqrt{k} \sum_{i=0}^\infty \Pr\left[\rZ\ge \frac{4}{3}k\cdot2^{i}\right] 2^{i/2}
    &\le70\sqrt{k} \sum_{i=0}^\infty \exp\left(-\frac{1}{48} k \cdot 2^i\right) 2^{i/2}\\
    &\le70\sqrt{k}\exp\left(-\frac{1}{48} k\right) \sum_{i=0}^\infty \exp\left(-\frac{1}{48} k \right)^{i} 2^{i/2}\\
    &\le140\sqrt{k}\exp\left(-\frac{1}{48} k\gamma_i^2\right).
\end{align*}

\paragraph{Collecting the three cases.}
Hence in total $|\frac{\partial}{\partial\alpha}\phi|$ for $\alpha\in [-1/\sqrt{2}, 0]$ and $|\frac{\partial}{\partial\alpha}\rho|$ for $\alpha\in \left(\gamma_i,1\sqrt{2}\right]$ are bounded by:
\begin{align*}
    80\sqrt{k} \exp\left(- k\gamma_i^2/800\right)
    + 50\sqrt{k}\exp\left(-\frac{k\gamma_i^2}{50}\right)
    + 140\sqrt{k}\exp\left(-\frac{1}{48} k\gamma_i^2\right)\\
    \le 140\exp\left(-k\gamma_i^2/800\right).
\end{align*}
Using that $\gamma_{i+1}=2\gamma_i$, this completes the proof of Lemma~\ref{lem:PhiLips}.

\section{Meet in the Middle Bound}
\label{sec:meetinmid}
The goal of this section is to prove the following
\begin{customlem}{\ref{lem:supW}}
There is a constant $c>0$ such that with probability at least $1-\delta$ over $\rS \sim \cD^n$ we have
\begin{align*}
    \sup_{w \in \subH(\Gamma_i,L_j)} \left|\E_{\rA,\rt} [\Loss^{\gamma_i/2}_{\rA \cD}(h_{\rA,\rt}(w)) - \Loss^{\gamma_i/2}_{\rA \rS}(h_{\rA,\rt}(w))]\right| &\leq\\ c \left(\sqrt{\frac{(\ell_{j+1}+ \exp(-\gamma_{i+1}^2k/c)) (k + \ln(e/\delta))}{n}} + \frac{(k + \ln(e/\delta))}{n} \right).
\end{align*}
\end{customlem}
Notice here that the two losses $\Loss^{\gamma_i/2}_{\rA \cD}(h_{\rA,\rt}(w))$ and $\Loss^{\gamma_i/2}_{\rA \rS}(h_{\rA,\rt}(w))$ refer to the same margin $\gamma_i/2$ and $h_{\rA,\rt}(w)$ has been discretized to have all coordinates of the form $(1/2)(10 \sqrt{k})^{-1} + z (10 \sqrt{k})^{-1}$ for integer $z$. Intuitively, we will try to exploit this discretization to union bound over a grid of finitely many hypotheses. Unfortunately, the random matrix $\rA$ may increase the norm of $w$ arbitrarily much, and thus a single grid is insufficient. Instead, we need an infinite sequence of grids. For this, let $\Disc_0$ denote the set of all vectors in $4 \Ball_2^k$ whose coordinates are of the form $(1/2)(10 \sqrt{k})^{-1} + z(10 \sqrt{k})^{-1}$ for integer $z$. More generally, let $\Disc_i$ for $i > 0$ denote the set of all vectors in $(2^i \cdot 4 \Ball_2^k)$ whose coordinates are of this form. Since $\|x\|_1 \leq \sqrt{k} \|x\|_2$ for any $x \in \R^k$, we have that $\Disc_i \subset (2^i \cdot 4 \Ball_2^k) \subseteq \sqrt{k}(2^i \cdot 4 \Ball_1^k)$. For a vector $x \in \Disc_i$, let $i(x)=(i_1,\dots,i_k)$ denote the integers so that $x = (10 \sqrt{k})^{-1} i(x) + (1/2)(10 \sqrt{k})^{-1} \AllOne$ with $\AllOne \in \R^k$ the all-1's vector. Then by the triangle inequality, we have $(10 \sqrt{k})^{-1}\|i(x)\|_2 \leq \|x\|_2 + (1/2)(10 \sqrt{k})^{-1}\|\AllOne\|_2 \leq 2^i \cdot 4 + 1/20$. This implies $\|i(x)\|_1 \leq (10 \sqrt{k}) \sqrt{k} (2^i \cdot 4 + 1/20) \leq (5 \cdot 2^{i+3}+1)k$. Since each coordinate of $i(x)$ is an integer, there are thus at most $2^k$ choices for the signs and $\sum_{t=0}^{(5 \cdot 2^{i+3}+1)k} \binom{k + t -1}{t}$ choices for the absolute values of the integers. That is, we have
\begin{align}
|\Disc_i| \leq 2^k \cdot \sum_{t=0}^{(5 \cdot 2^{i+3}+1) k} \binom{k + t -1}{t} \leq 2^{(5 \cdot 2^{i+3}+3)k} \leq 2^{2^{i+7}k}.\label{eq:netsize}
\end{align}
We now start by considering a fixed outcome $A$ of the random matrix $\rA$. For such a fixed $A$, the training set $\rS$ behaves well in the sense that $\Loss^\gamma_{A \cD}(w)$ and $\Loss^\gamma_{A \rS}(w)$ are close with high probability for any $w$. This is formalized in the following remark
\begin{remark}
\label{rmk:concentrationx}
For any distribution $\cD$ over $\finalX \times \{-1,1\}$, fixed $w \in \finalH$, margin $\gamma$ and any $A \in \R^{k \times d}$, it holds with probability at least $1-\delta$ over $\rS \sim \cD^n$ that
\[
|\Loss_{A\cD}^{\gamma}(w) - \Loss_{A\rS}^{\gamma}(w)| \leq \sqrt{\frac{8\Loss_{A \cD}^{\gamma}(w)\ln(1/\delta)}{n}} + \frac{2 \ln(1/\delta)}{n}.
\]
\end{remark}
The proof of Remark~\ref{rmk:concentrationx} is a simple application of Bernstein's and can be found in Appendix~\ref{sec:aux}.

In Lemma~\ref{lem:supW}, the matrix $\rA$ is not fixed but random. Thus we need to find a formal property of the training set $\rS$ under which $\Loss^{\gamma_i/2}_{\rA \cD}(h_{\rA,\rt}(w))$ and $\Loss^{\gamma_i/2}_{\rA \rS}(h_{\rA,\rt}(w))$ are close in expectation over the random choice of $\rA$. With this goal in mind, we now say that a matrix $A$ in the support of $\rA$ and a training set $S$ has \emph{distortion} at least $\beta$, if there is a grid $\Disc_a$ and a vector $w \in \Disc_a$ such that
\[
|\Loss_{A\cD}^{\gamma_i/2}(w) - \Loss_{A\rS}^{\gamma_i/2}(w)| > \beta \cdot \left(\sqrt{\frac{8\Loss_{A \cD}^{\gamma_i/2}(w)(2^{a+7}k + \ln(1/\delta))}{n}} + \frac{2 (2^{a+7}k + \ln(1/\delta))}{n}\right).
\]
For a training set $S$, we use $D_\beta(S)$ to denote the set of matrices $A$ with distortion at least $\beta$ for $S$.

We observe that for a fixed matrix $A$, grid $\Disc_a$ and $\beta>1$, we have by Remark~\ref{rmk:concentrationx} with $\delta'_a = (\delta/2^{2^{a+7}k})^{\beta}$
and a union bound over all $w \in \Disc_a$, that with probability at least $1-|\Disc_a|\delta'_a$, it holds for all $w \in \Disc_a$ that
\begin{align*}
    |\Loss_{A\cD}^{\gamma_i/2}(w) - \Loss_{A\rS}^{\gamma_i/2}(w)| &\leq \sqrt{\frac{8\Loss_{A \cD}^{\gamma_i/2}(w)\ln(1/\delta'_a)}{n}} + \frac{2 \ln(1/\delta'_a)}{n} \\
    &= \sqrt{\frac{8\Loss_{A \cD}^{\gamma_i/2}(w)(\beta 2^{a+7}k + \beta \ln(1/\delta))}{n}} + \frac{2 (\beta 2^{a+7}k + \beta \ln(1/\delta))}{n}  \\
    &\leq \beta \cdot \left(\sqrt{\frac{8\Loss_{A \cD}^{\gamma_i/2}(w)(2^{a+7}k + \ln(1/\delta))}{n}} + \frac{2 (2^{a+7}k + \ln(1/\delta))}{n}\right).
\end{align*}
Thus for $\beta \geq 2$, we have
\begin{align*}
\Pr_{\rS}[A \in D_\beta(\rS)] &\leq \sum_{a=0}^\infty |\Disc_a| \delta'_a \\
&\leq \sum_{a=0}^\infty \delta^\beta \cdot 2^{-(\beta-1)2^{a+7}k} \\
&\leq 2 \cdot \delta^{\beta} \cdot 2^{-(\beta-1)2^{7}k}.
\end{align*}
By Markov's inequality, we have
\begin{align*}
\Pr_{\rS}[\Pr_{\rA}[\rA \in D_\beta(\rS)] > 2 \cdot \delta^{\beta/2} \cdot 2^{-(\beta-1)\cdot 2^{6}k}] &\leq \frac{\E_{\rS}[\Pr_{\rA}[\rA \in D_\beta(\rS)]}{2 \cdot \delta^{\beta/2} \cdot 2^{-(\beta-1)\cdot 2^{6}k}} \\
&= \frac{\E_{\rA}[\Pr_{\rS}[\rA \in D_\beta(\rS)]}{2 \cdot \delta^{\beta/2} \cdot 2^{-(\beta-1)\cdot 2^{6}k}}\\
&\leq \delta^{\beta/2} \cdot 2^{-(\beta-1)2^{6}k}.
\end{align*}
Now call a training set $S$ \emph{representative} if it holds for every $\beta=2^h$ with integer $h \geq 1$ that
\[
\Pr_{\rA}[\rA \in D_\beta(\rS)] \leq 2 \cdot \delta^{\beta/2} \cdot 2^{-(\beta-1)\cdot 2^{6}k}.
\]
A union bound implies that $\rS$ is representative with probability at least
\[
1-\sum_{h=1}^\infty 2 \cdot \delta^{2^{h-1}} \cdot 2^{-(2^h-1)2^{6}k} \geq 1-\frac{\delta}{2^{2^6 k-2}} \geq 1-\delta.
\]
Now define for integer $h \geq 1$ the set
\[
K_h(S) = D_{2^h}(S) \setminus \left(\cup_{b=h+1}^{\infty} D_{2^b}(S) \right).
\]
Let $K_0(S)$ be defined as
\[
K_0(S) = \support(\rA) \setminus \left(\cup_{b=1}^{\infty} D_{2^b}(S) \right).
\]

For any $w \in \finalH$, we may use the triangle inequality to conclude
\begin{align*}
 \left|\E_{\rA,\rt} [\Loss^{\gamma_i/2}_{\rA \cD}(h_{\rA,\rt}(w)) - \Loss^{\gamma_i/2}_{\rA S}(h_{\rA,\rt}(w))]\right| &\leq \\
 \sum_{h=0}^\infty \E_{\rA,\rt}\left[\left|\Loss^{\gamma_i/2}_{\rA \cD}(h_{\rA,\rt}(w)) - \Loss^{\gamma_i/2}_{\rA S}(h_{\rA,\rt}(w))\right| \mid \rA \in K_h(S)\right] \Pr_{\rA}[\rA \in K_h(S)].
\end{align*}
Now consider an $A \in K_h(S)$. Then $A$ has distortion no more than $2^{h+1}$ by definition of $K_h(S)$. This implies that if $h_{A,t}(w)$ is in $\Disc_a$ but not $\Disc_b$ for $b < a$, then $\|h_{A,t}(w)\|_2 \geq 2^{a+1}$ by definition of $\Disc_b$ and we get
\begin{align*}
|\Loss_{A\cD}^{\gamma_i/2}(h_{A,t}(w)) - \Loss_{A\rS}^{\gamma_i/2}(h_{A,t}(w))| &\leq\\
2^{h+1} \cdot \left(\sqrt{\frac{8\Loss_{A \cD}^{\gamma_i/2}(w)(2^{a+7}k + \ln(1/\delta))}{n}} + \frac{2 (2^{a+7}k + \ln(1/\delta))}{n}\right) 
&\leq \\
2^{h+8}  \|h_{A,t}(w)\|_2 \cdot \left(\sqrt{\frac{8\Loss_{A \cD}^{\gamma_i/2}(w)(k + \ln(1/\delta))}{n}} + \frac{2 (k + \ln(1/\delta))}{n}\right).
\end{align*}
Using Cauchy-Schwartz, we thus get for any $w \in \finalH$ that
\begin{align*}
 \left|\E_{\rA,\rt} [\Loss^{\gamma_i/2}_{\rA \cD}(h_{\rA,\rt}(w)) - \Loss^{\gamma_i/2}_{\rA S}(h_{\rA,\rt}(w))]\right| &\leq \\
\sum_{h=0}^\infty 2^{h+8} \E_{\rA,\rt}\bigg[\|h_{\rA,\rt}(w)\|_2  \cdot  \bigg(\sqrt{\frac{8\Loss_{\rA \cD}^{\gamma_i/2}(w)(k + \ln(1/\delta))}{n}} &+\\
\frac{2 (k + \ln(1/\delta))}{n}\bigg)\mid \rA \in K_h(S)\bigg]\Pr_{\rA}[\rA \in K_h(S)]&\leq \\
\sum_{h=0}^\infty 2^{h+8} \sqrt{\E_{\rA,\rt}\left[\|h_{\rA,\rt}(w)\|^2_2  \mid \rA \in K_h(S) \right]} &\ \cdot \\
\sqrt{\E_{\rA,\rt}\left[ \left(\sqrt{\frac{8\Loss_{\rA \cD}^{\gamma_i/2}(w)(k + \ln(1/\delta))}{n}} + \frac{2 (k + \ln(1/\delta))}{n}\right)^2\mid \rA \in K_h(S)\right]}\Pr_{\rA}[\rA \in K_h(S)].
\end{align*}
By Cauchy-Schwartz, this is at most
\begin{align*}
\sqrt{\sum_{h=0}^\infty 2^{2h+16}\E_{\rA,\rt}[\|h_{\rA,\rt}(w)\|_2^2 \mid \rA \in K_h(S) ]  \Pr_{\rA}[\rA \in K_h(S)]} &\ \cdot\\
\sqrt{\sum_{h=0}^\infty \E_{\rA,\rt}\left[ \left(\sqrt{\frac{8\Loss_{\rA \cD}^{\gamma_i/2}(w)(k + \ln(1/\delta))}{n}} + \frac{2 (k + \ln(1/\delta))}{n}\right)^2\mid \rA \in K_h(S)\right]\Pr_{\rA}[\rA \in K_h(S)] }.
\end{align*}
Using Cauchy-Schwartz again and Jensen's inequality, the first sum is bounded by
\begin{align*}
\sum_{h=0}^\infty 2^{2h+16} \E_{\rA,\rt}[\|h_{\rA,\rt}(w)\|^2_2 \mid \rA \in K_h(S) ] \Pr_{\rA}[\rA \in K_h(S)] &\leq \\
\sqrt{ \sum_{h=0}^\infty 2^{4h + 64}\Pr_{\rA}[\rA \in K_h(S)] } \cdot \sqrt{\sum_{h=0}^\infty \E_{\rA,\rt}[\|h_{\rA,\rt}(w)\|^2_2 \mid \rA \in K_h(S) ]^2 \Pr_{\rA}[\rA \in K_h(S)] } &\leq \\
\sqrt{ \sum_{h=0}^\infty 2^{4h + 64}\Pr_{\rA}[\rA \in D_{2^h}(S)] } \cdot \sqrt{\sum_{h=0}^\infty \E_{\rA,\rt}[\|h_{\rA,\rt}(w)\|^4_2 \mid \rA \in K_h(S) ] \Pr_{\rA}[\rA \in K_h(S)] } &\leq \\
\sqrt{ \sum_{h=0}^\infty 2^{4h + 64} 2 (\delta/2^{2^7k+1})^{(2^h-1)/2} } \cdot \sqrt{\E_{\rA,\rt}[\|h_{\rA,\rt}(w)\|^4_2 ] } &\leq \\
2^{33} \cdot \sqrt{\E_{\rA,\rt}[\|h_{\rA,\rt}(w)\|^4_2 ] }.
\end{align*}
Using Jensen's inequality on the second sum, we find that
\begin{align*}
\sum_{h=0}^\infty \E_{\rA,\rt}\left[ \left(\sqrt{\frac{8\Loss_{\rA \cD}^{\gamma_i/2}(w)(k + \ln(1/\delta))}{n}} + \frac{2 (k + \ln(1/\delta))}{n} \right)^2\mid \rA \in K_h(S)\right]\Pr_{\rA}[\rA \in K_h(S)] &=\\
\E_{\rA,\rt}\left[ \left(\sqrt{\frac{8\Loss_{\rA \cD}^{\gamma_i/2}(w)(k + \ln(1/\delta))}{n}} + \frac{2 (k + \ln(1/\delta))}{n} \right)^2\right].
\end{align*}
For positive constants $c_0,c_1,c_2$, we have that the function $f(t)=(\sqrt{c_0 t + c_1} + c_2)^2$ is concave for $t \geq 0$. To see this, we compute its derivative 
\[
f'(t) = 2(\sqrt{c_0 t + c_1} + c_2) \cdot \frac{c_0}{2\sqrt{c_0 t + c_1}} = c_0 + \frac{c_0 c_2}{\sqrt{c_0 t + c_1}},
\]
and its second derivative
\begin{align*}
f''(t) &= \frac{-c_0^2 c_2}{2 (c_0 t + c_1)^{3/2}}.
\end{align*}
This is a negative function for $t \geq 0$. We thus use Jensen's inequality to conclude
\begin{align*}
\E_{\rA,\rt}\left[ \left(\sqrt{\frac{8\Loss_{\rA \cD}^{\gamma_i/2}(w)(k + \ln(1/\delta))}{n}} + \frac{2 (k + \ln(1/\delta))}{n} \right)^2\right] &\leq \\
 \left(\sqrt{\frac{8\E_{\rA,\rt}\left[\Loss_{\rA \cD}^{\gamma_i/2}(w)\right] (k + \ln(1/\delta))}{n}} + \frac{2 (k + \ln(1/\delta))}{n} \right)^2.
\end{align*}
Combining it all, we have thus shown
\begin{align*}
\left|\E_{\rA,\rt} [\Loss^{\gamma_i/2}_{\rA \cD}(h_{\rA,\rt}(w)) - \Loss^{\gamma_i/2}_{\rA S}(h_{\rA,\rt}(w))]\right| &\leq \\
\sqrt{2^{33} \cdot \sqrt{\E_{\rA,\rt}[\|h_{\rA,\rt}(w)\|_2^4]}} \cdot \sqrt{\left(\sqrt{\frac{8\E_{\rA,\rt}\left[\Loss_{\rA \cD}^{\gamma_i/2}(w)\right] (k + \ln(1/\delta))}{n}} + \frac{2 (k + \ln(1/\delta))}{n} \right)^2} &\leq \\
2^{17} \cdot \E_{\rA,\rt}[\|h_{\rA,\rt}(w)\|_2^4]^{1/4} \cdot \left(\sqrt{\frac{8\E_{\rA,\rt}\left[\Loss_{\rA \cD}^{\gamma_i/2}(w)\right] (k + \ln(1/\delta))}{n}} + \frac{2 (k + \ln(1/\delta))}{n} \right).
\end{align*}
We now bound $\E_{\rA,\rt}[\|h_{\rA,\rt}(w)\|_2^4]$ as follows
\begin{align*}
\E_{\rA,\rt}[\|h_{\rA,\rt}(w)\|_2^4] &= \\
\E_{\rA,\rt}[\|\rA w + (h_{\rA,\rt}(w)-\rA w)\|_2^4] &\leq \\
\E_{\rA,\rt}[\left(\|\rA w\|_2 + \|h_{\rA,\rt}(w)-\rA w\|_2\right)^4] &\leq\\
\E_{\rA,\rt}\left[\left(\|\rA w\|_2 + \sqrt{k (10\sqrt{k})^{-2}}\right)^4\right] &=\\
\E_{\rA,\rt}\left[\left(\|\rA w\|_2 + 1/10\right)^4\right] &=\\
\sum_{b=0}^4 \binom{4}{b} \E_{\rA,\rt}[\|\rA w\|_2^b] 10^{-(4-b)}.
\end{align*}
Recalling that $\|\rA w\|_2^2 \sim (1/k)\chi_k^2$, we have from the moments of the chi-square distribution that for even $k \geq 4$:
\[
\E_{\rA,\rt}[\|\rA w\|_2^b] \leq \E_{\rA,\rt}[\|\rA w\|_2^4] =k^{-2}\E_{\rA,\rt}[(k\|\rA w\|_2^2)^2] = k^{-2} 2^2 \frac{(2 + k/2)!}{(k/2)!} \leq 4.
\]
Hence
\begin{align*}
\E_{\rA,\rt}[\|h_{\rA,\rt}(w)\|_2^4] \leq
\sum_{b=0}^4 \binom{4}{b} 4 \cdot 10^{-(4-b)} \leq (4 + 1/10)^4 < 5^4.
\end{align*}
We thus have
\begin{align*}
\left|\E_{\rA,\rt} [\Loss^{\gamma_i/2}_{\rA \cD}(h_{\rA,\rt}(w)) - \Loss^{\gamma_i/2}_{\rA S}(h_{\rA,\rt}(w))]\right| &\leq \\
2^{20} \cdot \left(\sqrt{\frac{8\E_{\rA,\rt}\left[\Loss_{\rA \cD}^{\gamma_i/2}(w)\right] (k + \ln(1/\delta))}{n}} + \frac{2 (k + \ln(1/\delta))}{n} \right).
\end{align*}
Finally, we exploit that for any $w \in \subH(\Gamma_i, L_j)$, we have by definition that $\Loss_\cD^{(3/4)\gamma_{i}}(w)\leq \ell_{j+1}$. Thus for any such $w$, we have
\begin{align*}
\E_{\rA,\rt}[\Loss^{\gamma_i/2}_{\rA \cD}(w)] &=\E_{\rA,\rt}[\Pr_{(\rx,\ry)\sim \cD}[\ry \langle h_{\rA,\rt}(w), \rA\rx \rangle \leq \gamma_i/2]] \\
&=
\E_{(\rx,\ry)\sim \cD}[\Pr_{\rA,\rt}[\ry \langle h_{\rA,\rt}(w), \rA\rx \rangle \leq \gamma_i/2]]
\\&\leq
\Pr_{(\rx,\ry)\sim \cD}[\ry\langle w, \rx \rangle \leq (3/4)\gamma_{i}] \\
&+ \E_{(\rx,\ry)\sim \cD}[\Pr_{\rA,\rt}[\ry \langle h_{\rA,\rt}(w), \rA\rx \rangle \leq \gamma_i/2] \mid \ry\langle w, \rx \rangle > (3/4)\gamma_{i}] \\
&\leq \Loss_{\cD}^{(3/4)\gamma_{i}}(w)  + \sup_{\mu > (3/4)\gamma_i}[\Pr_{\rA,\rt}[\langle h_{\rA,\rt}(w), \rA x \rangle \leq \gamma_i/2 \mid y\ipr{w,x}=\mu].
\end{align*}
Using Lemma~\ref{lem:concdiscretize} and that $\Loss^{(3/4)\gamma_i}_\cD(w) \in L_j$ by definition of $\subH(\Gamma_i,L_j)$, there is a constant $c>0$ such that this is bounded by
\begin{align*}
&\leq \Loss_{\cD}^{(3/4)\gamma_{i}}(w) + c\exp(-k(\gamma_{i}/4)^2/c)\\
&\leq \ell_{j+1} + c \exp(-k \gamma_{i+1}^2/(16 c)).
\end{align*}
We have thus reached the conclusion that there is a constant $c>0$, such that with probability at least $1-\delta$ over $\rS \sim \cD^n$, it holds that
\begin{align*}
    \sup_{w \in \subH(\Gamma_i,L_j)} \left|\E_{\rA,\rt} [\Loss^{\gamma_i/2}_{\rA \cD}(h_{\rA,\rt}(w)) - \Loss^{\gamma_i/2}_{\rA S}(h_{\rA,\rt}(w))]\right| &\leq\\ c \cdot \left(\sqrt{\frac{(\ell_{j+1} + \exp(-k \gamma_{i+1}^2/c)) (k + \ln(1/\delta))}{n}} + \frac{k + \ln(1/\delta)}{n} \right).
\end{align*}
This completes the proof of Lemma~\ref{lem:supW}.

\section{Within Constant Factors}
\label{sec:withinconstant}
In this section we prove
\begin{customlem}{\ref{lem:subgoal2}}
There is a constant $c>1$, such that for any $0 < \delta < 1$ and any $\Gamma_i = (\gamma_i, \gamma_{i+1}]$, it holds with probability at least $1-\delta$ over a random sample $\rS \sim \cD^n$ that
\begin{align*}
\forall w \in \finalH : \Loss_{\rS}^{\gamma_i}(w) \geq \frac{\Loss_{ \cD}^{(3/4)\gamma_i}(w)}{4} - c \left(\frac{\ln(e\gamma_{i+1}^2 n)}{\gamma_{i+1}^2 n} - \frac{\ln(e/\delta)}{n}\right).
\end{align*}
\end{customlem}
The proof follows mostly the ideas in~\cite{SVMbest} that were outlined in the proof overview in Section~\ref{sec:overview}. 

\begin{proof}
Let $k \geq 1$ be a parameter to be determined and consider the random construction of $\rA$ and $\rt$ as defined in Section~\ref{sec:mainargs}. Let $\Disc_a$ be defined as in Section~\ref{sec:meetinmid}, i.e.\ $\Disc_a$ contains all vectors in $2^a \cdot 4 \Ball_2^k$. We say that a matrix $A$ in the support of $\rA$ and a training set $S$ is $\alpha$-\emph{unusual}, if there is a vector $w \in \Disc_0$ such that
\[
\Loss_{A S}^{(7/8)\gamma_i}(w) < \frac{\Loss_{A \cD}^{(7/8)\gamma_i}(w)}{2} - \frac{2^{11} k + \ln(1/\alpha)}{n}.
\]
For a fixed matrix $A$ and vector $w \in \Net_0$, we have by Bernstein's inequality and $\E_{\rS}[\Loss_{A S}^{(7/8)\gamma_i}(w)]= \Loss_{A \cD}^{(7/8)\gamma_i}(w)$ that
\begin{align*}
    \Pr_{\rS}\left[\left|\Loss_{A \rS}^{(7/8)\gamma_i}(w) -\Loss_{A \cD}^{(7/8)\gamma_i}(w)\right|>t/n \right] < \exp\left(- \frac{\frac{1}{2} t^2}{n\Loss_{A \cD}^{(7/8)\gamma_i}(w) + \frac{1}{3}t}\right).
\end{align*}
Setting
\[
t = n \cdot \left( \frac{\Loss_{A \cD}^{(7/8)\gamma_i}(w)}{2} + Z\right)
\]
with $Z=16 \ln(1/\alpha)/n$ gives
\begin{align*}
    \Pr_{\rS}\Bigg[\bigg|\Loss_{A \rS}^{(7/8)\gamma_i}(w) -\Loss&_{A \cD}^{(7/8)\gamma_i}(w)\bigg|>\left( \frac{\Loss_{A \cD}^{(7/8)\gamma_i}(w)}{2} + Z\right) \Bigg]\\ 
    &< \exp\left(- \frac{\frac{n^2}{2} \left( \frac{\Loss_{A \cD}^{(7/8)\gamma_i}(w)}{2} + Z\right)^2}{n\Loss_{A \cD}^{(7/8)\gamma_i}(w) + \frac{n}{3}\left( \frac{\Loss_{A \cD}^{(7/8)\gamma_i}(w)}{2} + Z\right)}\right) \\
    &\leq \exp\left(- \frac{\frac{n^2}{8} \max\{\Loss_{A \cD}^{(7/8)\gamma_i}(w), Z\}^2}{2n \max\{\Loss_{A \cD}^{(7/8)\gamma_i}(w), Z\}}\right) \\
    &\leq \exp\left(-\frac{n Z}{16} \right) \\
    &= \alpha.
\end{align*}
A union bound over all $w \in \Disc_0$ with $\alpha' = \alpha/e^{2^7 k}$ gives that a fixed matrix $A$ is $\alpha$-unusual for $\rS \sim \cD^n$ with probability at most
\[
|\Disc_0| \frac{\alpha}{e^{2^7 k}} < \alpha.
\]
Now call a training set $S$ $\alpha$-representative if $\rA$ is $\alpha$-unusual for $S$ with probability less than $1/4$. By Markov's inequality, we have
\begin{align*}
\Pr_{\rS}[\Pr_{\rA}[(\rS,\rA) \textrm{ is $\alpha$-unusual}] \geq 1/4] &\leq \frac{\E_{\rS}[\Pr_{\rA}[(\rS,\rA) \textrm{ is $\alpha$-unusual}]]}{1/4} \\
&= 4 \cdot \E_{\rA}[\Pr_{\rS}[(\rS,\rA) \textrm{ is $\alpha$-unusual}]]\\
&\leq 4 \alpha.
\end{align*}
Thus
\begin{align}
\Pr_\rS[\rS \textrm{ is $\alpha$-representative}] \geq 1-4 \alpha.\label{eq:oftenrep}
\end{align}
We claim that if the training set $S$ is $\delta$-representative, then it holds for all $w \in \finalH$ that
\[
\Loss^\gamma_{S}(w) \geq \frac{\Loss_{ \cD}^{(3/4)\gamma_i}(w)}{4} - \frac{2^{11} k + \ln(4/\delta)}{n} -30 \exp(-k \gamma_{i+1}^2/2^{14}).
\]
To see this, consider an arbitrary such $S$ and a $w \in \finalH$. Sample $\rA$ and $\rt$ as in the previous section. Call $\rA, \rt$ \emph{good} for $w$ if it satisfies both $\|h_{\rA,\rt}(w)\|_2 \leq 4$ and $\Loss^{(7/8)\gamma_i}_{\rA \cD}(h_{\rA,\rt}(w)) \geq \Loss^{(3/4)\gamma_i}_{\cD}(w) - 25 \exp(-k \gamma_{i+1}^2/2^{14})$. For ease of notation, let $G_w$ denote the set of $(A,t)$ that are good for $w$. Similarly, let $U_S$ denote the set of $A$ where $A$ is $\delta$-unusual for $S$.

For all $w \in \finalH$, $\gamma \in \Gamma_i$, $A$ and $t$, we have that
\begin{align*}
    \Loss_{S}^\gamma(w) &\geq \Loss^{(7/8)\gamma_i}_{A S}(h_{A,\rt}(w)) - \Pr_{(\rx,\ry)\sim S}[\ry \langle w, \rx \rangle > \gamma \wedge \ry \langle h_{A,\rt}(w), A \rx \rangle \leq (7/8)\gamma_i].
\end{align*}
Thus
\begin{align}
    \Loss_{S}^\gamma(w) &\geq \E_{\rA,\rt}[\Loss^{(7/8)\gamma_i}_{\rA S}(h_{\rA,\rt}(w)) - \Pr_{(\rx,\ry)\sim S}[\ry \langle w, \rx \rangle > \gamma \wedge \ry \langle h_{\rA,\rt}(w), \rA \rx \rangle \leq (7/8)\gamma_i]] \nonumber\\
    &\geq \E_{\rA,\rt}[\Loss^{(7/8)\gamma_i}_{\rA S}(h_{\rA,\rt}(w)) \mid (\rA,\rt) \in G_w \wedge \rA \notin U_S] \Pr_{\rA,\rt}[(\rA,\rt) \in G_w \wedge \rA \notin U_S] \label{eq:condGood}\\
    &-\E_{\rA,\rt}[\Pr_{(\rx,\ry)\sim S}[\ry \langle w, \rx \rangle > \gamma \wedge \ry \langle h_{\rA,\rt}(w), \rA \rx \rangle \leq (7/8)\gamma_i]]\label{eq:randroundoff}.
\end{align}
For the term~\eqref{eq:condGood}, we observe that conditioned on $(\rA,\rt) \in G_w$, we have that $h_{\rA,\rt}(w) \in \Disc_0$ since $\|h_{\rA,\rt}(w)\|_2 \leq 4$. Secondly, when $\rA \notin U_S$, this implies by the definition of $\delta$-unusual that
\[
\Loss_{A S}^{(7/8)\gamma_i}(h_{\rA,\rt}(w)) \geq \frac{\Loss_{A \cD}^{(7/8)\gamma_i}(h_{\rA,\rt}(w))}{2} - \frac{2^{11} k + \ln(1/\delta)}{n}.
\]
Hence
\begin{align}
    &\E_{\rA,\rt}[\Loss^{(7/8)\gamma_i}_{\rA S}(h_{\rA,\rt}(w)) \mid (\rA,\rt) \in G_w \wedge \rA \notin U_S] \Pr_{\rA,\rt}[(\rA,\rt) \in G_w \wedge \rA \notin U_S] \geq \nonumber\\
    &\E_{\rA,\rt}\Bigg[\frac{\Loss_{A \cD}^{(7/8)\gamma_i}(h_{\rA,\rt}(w))}{2} \Bigg|(\rA,\rt) \in G_w \wedge \rA \notin U_S\Bigg]\Pr_{\rA,\rt}[(\rA,\rt)\in G_w \wedge \rA \notin U_S] - \frac{2^{11} k + \ln(1/\delta)}{n}.\label{eq:adbound}
\end{align}
Using again that $(\rA, \rt) \in G_w$, we have that~\eqref{eq:adbound} is at least
\begin{align}
    \frac{\Loss_{ \cD}^{(3/4)\gamma_i}(w)}{2} \Pr_{\rA,\rt}[(\rA,\rt)\in G_w \wedge \rA \notin U_S] - \frac{2^{11} k + \ln(1/\delta)}{n} -25 \exp(-k \gamma_{i+1}^2/2^{14}). \label{eq:lastprop}
\end{align}
We now bound $\Pr[(\rA,\rt) \in G_w]$  and $\Pr[\rA \notin U_S]$. For this, we recall that $\|\rA w\|_2^2 \sim (1/k)\chi_2^k$. Thus $\E[\|\rA w\|_2^2] =1$ and by Markov's, we get $\Pr[\|\rA w\|_2^2 \geq 9] \leq 1/9$. Conditioned on $\|\rA w\|_2^2 < 9$, we have $\|h_{\rA,\rt}(w)\|_2 \leq \|\rA w\|_2 + \|h_{\rA,\rt}(w)-\rA w\|_2 \leq \sqrt{9} + \sqrt{k (10 \sqrt{k})^{-2}} < 4$. Next observe that
\begin{align*}
    \Loss_{\rA \cD}^{(7/8)\gamma_i}(h_{\rA,\rt}(w)) &\geq \Loss_{\cD}^{(3/4)\gamma_i}(w) - \Pr_{(\rx, \ry)\sim \cD}[\ry \langle w, \rx \rangle \leq (3/4)\gamma_i \wedge \ry \langle h_{\rA,\rt}(w), \rA \rx \rangle > (7/8)\gamma_i].
\end{align*}
We have by Lemma~\ref{lem:concdiscretize} that there is a constant $c>0$ so that
\begin{align*}
    \E_{\rA,\rt}[\Pr_{(\rx, \ry)\sim \cD}[\ry \langle w, \rx \rangle \leq (3/4)\gamma_i \wedge \ry \langle h_{\rA,\rt}(w), \rA \rx \rangle > (7/8)\gamma_i]] &= \\
    \E_{(\rx, \ry)\sim \cD}[\Pr_{\rA,\rt}[\ry \langle w, \rx \rangle \leq (3/4)\gamma_i \wedge \ry \langle h_{\rA,\rt}(w), \rA \rx \rangle > (7/8)\gamma_i]] &\leq \\
    \sup_{x \in \finalX : \langle w, x\rangle \leq (3/4)\gamma_i}\Pr_{\rA,\rt}[ \langle h_{\rA,\rt}(w), \rA x \rangle > (7/8)\gamma_i] &\leq \\
    c\exp(-k(\gamma_i/8)^2/c) &\leq \\
    c\exp(-k \gamma_{i+1}^2/(2^{8}c)).
\end{align*}
Thus by Markov's inequality, we conclude
\begin{align*}
\Pr_{\rA,\rt}[\Loss_{\rA \cD}^{(7/8)\gamma_i}(h_{\rA,\rt}(w)) <\Loss_{\cD}^{(3/4)\gamma_i}(w)- 5c \exp(-k \gamma_{i+1}^2/(2^{8}c))] &\leq \\
    \Pr_{\rA,\rt}[\Pr_{(\rx, \ry)\sim \cD}[\ry \langle w, \rx \rangle \leq (3/4)\gamma_i \wedge \ry \langle h_{\rA,\rt}(w), \rA \rx \rangle > (7/8)\gamma_i] > 5 c \exp(-k \gamma_{i+1}^2/(2^{8}c))] &< 1/5.
\end{align*}
Finally, since we assumed $S$ is $\delta$-representative, we have $\Pr_{\rA}[\rA \in U_S] \leq 1/4$ by definition of $\delta$-representative. We conclude by a union bound that
\begin{align*}
\Pr_{\rA,\rt}[(\rA,\rt)\in G_w \wedge \rA \notin U_S] &\geq 1-1/9-1/5-1/4\geq 1/2.
\end{align*}
In summary, we have shown that~\eqref{eq:lastprop} is at least
\[
\frac{\Loss_{ \cD}^{(3/4)\gamma_i}(w)}{2} \cdot \frac{1}{2} - \frac{2^{11} k + \ln(1/\delta)}{n} -5 c \exp(-k \gamma_{i+1}^2/(2^{8}c)).
\]
Recalling that~\eqref{eq:condGood}~$\geq$~\eqref{eq:lastprop} gives
\begin{align*}
\E_{\rA,\rt}[\Loss^{(7/8)\gamma_i}_{\rA S}(h_{\rA,\rt}(w)) \mid (\rA,\rt) \in G_w \wedge \rA \notin U_S] \Pr_{\rA,\rt}[(\rA,\rt) \in G_w \wedge \rA \notin U_S] &\geq \\
\frac{\Loss_{ \cD}^{(3/4)\gamma_i}(w)}{4} - \frac{2^{11} k + \ln(1/\delta)}{n} -5c \exp(-k \gamma_{i+1}^2/(2^{8}c)).
\end{align*}
The term~\eqref{eq:randroundoff} can be bounded using Lemma~\ref{lem:concdiscretize} by
\begin{align*}
\E_{\rA,\rt}[\Pr_{(\rx,\ry)\sim S}[\ry \langle w, \rx \rangle > \gamma \wedge \ry \langle h_{\rA,\rt}(w), \rA \rx \rangle \leq (7/8)\gamma_i]] &= \\
\E_{(\rx,\ry)\sim S}[\Pr_{\rA,\rt}[\ry \langle w, \rx \rangle > \gamma \wedge \ry \langle h_{\rA,\rt}(w), \rA \rx \rangle \leq (7/8)\gamma_i]] &\leq \\
\sup_{x \in \finalX : \langle w, x\rangle > \gamma }\Pr_{\rA,\rt}[\langle h_{\rA,\rt}(w), \rA x \rangle \leq (7/8)\gamma_i] &\leq\\
c\exp(-k(\gamma - (7/8)\gamma_i)^2/c) &\leq \\
c\exp(-k\gamma_i^2/(64 c))&\leq \\
c\exp(-k\gamma_{i+1}^2/(2^{8}c)).
\end{align*}
In summary, we have shown that for $(\delta/4)$-representative $S$, it holds for all $w \in \finalH$ that
\begin{align*}
    \Loss_S^\gamma(w) \geq \frac{\Loss_{ \cD}^{(3/4)\gamma_i}(w)}{4} - \frac{2^{11} k + \ln(4/\delta)}{n} -6c \exp(-k \gamma_{i+1}^2/(2^{8}c)).
\end{align*}
We finally conclude from~\eqref{eq:oftenrep} that with probability at least $1-\delta$ over $\rS$, it holds for all $w \in \finalH$ that
\begin{align*}
    \Loss_{\rS}^\gamma(w) \geq \frac{\Loss_{ \cD}^{(3/4)\gamma_i}(w)}{4} - \frac{2^{11} k + \ln(4/\delta)}{n} -6c \exp(-k \gamma_{i+1}^2/(2^{8}c)).
\end{align*}
Picking $k=2^{8}c \gamma_{i+1}^{-2} \ln(\gamma^2_{i+1} n)$ finally results in
\begin{align*}
    \Loss_{\rS}^\gamma(w) \geq \frac{\Loss_{ \cD}^{(3/4)\gamma_i}(w)}{4} - \frac{2^{20}c \ln(\gamma_{i+1}^2 n)}{\gamma_{i+1}^2 n} - \frac{2\ln(e/\delta)}{n}.
\end{align*}
This completes the proof.
\end{proof}

\section*{Acknowledgment}
The authors would like to thank Clement Svendsen for valuable measure theoretic insight. 

Kasper Green Larsen is co-funded by a DFF Sapere Aude Research Leader Grant No. 9064-00068B by the Independent Research Fund Denmark and co-funded by the European Union (ERC, TUCLA, 101125203). Natascha Schalburg is funded by the European Union (ERC, TUCLA, 101125203). Views and opinions expressed are however those of the author(s) only and do not necessarily reflect those of the European Union or the European Research Council. Neither the European Union nor the granting authority can be held responsible for them.

\bibliography{SVMBib}
\bibliographystyle{abbrvnat}

\appendix

\section{Auxiliary Results}
\label{sec:aux}
In this section, we prove a number of auxiliary results used throughout the paper. For this, we need the following concentration inequality:

\begin{theorem}[\cite{Wainwright_2019}, example 2.11]
\label{thm:chibound}
Let $Y \sim \chi^2_k$, then for any $x \in (0,1)$ it holds that
\[
\Pr\left[\left|\frac{Y}{k} - 1\right| \ge x\right] \le 2\exp(-k x^2/8).
\]
\end{theorem}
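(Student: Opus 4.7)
The plan is to prove this as a straightforward Chernoff (moment generating function) bound, treating $Y$ as a sum of $k$ i.i.d.\ sub-exponential random variables $Z_i^2$ with $Z_i \sim \Norm(0,1)$. This is the standard route used in e.g.\ Example 2.11 of Wainwright (2019), and the two-sided form in the statement follows by a union bound over the upper and lower deviations.

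The first step is to compute the moment generating function of a single $\chi^2_1$ variable: for any $\lambda < 1/2$,
\[
\E[\exp(\lambda Z_i^2)] = \frac{1}{\sqrt{1-2\lambda}},
\]
and hence, by independence, $\E[\exp(\lambda(Y-k))] = (1-2\lambda)^{-k/2}\exp(-\lambda k)$. Taking logarithms and expanding $-\tfrac{1}{2}\ln(1-2\lambda) - \lambda$ as a power series around $\lambda = 0$, one obtains the sub-exponential-type bound
\[
\ln \E[\exp(\lambda(Y-k))] \le \frac{k\lambda^2}{1-2\lambda} \qquad \text{for } 0 \le \lambda < 1/2,
\]
which in particular yields $\ln \E[\exp(\lambda(Y-k))] \le 2 k \lambda^2$ whenever $|\lambda| \le 1/4$.

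The second step is the standard Chernoff optimization. For the upper tail, Markov's inequality gives
\[
\Pr[Y - k \ge kx] \le \exp(-\lambda k x) \cdot \exp(2 k \lambda^2)
\]
for $\lambda \in [0,1/4]$. Minimizing the exponent $2k\lambda^2 - \lambda k x$ by choosing $\lambda = x/4$ (which lies in $[0,1/4]$ because $x \in (0,1)$) yields $\Pr[Y/k - 1 \ge x] \le \exp(-kx^2/8)$. The lower tail is symmetric: for $\lambda \in [-1/4, 0]$ the MGF bound still holds (in fact it is cleaner on the negative side, where $1-2\lambda \ge 1$), and the same optimization gives $\Pr[Y/k - 1 \le -x] \le \exp(-kx^2/8)$. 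Combining the two via a union bound produces the stated factor of $2$.

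The only delicate point is verifying that the quadratic MGF bound $\ln \E[e^{\lambda(Y-k)}] \le 2k\lambda^2$ is valid over the whole range of $\lambda$ that the Chernoff optimization needs, i.e.\ for $|\lambda| \le 1/4$; this in turn is why the restriction $x \in (0,1)$ appears (so that the optimal $\lambda = x/4$ stays inside the admissible interval and the truncation $1-2\lambda \ge 1/2$ is safe). Everything else is routine calculus, so I expect no serious obstacle beyond bookkeeping the constants to land exactly on $kx^2/8$.
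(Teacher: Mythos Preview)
Your proof is correct and is exactly the standard sub-exponential/Chernoff argument from Wainwright's Example~2.11. Note that the paper does not actually give its own proof of this statement: Theorem~\ref{thm:chibound} is simply quoted from \cite{Wainwright_2019} as a black-box auxiliary result, so there is nothing to compare against beyond the cited source, which your write-up faithfully reproduces.
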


\begin{customclm}{\ref{clm:union}}
    For any $0 < \delta < 1$, it holds with probability $1-\delta$ over $\rS \sim \cD^n$ that~\eqref{eq:subgoal} and~\eqref{eq:subgoal2} simultaneously hold for all $(\Gamma_i,L_j)$ and $\Gamma_i$, with slightly different constants $c$.
\end{customclm}
\begin{proof}
Let $(\gamma_i, \gamma_{i+1}]$ be such that $\gamma_{i+1} := 2^{i}n^{-1/2}$. Similarly, let $(\ell_j, \ell_{j+1}]$ be such that $\ell_{j+1} := 2^j n^{-1}$. Do a union bound over all $(\Gamma_i, L_j)$ for $i=1,\dots,\lg_2(c_\gamma n^{1/2})$ and $j=0,\dots,\lg_2 n$ with $\delta_{i,j} := (\delta/e)^3 \exp(-\gamma_{i+1}^{-2} \ln(e/\ell_{j+1}))$ in~\eqref{eq:subgoal}. We see that
\begin{align*}
    \sum_{i=1}^{\lg_2(c_\gamma n^{1/2})} \sum_{j=0}^{\lg_2 n} \delta_{i,j} &= \sum_{i=1}^{\lg_2(c_\gamma n^{1/2})} \sum_{j=0}^{\lg_2 n} (\delta/e)^3 \exp(-\gamma_{i+1}^{-2} \ln(e/\ell_{j+1})) \\
    &= \sum_{i=1}^{\lg_2(c_\gamma n^{1/2})} \sum_{j=0}^{\lg_2 n}  (\delta/e)^3 \exp(- 2^{-2i}n \ln(e n 2^{-j})) \\
    &= \sum_{i=1}^{\lg_2(c_\gamma n^{1/2})} \sum_{j=0}^{\lg_2 n}  (\delta/e)^3 (e n 2^{-j})^{-2^{-2i}n} \\
\end{align*}
Doing the substitutions $j \gets \lg_2 n-j$ and $i \gets \lg_2(c_\gamma n^{1/2}) +1 - i$, this equals
\begin{align*}
    &= \sum_{i=1}^{\lg_2(c_\gamma n^{1/2})} \sum_{j=0}^{\lg_2 n}  (\delta/e)^3 (e 2^{j})^{-2^{2i-2}c_\gamma^{-2}} \\
    &\leq \sum_{i=1}^{\lg_2(c_\gamma n^{1/2})} \sum_{j=0}^{\lg_2 n}  (\delta/e)^3 e^{-2^{2i-2}} 2^{-j}\\ 
    &\leq \sum_{i=1}^{\lg_2(c_\gamma n^{1/2})} 2(\delta/e)^3 e^{-2^{2i-2}}\ \\
    &\leq \delta/2.
\end{align*}
Similarly, do a union bound over all $\Gamma_i$ with $\delta_i := (\delta/e)^3 \exp(-\gamma_{i+1}^{-2} \ln(e \gamma^2_{i+1} n))$ in~\eqref{eq:subgoal2}. We have
\begin{align*}
\sum_{i=1}^{\lg_2(c_\gamma n^{1/2})} \delta_i &= \sum_{i=1}^{\lg_2(c_\gamma n^{1/2})} (\delta/e)^3 \exp(-\gamma_{i+1}^{-2} \ln(e \gamma^2_{i+1} n)) \\
&\leq\sum_{i=1}^{\lg_2(c_\gamma n^{1/2})}(\delta/e)^3 \exp(-\ln(e \gamma^2_{i+1} n))\\
&= \sum_{i=1}^{\lg_2(c_\gamma n^{1/2})}(\delta/e)^3 \frac{1}{e \gamma_{i+1}^2 n} \\
&= \sum_{i=1}^{\lg_2(c_\gamma n^{1/2})}(\delta/e)^3 \frac{n}{e 2^{2i} n} \\
&\leq (\delta/e)^3 \\
&\leq \delta/2.
\end{align*}
We thus have that with probability at least $1-\delta$ that for all $(\Gamma_i, L_j)$, we have
\begin{align*}
\sup_{w \in \subH(\Gamma_i, L_j), \gamma \in \Gamma_i} \left|\Loss_\cD(w) - \Loss^{\gamma}_\rS(w) \right| &\leq \nonumber\\
c \left(\sqrt{\ell_{j+1}\left( \frac{ \ln(e/\ell_{j+1})}{\gamma_{i+1}^2 n} + \frac{\ln(e/\delta_{i,j})}{n} \right)} + \frac{\ln(e/\ell_{j+1})}{\gamma_{i+1}^2 n} + \frac{\ln(e/\delta_{i,j})}{n} \right) &=\\
c \left(\sqrt{\ell_{j+1}\left( \frac{ \ln(e/\ell_{j+1})}{\gamma_{i+1}^2 n} + \frac{\ln(e/\delta_{i,j})}{n} \right)} + 2\cdot \frac{\ln(e/\ell_{j+1})}{\gamma_{i+1}^2 n} + 3 \cdot \frac{\ln(e/\delta)}{n} \right).
\end{align*}
and for all $\Gamma_i$, we have
\begin{align*}
\inf_{w \in \finalH} \Loss_{\rS}^{\gamma_i}(w) &\geq \frac{\Loss_{ \cD}^{(3/4)\gamma_i}(w)}{4} - c \left(\frac{\ln(e\gamma_{i+1}^2 n)}{\gamma_{i+1}^2 n} - \frac{\ln(e/\delta_i)}{n}\right)\\ &=
\frac{\Loss_{ \cD}^{(3/4)\gamma_i}(w)}{4} - c \left(2 \cdot \frac{\ln(e\gamma_{i+1}^2 n)}{\gamma_{i+1}^2 n} - 3 \cdot \frac{\ln(e/\delta)}{n}\right).
\end{align*}
\end{proof}

\begin{customclm}{\ref{clm:combine}}
    For any $0 < \delta < 1$ and training set $S$, if~\eqref{eq:subgoal} and~\eqref{eq:subgoal2} hold simultaneously for all $(\Gamma_i, L_j)$ and $\Gamma_i$, then~\eqref{eq:maingoal} holds for all $\gamma \in (n^{-1/2}, c_\gamma]$ and all $w \in \finalH$ for large enough constant $c>1$ in~\eqref{eq:maingoal}.
\end{customclm}
\begin{proof}
Let $0 < \delta < 1$ and assume as in the claim that~\eqref{eq:subgoal} and~\eqref{eq:subgoal2} holds for all $(\Gamma_i, L_j)$ and $\Gamma_i$. Now consider an arbitrary $\gamma \in (n^{-1/2},c_\gamma]$ and $w \in \finalH$. Let $i$ and $j$ be such that $\gamma \in (\gamma_i, \gamma_{i+1}]$ and $\Loss^{(3/4)\gamma_{i}}_\cD(w) \in (\ell_j, \ell_{j+1}]$ with $\gamma_{i+1}=2^i n^{-1/2}$ and $\ell_{j+1} = 2^j n^{-1}$. We consider two cases. Let $c_{\ref{lem:subgoal2}}>1$ be the constant in Lemma~\ref{lem:subgoal2}. First, if 
\[
\Loss_\cD^{(3/4)\gamma_{i}}(w) \leq 16 \cdot c_{\ref{lem:subgoal2}} \cdot \left(\frac{\ln(e\gamma_{i+1}^2 n)}{\gamma_{i+1}^2 n} + \frac{\ln(e/\delta)}{n} \right),
\]
then since $\Loss_\cD(w) \leq \Loss_\cD^{(3/4)\gamma_{i}}(w)$ and $\gamma \leq \gamma_{i+1}$ (using that $\ln(e \gamma^2 n)/(\gamma^2 n)$ is decreasing in $\gamma$ for $\gamma \geq n^{-1/2}$), we have already shown~\eqref{eq:maingoal} for sufficiently large constant $c$ in~\eqref{eq:maingoal}. So assume this is not the case. Our goal is to show that $\ell_{j+1}$ and $\Loss_S^\gamma(w)$ are within constant factors of each other so that we may replace occurrences of $\ell_{j+1}$ by $\Loss_S^\gamma(w)$ in~\eqref{eq:subgoal}. We first see that our assumption implies
\begin{align}
\ell_{j+1} \geq \Loss_\cD^{(3/4)\gamma_{i}}(w) \geq 16 \cdot c_{\ref{lem:subgoal2}} \cdot \left(\frac{\ln(e\gamma_{i+1}^2 n)}{\gamma_{i+1}^2 n} + \frac{\ln(e/\delta)}{n} \right) \geq   \frac{1}{\gamma_{i+1}^2 n} > n^{-1}.\label{eq:ljrelate}
\end{align}
This also implies $j \neq 0$ and hence $\ell_{j+1}=2\ell_j$ and therefore $\ell_{j+1} \leq 2 \Loss_{\cD}^{(3/4)\gamma_{i}}(w)$. Letting $c_{\ref{lem:subgoal}}$ be the constant in Lemma~\ref{lem:subgoal}, we get from~\eqref{eq:subgoal} and~\eqref{eq:ljrelate}, that
\begin{align*}
    \Loss^\gamma_S(w) &\leq \Loss_\cD(w) + c_{\ref{lem:subgoal}} \cdot \left(\sqrt{\ell_{j+1}\left( \frac{ \ln(e/\ell_{j+1})}{\gamma_{i+1}^2 n} + \frac{\ln(e/\delta)}{n} \right)} + \frac{\ln(e/\ell_{j+1})}{\gamma_{i+1}^2 n} + \frac{\ln(e/\delta)}{n} \right) \\
    \leq \ & \Loss^{(3/4)\gamma_{i}}_\cD(w) + c_{\ref{lem:subgoal}} \cdot \left(\sqrt{2 \Loss^{(3/4)\gamma_{i}}_\cD(w)\left( \frac{ \ln(e\gamma_{i+1}^2 n)}{\gamma_{i+1}^2 n} + \frac{\ln(e/\delta)}{n} \right)} + \frac{\ln(e \gamma_{i+1}^2 n)}{\gamma_{i+1}^2 n} + \frac{\ln(e/\delta)}{n} \right)\\
    \leq \ & \Loss^{(3/4)\gamma_{i}}_\cD(w) + c_{\ref{lem:subgoal}} \cdot \left(\sqrt{2 \Loss^{(3/4)\gamma_{i}}_\cD(w)\Loss^{(3/4)\gamma_{i}}_\cD(w)} + \Loss^{(3/4)\gamma_{i}}_\cD(w) \right)\\
    \leq \ & 3\cdot c_{\ref{lem:subgoal}} \cdot \Loss^{(3/4)\gamma_{i}}_\cD(w).
\end{align*}
We thus also have $\ell_{j+1} \geq \Loss^{(3/4)\gamma_{i}}_\cD(w) \geq (3\cdot c_{\ref{lem:subgoal}})^{-1} \Loss_S^\gamma(w)$. Inserting this and~\eqref{eq:ljrelate} in~\eqref{eq:subgoal} gives
\begin{align}
\Loss_{\cD}(w) \leq \Loss_{S}^\gamma(w) + c_{\ref{lem:subgoal}} \left( \sqrt{\ell_{j+1}\left( \frac{ \ln(3e c_{\ref{lem:subgoal}}/\Loss_{S}^\gamma(w))}{\gamma_{i+1}^2 n} + \frac{\ln(e/\delta)}{n} \right)} + \frac{\ln(e\gamma_{i+1}^2 n)}{\gamma_{i+1}^2 n} + \frac{\ln(e/\delta)}{n} \right).\label{eq:almostdone}
\end{align}
Finally from~\eqref{eq:subgoal2} and $\gamma \geq \gamma_i$, we have
\begin{align*}
\Loss_{S}^\gamma(w) &\geq \Loss_{S}^{\gamma_i}(w) \\
&\geq \frac{\Loss_{\cD}^{(3/4)\gamma_i}(w)}{4} - c_{\ref{lem:subgoal2}}\left( \frac{\ln(e\gamma_{i+1}^2 n)}{\gamma_{i+1}^2 n} - \frac{\ln(e/\delta)}{n}\right)\\
&\geq \frac{\ell_{j+1}}{8} - c_{\ref{lem:subgoal2}}\left( \frac{\ln(e\gamma_{i+1}^2 n)}{\gamma_{i+1}^2 n} - \frac{\ln(e/\delta)}{n}\right).
\end{align*}
From~\eqref{eq:ljrelate}, this is at least $\ell_{j+1}/16$ and thus $\ell_{j+1} \leq 16 \Loss^{\gamma}_S(w)$. Inserting this in~\eqref{eq:almostdone} finally gives us
\begin{align*}
\Loss_{\cD}(w) \leq \Loss_{S}^\gamma(w) + c_{\ref{lem:subgoal}} \left( \sqrt{16 \Loss^\gamma_S(w) \left( \frac{ \ln(2e c_{\ref{lem:subgoal}}/\Loss_{S}^\gamma(w))}{\gamma_{i+1}^2 n} + \frac{\ln(e/\delta)}{n} \right)} + \frac{\ln(e\gamma_{i+1}^2 n)}{\gamma_{i+1}^2 n} + \frac{\ln(e/\delta)}{n} \right).
\end{align*}
Since $\gamma \leq \gamma_{i+1}$, this completes the proof of Claim~\ref{clm:combine} for sufficiently large $c>0$ in~\eqref{eq:maingoal}.
\end{proof}

\begin{customlem}{\ref{lem:concdiscretize}}
    There is a constant $c>0$, such that for any integer $k \geq 1$, $w \in \finalH, x \in \finalX$ and any $\gamma \in (0,1]$, it holds that 
    $
    \Pr_{\rA,\rt}[|\langle h_{\rA,\rt}(w),\rA x\rangle - \langle w, x\rangle| > \gamma] < c\exp(-\gamma^2 k/c)
    $.    
\end{customlem}
\begin{proof} 
We start by observing that $\|\rA w\|_2^2$, $\|\rA x\|_2^2$ and $\|\rA (w-x)\|_2^2/\|w-x\|_2^2$ are all $(1/k)\chi^2_k$ distributed. Using Theorem~\ref{thm:chibound} with $x=\gamma/3$, we have with probability at least $1-6 \exp(-k \gamma^2/72)$ that $\|\rA w\|_2^2 \in 1 \pm \gamma/3$, $\|\rA x\|_2^2 \in 1 \pm \gamma/3$ and $\|\rA(w-x)\|_2^2 \in \|w-x\|_2^2(1 \pm \gamma/3)$. By the polar identity, this implies
\begin{align*}
\langle \rA w, \rA w \rangle &= \frac{1}{4} \left(\|\rA w\|_2^2 + \|\rA x\|_2^2 - \|\rA(w-x)\|_2^2 \right) \\
&\in \frac{1}{4}\left(\|w\|_2^2 + \|x\|_2^2 - \|w-x\|_2^2 \right) \pm \frac{\gamma}{12}\left(\|w\|_2^2 + \|x\|_2^2 + \|w-x\|_2^2 \right) \\
&\subseteq \langle w, x \rangle \pm \frac{\gamma}{12}\left(1 + 1 + 4 \right) \\
&= \langle w, x \rangle \pm \frac{\gamma}{2}.
\end{align*}
Let us condition on an outcome $A$ of $\rA$ satisfying the above. We then observe that
\begin{align*}
\langle h_{A, \rt}(w) , A x\rangle &= \langle h_{A, \rt}(w) - A w, A x\rangle + \langle A w, A x\rangle.
\end{align*}
By the randomized rounding procedure, we have that each coordinate $i$ satisfies $\E_{\rt_i}[(h_{A,\rt}(w))_i] = (A w)_i$. Moreover, these coordinates are independent. Letting $\Delta_i = (h_{A,\rt}(w))_i - (A w)_i$, we then have that $\E[\Delta_i] = 0$ and that $\Delta_i$ lies in an interval of length $(10 \sqrt{k})^{-1}$. Hoeffding's inequality implies
\begin{align*}
\Pr_{\rt}[\left|\langle h_{A, \rt}(w) - A w, A x\rangle\right| > \gamma/2] &= \Pr_{\Delta_1,\dots,\Delta_k}\left[\left|\sum_{i=1}^k \Delta_i(A x)_i\right| > \gamma/2\right] \\
&< 2 \exp\left( -\frac{2 (\gamma/2)^2}{\sum_{i=1}^k (10 \sqrt{k})^{-2}(Ax)_i^2} \right) \\
&= 2 \exp\left( -\frac{50 \gamma^2 k}{\|A x\|_2^2} \right) \\
&\leq 2 \exp\left( - 25 \gamma^2 k \right)
\end{align*}
In summary, it holds with probability at least $1-6\exp(-k \gamma^2/72) - 2 \exp(-25 \gamma^2 k) \geq 1-7 \exp(-k \gamma^2/72)$ that
\begin{align*}
|\langle h_{\rA,\rt}(w),\rA x\rangle - \langle w, x\rangle| &\leq |\langle h_{\rA,\rt}(w),\rA x\rangle - \langle \rA w, \rA x \rangle | + |\langle \rA w, \rA x \rangle - \langle w, x\rangle|\\
&\leq |\langle h_{\rA,\rt}(w)-\rA w,\rA x\rangle| + \gamma/2\\
&< \gamma.
\end{align*}
\end{proof}

\begin{customrmk}{\ref{rmk:pIsProb}}
The value $p(z_i)$ satisfying~\eqref{eq:expectround} has $p(z_i) \in [0,1]$.
\end{customrmk}
\begin{proof} 
Recall that~\eqref{eq:expectround} states that $p(z_i)$ satisfies
\begin{align*}
(Aw)_i &= 
p(z_i)\left(\frac{1}{2 \cdot 10 \sqrt{k}} +\frac{z_i}{10\sqrt{k}}\right) + (1-p(z_i))\left(\frac{1}{2 \cdot 10 \sqrt{k}} +\frac{z_i+1}{10\sqrt{k}}\right)
\end{align*}
where $z_i$ is such that
\[
(1/2)(10 \sqrt{k})^{-1}  + z_i (10 \sqrt{k})^{-1} \leq (Aw)_i < (1/2)(10 \sqrt{k})^{-1}  + (z_i+1) (10\sqrt{k})^{-1}.
\]
This implies
\begin{align*}
    ((1/2)(10 \sqrt{k})^{-1} +z_i (10\sqrt{k})^{-1}) + (1-p(z_i))(10\sqrt{k})^{-1} &= (Aw)_i \Rightarrow \\
    (Aw)_i -((1/2)(10 \sqrt{k})^{-1} +z_i (10\sqrt{k})^{-1})&=(1-p(z_i))(10\sqrt{k})^{-1}.
\end{align*}
By definition of $z_i$, we have that the left hand side is a number in $[0,(10\sqrt{k})^{-1}]$ and thus we conclude
\[
(1-p(z_i)) \in [0,1] \Rightarrow p(z_i) \in [0,1].
\]
\end{proof}

\begin{customrmk}{\ref{rmk:phirho}}
For any training set $S$ and distribution $\cD$ over $\finalX \times \{-1,1\}$, we have
\begin{align*}
    \E_{\rA,\rt} [\Pr_{(\rx,\ry) \sim \cD}[\ry \langle h_{\rA,\rt}(w), \rA \rx\rangle > \gamma_i/2 \wedge \ry \langle w, \rx\rangle \leq 0]] &\leq\E_{(\rx,\ry) \sim \cD}[\phi(\ry \langle w, \rx \rangle)] \\
    \E_{\rA,\rt} [\Pr_{(\rx,\ry) \sim S}[\ry \langle h_{\rA,\rt}(w), \rA \rx\rangle > \gamma_i/2 \wedge \ry \langle w, \rx\rangle \leq \gamma]] &\geq \E_{(\rx,\ry) \sim S}[\phi(\ry \langle w, \rx \rangle)] \\
    \E_{\rA,\rt} [\Pr_{(\rx,\ry) \sim S}[\ry \langle h_{\rA,\rt}(w), \rA \rx\rangle \leq \gamma_i/2 \wedge \ry \langle w, \rx\rangle > \gamma]] &\leq \E_{(\rx, \ry) \sim S}[\rho(\ry \langle w, \rx \rangle)]\\
    \E_{\rA,\rt} [\Pr_{(\rx,\ry) \sim \cD}[\ry \langle h_{\rA,\rt}(w), \rA \rx\rangle \leq \gamma_i/2 \wedge \ry \langle w, \rx\rangle > 0]] &\geq \E_{(\rx, \ry) \sim \cD}[\rho(\ry\langle w, \rx \rangle)].
\end{align*}
\end{customrmk}
In the proof, we will need the following monotonicity properties 
\begin{claim}
\label{clm:monotone1}
We have $\Pr_{\rA,\rt}[y \ipr{h_{\rA,\rt}(w),\rA x} > \gamma_i/2 \mid y \ipr{w,x}=\alpha_1] \leq \Pr_{\rA,\rt}[y \ipr{h_{\rA,\rt}(w),\rA x} > \gamma_i/2 \mid y \ipr{w,x}=\alpha_2]$ for any $0 \leq \alpha_1 \leq \alpha_2 \leq \gamma_i$.
\end{claim}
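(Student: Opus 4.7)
By Claim~\ref{clm:distDeterm}, the distribution of $y\ipr{h_{\rA,\rt}(w),\rA x}$ depends on $(w,x,y)$ only through $\alpha := y\ipr{w,x}$, so it suffices to verify monotonicity on a single concrete family. Take $y=1$, $w=e_1$, and $x_\alpha = \alpha e_1 + \sqrt{1-\alpha^2}\,e_2 \in \finalX$. Writing $\rX := \rA e_1$ and $\mathbf{U} := \rA e_2$, these are independent vectors with i.i.d.\ $\Norm(0,1/k)$ entries, and $\rX' := h_{\rA,\rt}(w)$ depends only on $(\rX,\rt)$. Then $y\ipr{h_{\rA,\rt}(w),\rA x_\alpha} = \alpha\ipr{\rX',\rX} + \sqrt{1-\alpha^2}\ipr{\rX',\mathbf{U}}$; conditional on $(\rX,\rt)$ the second summand is $\Norm(0,\|\rX'\|_2^2/k)$ distributed. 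Setting $W := \ipr{\rX',\rX}$ we obtain the closed form
\[
q(\alpha;\rX,\rt) \;:=\; \Pr_{\mathbf{U}}\bigl[y\ipr{h_{\rA,\rt}(w),\rA x_\alpha} > \gamma_i/2 \mid \rX,\rt\bigr] \;=\; 1 - \Phi\!\left(\tfrac{\sqrt{k}(\gamma_i/2 - \alpha W)}{\|\rX'\|_2\sqrt{1-\alpha^2}}\right),
\]
with $q(\alpha) = \E_{\rX,\rt}[q(\alpha;\rX,\rt)]$ and $\Phi$ the standard normal CDF.

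A direct differentiation shows $\partial_\alpha q(\alpha;\rX,\rt)$ has the same sign as $W - \alpha\gamma_i/2$, which fails to be pointwise nonnegative since $W$ can be atypically small. I plan to handle this by partitioning on the event $E := \{9/10 \le \|\rX\|_2^2 \le 4/3\}$. On $E$, Remark~\ref{rmk:ipXX'} gives $W \ge (8/9)(9/10) = 4/5$, while $\alpha\gamma_i/2 \le c_\gamma^2/2 \le 1/4$ (since $c_\gamma \le 1/\sqrt{2}$), so $W - \alpha\gamma_i/2 \ge 11/20 > 0$ and $\alpha \mapsto q(\alpha;\rX,\rt)$ is strictly non-decreasing on $[0,\gamma_i]$. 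Decomposing $q(\alpha) = G(\alpha) + B(\alpha)$ with $G(\alpha) := \E[\indi{E}\,q(\alpha;\rX,\rt)]$ and $B(\alpha) := \E[\indi{E^c}\,q(\alpha;\rX,\rt)]$, the summand $G$ is non-decreasing, while $0 \le B(\alpha) \le \Pr[E^c]$, and two applications of Theorem~\ref{thm:chibound} (with $x=1/10$ and $x=1/3$) give $\Pr[E^c] \le 3\exp(-k/800)$. This immediately yields the near-monotonicity $q(\alpha_2) - q(\alpha_1) \ge -3\exp(-k/800)$ for any $\alpha_1 \le \alpha_2$ in $[0,\gamma_i]$.

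The main obstacle is upgrading near-monotonicity to the exact inequality stated in the claim. The plan is to show that the monotone gain of $G$ on any subinterval $[\alpha_1,\alpha_2] \subseteq [0,\gamma_i]$ strictly dominates the oscillation of $B$, by pointwise lower bounding the integrand of $G'(\alpha)$ on $E$ (using $|W-\alpha\gamma_i/2| \ge 11/20$ together with the explicit Gaussian density $\phi_G(h(\alpha;\rX,\rt))$) and comparing it with the $O(\sqrt{k}\exp(-k/800))$ bound available for $|B'(\alpha)|$ on $E^c$. Fortunately, for the only downstream use of Claim~\ref{clm:monotone1} inside Remark~\ref{rmk:phirho}, it is enough to have the weaker bound $q(\alpha) \ge \tfrac{\gamma_i - \alpha}{\gamma_i}q(0)$ for $\alpha \in [0,\gamma_i]$, and this follows immediately from the near-monotonicity above combined with the upper bound $q(0) \le c\exp(-\gamma_{i+1}^2 k/c)$ supplied by Lemma~\ref{lem:concdiscretize}; so the final tightening only affects the constants absorbed into the $\exp(-\gamma_{i+1}^2 k/c)$ slack that the rest of the proof of Lemma~\ref{lem:subgoal} already carries.
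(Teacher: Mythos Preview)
Your main plan---partition on $E=\{9/10\le\|\rX\|_2^2\le 4/3\}$, lower-bound the pointwise derivative of the ``good'' part $G$ by $\sqrt{k}\,e^{-O(k\gamma_i^2)}$ via the explicit Gaussian density, upper-bound the derivative of the ``bad'' part by $O(\sqrt{k})$, and compare after weighting by $\Pr[E]\ge 1/2$ and $\Pr[E^c]\le O(e^{-k/800})$---is precisely what the paper does, and it goes through once $c_\gamma$ is small enough that $e^{-3k\gamma_i^2/2}$ dominates $e^{-k/800}$. One small refinement: the paper takes the ``bad'' set to be $\{\|\rX\|_2^2<9/10\}$ only, because on $\{\|\rX\|_2^2\ge 9/10\}$ (including $\|\rX\|_2^2>4/3$) you have already shown $W-\alpha\gamma_i/2>0$ and hence pointwise monotonicity, so there is no loss there; your $E^c$ lumps in $\{\|\rX\|_2^2>4/3\}$, which is harmless but worth noting since your $O(\sqrt{k})$ bound on $|B'|$ is not obviously uniform over that tail.

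Your fallback paragraph, however, contains a genuine gap. From near-monotonicity you get $q(\alpha)\ge q(0)-3e^{-k/800}$, and the target $q(\alpha)\ge\frac{\gamma_i-\alpha}{\gamma_i}q(0)$ is equivalent to $q(\alpha)\ge q(0)-\frac{\alpha}{\gamma_i}q(0)$; to bridge these you would need $\frac{\alpha}{\gamma_i}q(0)\ge 3e^{-k/800}$, which fails as $\alpha\to 0^+$. An \emph{upper} bound $q(0)\le c\exp(-\gamma_{i+1}^2 k/c)$ from Lemma~\ref{lem:concdiscretize} points the wrong way here---it makes the needed inequality harder, not easier. What you can do is carry an additive $O(e^{-k/800})$ error through Remark~\ref{rmk:phirho} and into the bound on~\eqref{eq:phi}; since $k\ge c'\gamma_{i+1}^{-2}$ and $\gamma_{i+1}\le c_\gamma$, this extra term is dominated by the $\exp(-\gamma_{i+1}^2 k/c)$ already present in Lemma~\ref{lem:supPhi}. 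That route is sound, but it does not ``follow immediately'' from the ingredients you list, and it proves a weakened version of the downstream inequality rather than Claim~\ref{clm:monotone1} itself. If you want the claim as stated, you must execute the derivative comparison.
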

\begin{claim}
\label{clm:monotone2}
We have $\Pr_{\rA,\rt}[y \ipr{h_{\rA,\rt}(w),\rA x} \leq \gamma_i/2 \mid y \ipr{w,x}=\alpha_2] \leq \Pr_{\rA,\rt}[y \ipr{h_{\rA,\rt}(w),\rA x} \leq \gamma_i/2 \mid y \ipr{w,x}=\alpha_1]$ for any $0 < \alpha_1 \leq \alpha_2 \leq \gamma_i$.
\end{claim}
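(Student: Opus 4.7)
My plan is to derive Claim~\ref{clm:monotone2} from Claim~\ref{clm:monotone1} by passing to complementary probabilities. The key observation is that the random variable $y\ipr{h_{\rA,\rt}(w), \rA x}$ has no atom at $\gamma_i/2$: every grid point $h \in \Disc$ has all coordinates of the form $(2z+1)/(20\sqrt{k})$ with $z \in \Ints$, so each such coordinate (and hence $h$ itself) is nonzero. Since $x \in \finalX$ has unit norm, for every such $h$ the random variable $\ipr{h, \rA x}$ is a non-degenerate Gaussian in $\rA$ with variance $\|h\|_2^2 \|x\|_2^2/k > 0$, so $\Pr[\ipr{h, \rA x} = \gamma_i/2] = 0$. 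Partitioning on the countably many possible values of $h_{\rA,\rt}(w)$ and summing yields $\Pr[y\ipr{h_{\rA,\rt}(w), \rA x} = \gamma_i/2 \mid \alpha] = 0$, hence $\Pr[y\ipr{h_{\rA,\rt}(w), \rA x} \leq \gamma_i/2 \mid \alpha] = 1 - \Pr[y\ipr{h_{\rA,\rt}(w), \rA x} > \gamma_i/2 \mid \alpha]$. Subtracting the inequality in Claim~\ref{clm:monotone1} from $1$ reverses its direction and gives exactly the inequality in Claim~\ref{clm:monotone2}.

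The substantive work therefore lies in establishing Claim~\ref{clm:monotone1}. For this I would use the distributional representation from the proof of Claim~\ref{clm:distDeterm}: couple so that $y\ipr{h_{\rA,\rt}(w), \rA x} \eqd \alpha \ipr{\rX',\rX} + \sqrt{1-\alpha^2}\ipr{\rX',\rW}$ with $\rX, \rW \sim \Norm(0, (1/k) I_k)$ independent of each other and of the rounding offsets $\rt$, and $\rX'$ the random rounding of $\rX$ determined by $\rt$. Conditioning on $(\rX, \rt)$, which fixes $\rX'$ together with $a := \ipr{\rX',\rX}$ and $b := \|\rX'\|_2/\sqrt{k}$, the quantity is Gaussian with mean $\alpha a$ and variance $(1-\alpha^2)b^2$, so the target probability equals $F(\alpha) = \E[\Phi(q(\alpha))]$ with $q(\alpha) = (\alpha a - \gamma_i/2)/(b\sqrt{1-\alpha^2})$ and derivative $q'(\alpha) = (a - \alpha\gamma_i/2)/(b(1-\alpha^2)^{3/2})$, which is non-negative whenever $a \geq \alpha\gamma_i/2$.

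The rounding bound $|\rX'_i - \rX_i| \leq (10\sqrt{k})^{-1}$ combined with Cauchy--Schwarz gives $a \geq \|\rX\|_2^2 - \|\rX\|_2/10$, and Theorem~\ref{thm:chibound} shows $\|\rX\|_2^2 \sim (1/k)\chi_k^2$ concentrates sharply at $1$. Since $\alpha\gamma_i/2 \leq c_\gamma^2/2$ is a small constant (with $c_\gamma$ chosen appropriately small), the inequality $a \geq \alpha\gamma_i/2$ holds on an event $E$ of probability $1 - \exp(-\Omega(k))$, so $\Phi(q(\alpha))$ is pointwise non-decreasing on $E$ throughout $[\alpha_1,\alpha_2]$. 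The main obstacle I foresee is that a naive bound of $\Pr(E^c)$ on the total variation of $\Phi(q)$ is not sharp enough to rule out the $E^c$-contribution flipping the sign of $F(\alpha_2) - F(\alpha_1)$. I expect the clean resolution will exploit that on $E^c$ the quantity $a$ is so small that $q(\alpha)$ stays uniformly at $-\Omega(\sqrt{k})$ across $\alpha \in [0,\gamma_i]$, forcing $\Phi(q(\alpha))$ itself to be of order $\exp(-\Omega(k))$ and making the $E^c$-contribution doubly small in $k$, hence dominated by the non-negative $E$-contribution.
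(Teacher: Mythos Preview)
Your reduction of Claim~\ref{clm:monotone2} to Claim~\ref{clm:monotone1} via complements is valid and is essentially the paper's approach: the paper conditions on $(X,X')$, writes $\Pr[\ipr{\rZ_j,X'}\leq\gamma_i/2]=\Phi\big(\sqrt{k}(\gamma_i/2-\alpha_j\ipr{X,X'})/\sqrt{1-\alpha_j^2}\big)$ directly, and observes that the resulting difference is literally the expression~\eqref{eq:diffcdf} already handled in the proof of Claim~\ref{clm:monotone1}. Your no-atom argument reaches the same identity by a longer route; the paper's shortcut is that conditioning on $(X,X')$ leaves a one-dimensional Gaussian in $\rY$, so continuity at $\gamma_i/2$ is immediate without partitioning over grid points.

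Since Claim~\ref{clm:monotone1} is stated and proved separately, your reduction already suffices for Claim~\ref{clm:monotone2}. Your appended sketch of Claim~\ref{clm:monotone1}, however, has a gap in the last step. The guess that $q(\alpha)=-\Omega(\sqrt{k})$ on $E^c$ is false: since $\|X'\|_2\geq 1/20$ always, $b=\|X'\|_2/\sqrt{k}\geq 1/(20\sqrt{k})$, while the numerator $\alpha a-\gamma_i/2$ is only of order $\gamma_i$, giving $|q(\alpha)|=O(\sqrt{k}\,\gamma_i)$ rather than $\Omega(\sqrt{k})$; thus $\Phi(q)$ is not forced small on $E^c$. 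The paper's resolution is the one you dismissed too quickly. On $\{\|X\|_2^2<9/10\}$ one bounds $|q'(\alpha)|\leq 2\sqrt{k}$, so the negative contribution is at most $\Pr[\|X\|_2^2<9/10]\cdot 2\sqrt{k}(\alpha_2-\alpha_1)/\sqrt{2\pi}$, while on $\{9/10\leq\|X\|_2^2\leq 4/3\}$ the positive contribution is shown to be at least $e^{-3k\gamma_i^2/2}\cdot 7\sqrt{k}(\alpha_2-\alpha_1)/(18\sqrt{2\pi})$. Both sides scale with $\sqrt{k}(\alpha_2-\alpha_1)$, and since $\gamma_i\leq c_\gamma$ with $c_\gamma$ a small constant, $e^{-3k\gamma_i^2/2}\gg e^{-k/800}$ and the positive term wins. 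The key point you missed is that the derivative-based bound on $E^c$---unlike the range bound $\Pr(E^c)\cdot 1$---carries the factor $(\alpha_2-\alpha_1)$ and can therefore be compared directly with the positive side.
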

First we will prove Remark~\ref{rmk:phirho} using the two claims. Afterward, we will prove Claim~\ref{clm:monotone1} and Claim~\ref{clm:monotone2}.

\begin{proof}[Proof of Remark~\ref{rmk:phirho}]
For convenience, let us recall the definitions of $\phi$ and $\rho$:
\[
\phi(\alpha) = \begin{cases} \Pr_{\rA, \rt}[y \langle h_{\rA,\rt}(w), \rA x\rangle > \gamma_i/2 \mid y \langle w, x \rangle = \alpha] & \text{if } -c_\gamma \leq \alpha \leq 0 \\
                      \frac{(\gamma_i-\alpha)}{\gamma_i}\Pr_{\rA, \rt}[y \langle h_{\rA,\rt}(w), \rA x\rangle > \gamma_i/2 \mid y \langle w, x \rangle = 0]                                    & \text{if } 0 < \alpha \leq \gamma_i      \\
                      0 & \text{if } \gamma_i < \alpha \leq c_\gamma
        \end{cases}
\]
\[
\rho(\alpha) = \begin{cases} \Pr_{\rA, \rt}[y \langle h_{\rA,\rt}(w), \rA x\rangle \leq \gamma_i/2 \mid y \langle w, x \rangle = \alpha] & \text{if } \gamma_i < \alpha \leq c_\gamma \\
                      \frac{\alpha}{\gamma_i}\Pr_{\rA, \rt}[y \langle h_{\rA,\rt}(w), \rA x\rangle \leq \gamma_i/2 \mid y \langle w, x \rangle = \gamma_i]                                    & \text{if } 0 < \alpha \leq \gamma_i      \\
                      0 & \text{if } -c_\gamma \leq \alpha \leq 0
        \end{cases}
\]
We handle each of the inequalities in turn. First we see that
\begin{align*}
\E_{\rA,\rt} [\Pr_{(\rx,\ry) \sim \cD}[\ry \langle h_{\rA,\rt}(w), \rA \rx\rangle > \gamma_i/2 \wedge \ry \langle w, \rx\rangle \leq 0] &= \\
\E_{(\rx,\ry) \sim \cD}[\Pr_{\rA,\rt} [\ry \langle h_{\rA,\rt}(w), \rA \rx\rangle > \gamma_i/2 \wedge \ry \langle w, \rx\rangle \leq 0]] 
&\leq \\
\E_{(\rx,\ry) \sim \cD}[\phi(\ry \langle w, \rx \rangle)].
\end{align*}
Here the inequality follows from the observations that $\phi(y \langle w, x \rangle) \geq 0$ for $y \langle w, x \rangle > 0$, whereas $\Pr_{\rA,\rt} [\ry \langle h_{\rA,\rt}(w), \rA \rx\rangle > \gamma_i/2 \wedge \ry \langle w, \rx\rangle \leq 0] = 0$ for such $y \langle w, x \rangle$. Similarly for $y\langle w, x \rangle = \alpha \leq 0$, we have $\phi(y \langle w, x \rangle) = \Pr_{\rA,\rt}[y\ipr{h_{\rA,\rt}(w),\rA x} > \gamma_i/2 \mid y\langle w, x \rangle = \alpha] = \Pr_{\rA,\rt}[y\ipr{h_{\rA,\rt}(w),\rA x} > \gamma_i/2 \wedge y \ipr{w,x} \leq 0 \mid y\langle w, x \rangle = \alpha]$.

Similarly, we have
\begin{align*}
\E_{\rA,\rt} [\Pr_{(\rx,\ry) \sim S}[\ry \langle h_{\rA,\rt}(w), \rA \rx\rangle > \gamma_i/2 \wedge \ry \langle w, \rx\rangle \leq \gamma] &= \\
\E_{(\rx,\ry) \sim S}[\Pr_{\rA,\rt} [\ry \langle h_{\rA,\rt}(w), \rA x\rangle > \gamma_i/2 \wedge \ry \langle w, \rx\rangle \leq \gamma]] &\geq \\
\E_{(\rx,\ry) \sim S}[\Pr_{\rA,\rt} [\ry \langle h_{\rA,\rt}(w), \rA \rx\rangle > \gamma_i/2 \wedge \ry \langle w, \rx\rangle \leq \gamma_{i}]] &\geq \\
\E_{(\rx,\ry) \sim S}[\phi(\ry \langle w, \rx \rangle)].
\end{align*}
The last inequality follows by observing that if $y \langle w, x \rangle > \gamma_i$, we have $\phi(y \ipr{w,x})=0$ and $\Pr_{\rA,\rt}[y \ipr{h_{\rA,\rt}(w),\rA x} > \gamma_i/2 \wedge y \ipr{w,x} \leq \gamma_i]=0$. For $\alpha = y \ipr{w,x}$ with $0 < \alpha \leq \gamma_i$, we have $\phi(\alpha)=\tfrac{\gamma_i-\alpha}{\gamma_i}\Pr_{\rA,\rt}[y\ipr{h_{\rA,\rt}(w),\rA x } > \gamma_i/2 \mid y \ipr{w,x}=0] \leq \Pr_{\rA,\rt}[y\ipr{h_{\rA,\rt}(w),\rA x } > \gamma_i/2 \mid y \ipr{w,x}=\alpha] = \Pr_{\rA,\rt}[y\ipr{h_{\rA,\rt}(w),\rA x } > \gamma_i/2 \wedge y\ipr{w,x}\leq \gamma_i \mid y \ipr{w,x}=\alpha]$. This uses the monotonicity in Claim~\ref{clm:monotone1}. Finally for $y\ipr{w,x} = \alpha \leq 0$, the two coincide as in the above argument.

Symmetric arguments for $\rho$ gives
\begin{align*}
\E_{\rA,\rt} [\Pr_{(\rx,\ry) \sim S}[\ry \langle h_{\rA,\rt}(w), \rA \rx\rangle \leq \gamma_i/2 \wedge \ry \langle w, \rx\rangle > \gamma] &= \\
\E_{(\rx,\ry) \sim S} [\Pr_{\rA,\rt}[\ry \langle h_{\rA,\rt}(w), \rA \rx\rangle \leq \gamma_i/2 \wedge \ry \langle w, \rx\rangle > \gamma] &\leq \\
\E_{(\rx,\ry) \sim S} [\Pr_{\rA,\rt}[\ry \langle h_{\rA,\rt}(w), \rA \rx\rangle \leq \gamma_i/2 \wedge \ry \langle w, \rx\rangle > \gamma_i] &\leq \\
\E_{(\rx, \ry) \sim S}[\rho(\ry \langle w, \rx \rangle)].
\end{align*}
Here the last inequality follows from the following considerations. For $y \ipr{w,x}=\alpha$ with $\alpha \leq \gamma_i$, we have that $\Pr_{\rA,\rt}[y \langle h_{\rA,\rt}(w), \rA x\rangle \leq \gamma_i/2 \wedge y \langle w, x\rangle > \gamma_i] = 0$ and $\rho$ is always non-negative. For $\alpha > \gamma_i$, we have $\rho(\alpha) = \Pr_{\rA,\rt}[y \langle h_{\rA,\rt}(w), \rA x\rangle \leq \gamma_i/2 \mid y \langle w, x\rangle = \alpha] = \Pr_{\rA,\rt}[y \langle h_{\rA,\rt}(w), \rA x\rangle \leq \gamma_i/2 \wedge y \ipr{w,x} > \gamma_i \mid y \langle w, x\rangle = \alpha]$ and the two coincide.

Finally, we have
\begin{align*}
\E_{\rA,\rt} [\Pr_{(\rx,\ry) \sim \cD}[\ry \langle h_{\rA,\rt}(w), \rA \rx\rangle \leq \gamma_i/2 \wedge \ry \langle w, \rx\rangle > 0] &= \\
\E_{(\rx,\ry) \sim \cD} [\Pr_{\rA,\rt}[\ry \langle h_{\rA,\rt}(w), \rA \rx\rangle \leq \gamma_i/2 \wedge \ry \langle w, \rx\rangle > 0] &\geq \\
\E_{(\rx, \ry) \sim \cD}[\rho(\ry\langle w, \rx \rangle)].
\end{align*}
Here the inequality follows by observing that for $y\ipr{w,x} = \alpha$ with $\alpha \leq 0$, both $\rho(\alpha)$ and $\Pr_{\rA,\rt}[y \langle h_{\rA,\rt}(w), \rA x\rangle \leq \gamma_i/2 \wedge y \langle w, x\rangle > 0]$ are $0$. For $0 \leq \alpha \leq \gamma_i$ we have by definition that $\rho(\alpha) = \tfrac{\alpha}{\gamma_i} \Pr_{\rA,\rt}[y\ipr{h_{\rA,\rt}(w),\rA x} \leq \gamma_i/2 \mid y\ipr{w,x} = \gamma_i] \leq \Pr_{\rA,\rt}[y\ipr{h_{\rA,\rt}(w),\rA x} \leq \gamma_i/2 \mid y\ipr{w,x} = \alpha] = \Pr_{\rA,\rt}[y\ipr{h_{\rA,\rt}(w),\rA x} \leq \gamma_i/2 \wedge y \ipr{w,x}>0 \mid y\ipr{w,x} = \alpha]$, where we used that $\Pr_{\rA,\rt}[y\ipr{h_{\rA,\rt}(w),\rA x} \leq \gamma_i/2 \mid y\ipr{w,x} = \alpha]$ is decreasing in $\alpha$ (as stated in Claim~\ref{clm:monotone2}). Finally, for $\alpha > \gamma_i$, the two coincide as above.
\end{proof}

\begin{proof}[Proof of Claim~\ref{clm:monotone1}]
Let $w_1,x_1,y_1$ be such that $\alpha_1 := y_1\langle w_1,x_1\rangle$ and let $w_2,x_2,y_2$ be such that $\alpha_2 := y_2 \ipr{w_2,x_2}$. Consider sampling $\rX_i,\rY_i \sim \Norm(0,1/k)$ independently. Also sample offsets $\rt_1',\dots,\rt_k'$ uniformly and independently in $[0,1]$ and let $\rX'_i$ be $\rX_i$ rounded based on $\rt_i'$ as above. Let $\rZ_1 = \rY =\alpha_1 \rX + \sqrt{1-\alpha_1^2}\rY$ and $\rZ_2 = \alpha_2 \rX + \sqrt{1-\alpha_2^2}\rY$. Then the marginal distribution of $\langle \rX', \rZ_j\rangle$ equals the distribution of $\ipr{h_{\rA,\rt}(w_j), y_j \rA x_j} = y_j\ipr{h_{\rA,\rt}(w_j), \rA x_j} $. 

Consider now an arbitrary outcome $X',X$ of $\rX, \rX'$. We have $\langle \rZ_j , X' \rangle \geq \gamma_i/2$ if and only if $\alpha_j \langle X, X'\rangle + \sqrt{1-\alpha_j^2} \langle \rY, X'\rangle \geq \gamma_i/2$. We also have that $\langle \rY, X'\rangle \sim \Norm(0,\|X'\|_2^2/k)$ and thus
\begin{align}
\Pr[\ipr{\rZ_2,X'}\geq \gamma_i/2] - \Pr[\ipr{\rZ_1,X'}\geq \gamma_i/2] &= \nonumber\\
\left(1-\Phi\left(\sqrt{k} \cdot \frac{\gamma_i/2-\alpha_2\ipr{X,X'}}{\sqrt{1-\alpha_2^2}}\right)\right)-\left(1-\Phi\left(\sqrt{k} \cdot \frac{\gamma_i/2-\alpha_1\ipr{X,X'}}{\sqrt{1-\alpha_1^2}}\right)\right)&= \nonumber\\
\Phi\left(\sqrt{k} \cdot \frac{\gamma_i/2-\alpha_1\ipr{X,X'}}{\sqrt{1-\alpha_1^2}}\right)-\Phi\left(\sqrt{k} \cdot \frac{\gamma_i/2-\alpha_2\ipr{X,X'}}{\sqrt{1-\alpha_2^2}}\right). \label{eq:diffcdf}
\end{align}
Here $\Phi(\cdot)$ denotes the cumulative density function of the normal distribution with mean $0$ and variance $1$. Now let
\[
u := \sqrt{k} \cdot \frac{\gamma_i/2-\alpha_1\ipr{X,X'}}{\sqrt{1-\alpha_1^2}}.
\]
and
\[
\ell := \sqrt{k} \cdot \frac{\gamma_i/2-\alpha_2\ipr{X,X'}}{\sqrt{1-\alpha_2^2}}
\]
Consider now the derivative
\begin{align*}
    \frac{\partial}{\partial \alpha} \sqrt{k} \cdot \frac{\gamma_i/2-\alpha\ipr{X,X'}}{\sqrt{1-\alpha^2}} &= \sqrt{k} \cdot
    \frac{\alpha \gamma_i/2 - \ipr{X,X'}}{(1-\alpha^2)^{3/2}}.
\end{align*}
Assume first that $\|X\|_2^2 \geq 9/10$. Then $\ipr{X,X'} \geq 8/9$ by Remark~\ref{rmk:ipXX'}. Now since $\alpha \gamma_i/2 \leq \gamma_i^2/2 \leq c_\gamma^2/8 \leq 1/9$ for $c_\gamma$ small enough. Thus the derivative when $\|X\|_2^2 \geq 9/10$ is no more than
\begin{align*}
    \sqrt{k} \cdot (1/9 - 8/9) \leq -7\sqrt{k}/9.
\end{align*}
This implies $u - \ell \geq 7(\alpha_2-\alpha_1) \sqrt{k}/9 > 0$ and therefore 
\[
\Pr[\ipr{\rZ_2,X'}\geq \gamma_i/2] \geq \Pr[\ipr{\rZ_1,X'}\geq \gamma_i/2].
\]
If we in addition have that $\|X\|_2^2 \leq 4/3$, then we may even show that the difference in probabilities is large as a function of $\alpha_2-\alpha_1$ as follows
\begin{align*}
    \Pr[\ipr{\rZ_2,X'}\geq \gamma_i/2] - \Pr[\ipr{\rZ_1,X'}\geq \gamma_i/2] &=\frac{1}{\sqrt{2 \pi}} \cdot \int_{x=\ell}^u e^{-x^2/2} dx \\
    &\geq e^{-\max_{a \in [\ell ,u]} a^2/2} \frac{7 \sqrt{k} (\alpha_2-\alpha_1)}{9 \sqrt{2 \pi}}.
\end{align*}
Observing that
\[
\max_{a \in [\ell, u]} a^2 \leq \frac{k}{1-c_\gamma^2} \cdot \max\{\gamma^2_i/2, \gamma_i^2 \ipr{X,X'}^2\}
\]
we use Remark~\ref{rmk:ipXX'} to conclude $\ipr{X,X'} \leq (10/9)\|X\|_2^2$ and thus $u^2 \leq 2k \gamma_i^2 (10/9)^2 \leq 3k \gamma_i^2$ for $c_\gamma \leq 1/\sqrt{2}$. This gives us that for any $X$ with $9/10 \leq \|X\|_2^2 \leq 4/3$, it holds that
\begin{align*}
    \Pr[\ipr{\rZ_2,X'}\geq \gamma_i/2] - \Pr[\ipr{\rZ_1,X'}\geq \gamma_i/2] &\geq e^{-3k \gamma_i^2/2} \frac{7 \sqrt{k} (\alpha_2-\alpha_1)}{9 \sqrt{2 \pi}}.
\end{align*}
For $\|X\|_2^2 < 9/10$, we have $\|X'\|_2 = \|X'-X + X\|_2 \leq \|X'-X\|_2 + \|X\|_2 \leq \sqrt{k (10 \sqrt{k})^{-2}} + \sqrt{9/10} \leq 11/10$. It follows by Cauchy-Schwartz that $|\ipr{X,X'}| \leq \|X\|_2 \cdot \|X'\|_2 \leq \sqrt{9/10} \cdot 11/10 \leq 11/10$. For $0 \leq \alpha \leq \gamma_i \leq c_\gamma/2 \leq 1/\sqrt{8}$ for $c_\gamma \leq 1/\sqrt{2}$, this upper bounds the derivative by
\[
\sqrt{k} \cdot \frac{\gamma_i^2/2 + 11/10}{(1-1/8)^{3/2}} < 2 \sqrt{k}.
\]
If $u \geq \ell$, we already have that 
\[
\Pr[\ipr{\rZ^2,X'}\geq \gamma_i/2] - \Pr[\ipr{\rZ^1,X'}\geq \gamma_i/2] \geq 0
\]
So assume $u < \ell$. The bound on the derivative gives us that $\ell-u \leq 2 \sqrt{k} (\alpha_2-\alpha_1)$ and we conclude
\begin{align*}
    \Pr[\ipr{\rZ^2,X'}\geq \gamma_i/2] - \Pr[\ipr{\rZ^1,X'}\geq \gamma_i/2] &=-\frac{1}{\sqrt{2 \pi}} \cdot \int_{x=u}^\ell e^{-x^2/2} dx \\
    &\geq -e^{-\min_{a \in [u,\ell]} a^2/2} \cdot \frac{2 \sqrt{k} (\alpha_2-\alpha_1)}{\sqrt{2 \pi}} \\
    &\geq -\frac{2 \sqrt{k} (\alpha_2-\alpha_1)}{\sqrt{2 \pi}}.
\end{align*}
We finally conclude
\begin{align}
    \Pr[\ipr{\rZ^2,X'}\geq \gamma_i/2] - \Pr[\ipr{\rZ^1,X'}\geq \gamma_i/2] &\geq \nonumber\\
    \Pr[9/10 \leq \|\rX\|_2^2 \leq 4/3] \cdot e^{- 3k\gamma_i^2/2} \cdot \frac{7 \sqrt{k}(\alpha_2-\alpha_1)}{9 \sqrt{2 \pi}} - \Pr[\|\rX\|_2^2 < 9/10] \cdot \frac{2 \sqrt{k} (\alpha_2-\alpha_1)}{\sqrt{2 \pi}}.\label{eq:propdiff}
\end{align}
Using Theorem~\ref{thm:chibound}, we get
\begin{align}
\Pr[9/10 \leq \|\rX\|_2^2 \leq 4/3] \geq 1-2\exp(-k/800), \label{eq:likelyCase}
\end{align}
and
\[
\Pr[\|\rX\|_2^2 < 9/10] \leq 2\exp(-k/800).
\]
For $k$ at least a sufficiently large constant, we have that~\eqref{eq:likelyCase} is at least $1/2$ and we get that~\eqref{eq:propdiff} is at least
\begin{align*}
    e^{- 3k\gamma_i^2/2} \cdot \frac{7 \sqrt{k}(\alpha_2-\alpha_1)}{18 \sqrt{2 \pi}} - e^{-k/800} \cdot \frac{4 \sqrt{k} (\alpha_2-\alpha_1)}{\sqrt{2 \pi}}.
\end{align*}
For the constant $c_\gamma$ sufficiently small, this is positive as $\gamma_i \leq c_\gamma$.
\end{proof}

\begin{proof}[Proof of Claim~\ref{clm:monotone2}]
Similarly to the proof of Claim~\ref{clm:monotone1}, let $w_1,x_1,y_1$ by such that $\alpha_1 = y_1 \ipr{w_1,x_1}$ and let $w_2,x_2,y_2$ be such that $\alpha_2 = y_2 \ipr{w_2,x_2}$. Draw $\rX$, $\rX'$ and $\rZ_1,\rZ_2$ as above. Consider again an arbitrary outcome $X',X$ of $\rX,\rX'$. We have $\ipr{\rZ_j,X'} \leq \gamma_i/2$ if and only if $\alpha_j \ipr{X,X'} + \sqrt{1-\alpha_j^2} \ipr{\rY,X'} \leq \gamma_i/2$. Hence
\begin{align*}
    \Pr[\ipr{\rZ_1,X'} \leq \gamma_i/2] - \Pr[\ipr{\rZ_2,X'} \leq \gamma_i/2] &=\\
\Phi\left(\sqrt{k} \cdot \frac{\gamma_i/2-\alpha_1\ipr{X,X'}}{\sqrt{1-\alpha_1^2}}\right)-\Phi\left(\sqrt{k} \cdot \frac{\gamma_i/2-\alpha_2\ipr{X,X'}}{\sqrt{1-\alpha_2^2}}\right)
\end{align*}
This has the exact same constraints $0 \leq \alpha_1 \leq \alpha_2 \leq \gamma_i$ and exact same form as~\eqref{eq:diffcdf}. The conclusion thus follows from the proof of Claim~\ref{clm:monotone1}.
\end{proof}

\begin{customrmk}{\ref{rem:case_ii_norm_bound}}
   If $\|X\|_2^2 \leq 4/3$, then $\norm{X'}_2^2 < 2$.
\end{customrmk}
\begin{proof}
By the triangle inequality, and using that all coordinates of $X-X'$ are bounded by $(10 \sqrt{k})^{-1}$ in absolute value, we have
\begin{align*}
    \norm{X'}_2^2 &= \norm{X'-X+X}_2^2 \\
    &\leq \left(\|X'-X\|_2 + \|X\|_2\right)^2 \\
    &\leq \left(\sqrt{k (10 \sqrt{k})^{-2}} + \sqrt{4/3}\right)^2 \\
    &= (1/10 + \sqrt{4/3})^2 \\
    &< 2.
\end{align*}
\end{proof}

\begin{customrmk}{\ref{rmk:ipXX'}}
    If $\norm{X}_2^2\ge 9/10$, then $(8/9)\norm{X}_2^2\le \ipr{X,X'}\le (10/9)\norm{X}_2^2$
\end{customrmk}

\begin{proof}
We have:
\begin{align*}
    \ipr{X',X} &= \ipr{X'-X + X,X} \\
    &= \ipr{X'-X,X} + \|X\|_2^2.
\end{align*}
Since each coordinate of $X'-X$ is bounded by $(10 \sqrt{k})^{-1}$ in absolute value, it follows by Cauchy-Schwartz that
\begin{align*}
    |\ipr{X'-X,X}| &\leq \|X'-X\|_2 \cdot \|X\|_2 \\
    &\leq \sqrt{k (10 \sqrt{k})^{-2}} \cdot \frac{\|X\|^2_2}{\|X\|_2} \\
    &\leq \frac{\|X\|_2^2}{10 \sqrt{9/10}} \\
    &\leq \|X\|_2^2/9.
\end{align*}
The conclusion follows.
\end{proof}

\begin{customrmk}{\ref{rmk:concentrationx}}
For any distribution $\cD$ over $\finalX \times \{-1,1\}$, fixed $w \in \finalH$, margin $\gamma$ and any $A \in \R^{k \times d}$, it holds with probability at least $1-\delta$ over $\rS \sim \cD^n$ that
\[
|\Loss_{A\cD}^{\gamma}(w) - \Loss_{A\rS}^{\gamma}(w)| \leq \sqrt{\frac{8\Loss_{A \cD}^{\gamma}(w)\ln(1/\delta)}{n}} + \frac{2 \ln(1/\delta)}{n}.
\]
\end{customrmk}
\begin{proof}
Since $\Loss^\gamma_{A \rS}(w)$ is an average of $n$ i.i.d.\ $0/1$ random variables with mean $\Loss^\gamma_{A \cD}(w)$, we get from Bernstein's inequality that 
\begin{align*}
    \Pr_{\rS \sim \cD}\left[|\Loss_{A\cD}^{\gamma}(w) - \Loss_{A\rS}^{\gamma}(w)| >\sqrt{\frac{8\Loss_{A \cD}^{\gamma}(w)\ln(1/\delta)}{n}} + \frac{2 \ln(1/\delta)}{n} \right] &\leq \\
    \exp\left(- \frac{\tfrac{1}{2} \cdot \left(\sqrt{ 8\Loss_{A \cD}^{\gamma}(w) n \ln(1/\delta) } + 2 \ln(1/\delta)\right)^2}{n \Loss_{A \cD}^\gamma(w) + \tfrac{1}{3} \cdot \left(\sqrt{ 8\Loss_{A \cD}^{\gamma}(w)\ln(1/\delta) n} + 2 \ln(1/\delta)\right) }\right) &\leq\\
    \exp\left(- \frac{\tfrac{1}{2} \cdot \max\left\{8\Loss_{A \cD}^{\gamma}(w) n,  4 \ln(1/\delta)\right\} \ln(2/\delta)}{\tfrac{1}{8} \max\{n \Loss_{A \cD}^\gamma(w), 4\ln(1/\delta)\} + \tfrac{1}{3} \cdot \sqrt{2 \cdot \max\{ 8\Loss_{A \cD}^{\gamma}(w), 4 \ln(1/\delta)\} \cdot \ln(1/\delta)}}\right).
\end{align*}
Using that $\ln(1/\delta) \leq \tfrac{1}{4} \max\{ 8\Loss_{A \cD}^{\gamma}(w), 4 \ln(1/\delta)\}$, this is at most
\[
    \exp\left(- \frac{\tfrac{1}{2} \ln(1/\delta)}{\tfrac{1}{8}  + \tfrac{1}{3} \cdot \sqrt{\tfrac{1}{2} }}\right) \leq \exp(-\ln(1/\delta)) = \delta.
\]
\end{proof}

\end{document}